\documentclass{article}




\usepackage[final,nonatbib]{}


\usepackage{algorithm}

\usepackage[utf8]{inputenc} 
\usepackage[T1]{fontenc}    
\usepackage{hyperref}       
\usepackage{url}            
\usepackage{booktabs}       
\usepackage{amsfonts}       
\usepackage{nicefrac}       
\usepackage{microtype}      

\usepackage{graphicx}
\usepackage{amsthm}
\usepackage{amsmath}
\usepackage{amssymb}
\usepackage{bbm}
\usepackage[utf8]{inputenc}
\usepackage[english]{babel}
\usepackage{subcaption}
\usepackage[noend]{algorithmic}

\usepackage{makecell}

\usepackage[dvipsnames]{xcolor}

\usepackage{tikz}
\usetikzlibrary{decorations.pathreplacing,calc}
\newcommand{\tikzmark}[1]{\tikz[overlay,remember picture] \node (#1) {};}

\newcommand*{\AddNote}[4]{%
    \begin{tikzpicture}[overlay, remember picture]
        \draw [decoration={brace,amplitude=0.5em},decorate,ultra thick,ForestGreen]
            ($(#3)!(#1.north)!($(#3)-(0,1)$)$) --  
            ($(#3)!(#2.south)!($(#3)-(0,1)$)$)
                node [align=center, text width=1.0cm, pos=0.5, anchor=west] {#4};
    \end{tikzpicture}
}%

\usepackage[toc,page]{appendix}

\usepackage{commath}


\usepackage{cleveref}
\crefformat{footnote}{#2\footnotemark[#1]#3}


\newtheorem{theorem}{Theorem}

\newtheorem{lemma}{Lemma}
\newtheorem{proposition}{Proposition}

\newtheorem{definition}{Definition}

\title{Differentially Private Federated Bayesian Optimization with 
Distributed Exploration}

%

\author{%
  Zhongxiang Dai$^{\dagger}$, Bryan Kian Hsiang Low$^{\dagger}$, Patrick Jaillet$^{\S}$\\
  Dept. of Computer Science, National University of Singapore, Republic of Singapore$^{\dagger}$\\
  Dept. of Electrical Engineering and Computer Science, MIT, USA$^{\S}$\\
  \texttt{\{daizhongxiang,lowkh\}@comp.nus.edu.sg$^{\dagger}$,jaillet@mit.edu$^{\S}$}\\
 }

\begin{document}

\maketitle

\begin{abstract}
\emph{Bayesian optimization} (BO) has recently been extended to the \emph{federated learning} (FL) setting by the \emph{federated Thompson sampling} (FTS) algorithm, which has promising applications such as federated hyperparameter tuning.
However, FTS is not equipped with a rigorous privacy guarantee which is an important consideration in FL. Recent works have incorporated \emph{differential privacy} (DP) into the training of deep neural networks through a general framework for adding DP to iterative algorithms. Following this general DP framework, our work here integrates DP into FTS to preserve \emph{user-level privacy}. We also leverage the ability of this general DP framework to handle different parameter vectors, as well as the technique of local modeling for BO, to further improve the utility of our algorithm through \emph{distributed exploration} (DE). The resulting \emph{differentially private FTS with DE} (DP-FTS-DE) algorithm is endowed with theoretical guarantees for both the privacy and utility and is amenable to interesting theoretical insights about the \emph{privacy-utility trade-off}. We also use real-world experiments to show that DP-FTS-DE achieves high utility (competitive performance) with a strong privacy guarantee (small privacy loss) and induces a trade-off between privacy and utility.
\end{abstract}


\section{Introduction}
\vspace{-1mm}
\label{sec:introduction}
\emph{Bayesian optimization} (BO) has become popular for optimizing expensive-to-evaluate black-box functions, such as tuning the hyperparameters of \emph{deep neural networks} (DNNs)~\cite{shahriari2016taking}. Motivated by the growing computational capability of edge devices and concerns over sharing the raw data, BO has recently been extended to the \emph{federated learning} (FL) setting~\cite{mcmahan2016communication} to derive the setting of \emph{federated BO} (FBO)~\cite{dai2020federated}. The FBO setting allows multiple agents with potentially heterogeneous objective functions to collaborate in black-box optimization tasks without requiring them to share their raw data. 
For example, mobile phone users can use FBO to collaborate in optimizing the hyperparameters of their DNN models used 
in a smart keyboard application without sharing their sensitive raw data.
Hospitals can use FBO to collaborate with each other when selecting the patients to perform a medical test~\cite{yu2015predicting} without sharing the sensitive patient information.
An important consideration in  FL has been a 
rigorous protection of the privacy
of the users/agents, i.e., how to guarantee that by participating in a FL system, an agent would not reveal sensitive information about itself~\cite{kairouz2019advances}.
Furthermore, incorporating rigorous privacy preservation into BO has recently attracted increasing attention due to its importance to 
real-world BO applications~\cite{kharkovskii2020private,kusner2015differentially,dai2018privacy,zhou2020local}.
However, the state-of-the-art algorithm in the FBO setting, \emph{federated Thompson sampling} (FTS)~\cite{dai2020federated}, is not equipped with privacy guarantee and thus lacks rigorous protection of the sensitive agent information.

\emph{Differential privacy} (DP)~\cite{dwork2014algorithmic} provides a rigorous privacy guarantee for data release and has become the state-of-the-art method for designing privacy-preserving ML algorithms~\cite{ji2014differential}. 
Recently, DP has been applied to the iterative training of DNNs using \emph{stochastic gradient descent} (DP-SGD)~\cite{abadi2016deep} and the FL algorithm of \emph{federated averaging} (DP-FedAvg)~\cite{mcmahan2018learning}, which have achieved competitive performances (utility) with a strong privacy guarantee. 
Notably, these methods have followed a general framework for adding DP to generic iterative algorithms~\cite{mcmahan2018a} (referred to as \emph{the general DP framework} hereafter), which applies a \emph{subsampled Gaussian mechanism} (Sec.~\ref{sec:background}) in every iteration.
For an iterative algorithm (e.g., FedAvg) applied to a database with multiple \emph{records} (e.g., data from multiple users),
the general DP framework~\cite{mcmahan2018a} can hide the participation of any single record in the algorithm in a principled way. 
For example, DP-FedAvg~\cite{mcmahan2018learning} guarantees (with high probability) that an adversary, even with arbitrary side information, cannot infer whether the data from a particular user has been used by the algorithm, hence preserving \emph{user-level privacy}.
Unfortunately, FTS~\cite{dai2020federated} is not amenable to a straightforward integration of the general DP framework~\cite{mcmahan2018a} (Sec.~\ref{subsec:dp_fts}). So, we modify FTS to be compatible with the general DP framework  and hence introduce the DP-FTS algorithm to preserve  user-level privacy in the FBO setting.
In addition to the theoretical challenge of accounting for the impact of 
the integration of DP in our theoretical analysis, 
we have to ensure that DP-FTS preserves the practical performance advantage (utility) of FTS.
To this end, we leverage the ability of the general DP framework to handle different parameter vectors~\cite{mcmahan2018a}, as well as the method of local modeling for BO, to further improve the practical performance (utility) of DP-FTS.
Note that FTS, as well as DP-FTS, is able to achieve better performance (utility) than standard TS by \emph{accelerating exploration} using the information from the other agents (aggregated by the central server)~\cite{dai2020federated}. That is, an agent using FTS/DP-FTS benefits from needing to perform less exploration \emph{in the early stages}.
To improve the utility of DP-FTS even more, we further accelerate exploration in the early stages using our proposed \emph{distributed exploration} technique which is a combination of local modeling for BO and the ability of the general DP framework to handle different parameter vectors.
Specifically, we divide the entire search space into smaller local \emph{sub-regions} and let every agent explore only one local sub-region \emph{at initialization}.
As a result, compared with the entire search space, every agent can explore the local sub-region more effectively because 
its \emph{Gaussian process} (GP) surrogate (i.e., the surrogate used by BO to model the objective function) can model the objective function more accurately in a smaller local sub-region~\cite{eriksson2019scalable}.
Subsequently, in every BO iteration, the central server aggregates the information (vector) for every sub-region separately: For a sub-region, the aggregation (i.e., weighted average) gives more emphasis (i.e., weights) to the information (vectors) from those agents who are assigned to explore this particular sub-region.
Interestingly, this technique can be seamlessly integrated into the general DP framework due to its ability to 
process different parameter vectors (i.e., one vector for every sub-region) while still preserving the interpretation as a single subsampled Gaussian mechanism~\cite{mcmahan2018a} (Sec.~\ref{subsec:dp_fts_de}).\footnote{By analogy, the vectors for different sub-regions in our algorithm play a similar role to the parameters of different layers of a DNN in DP-FedAvg~\cite{mcmahan2018learning}.}
As a result, the information aggregated by the central server can help the agents explore every sub-region (hence the entire search space) more effectively in the early stages and thus significantly improve the practical convergence (utility), as demonstrated in our experiments (Sec.~\ref{sec:experiments}).
We refer to the resulting DP-FTS algorithm with \emph{distributed exploration} (DE) as DP-FTS-DE. 
Note that DP-FTS is a special case of DP-FTS-DE with only one sub-region (i.e., entire search space). So, we will refer to DP-FTS-DE as our main algorithm in the rest of this paper.

In this paper, we introduce the \emph{differentially private FTS with DE} (DP-FTS-DE) algorithm (Sec.~\ref{sec:dp_fts_de}),
the first algorithm with a rigorous guarantee on the user-level privacy in the FBO setting.
DP-FTS-DE guarantees that an adversary cannot infer whether an agent has participated in the algorithm, hence assuring every agent that its participation will not reveal its sensitive information.\footnote{Following~\cite{mcmahan2018learning}, we assume that the central server is trustworthy and that the clients are untrustworthy.}
We provide theoretical guarantees for both the privacy and utility of DP-FTS-DE, which combine to yield a number of elegant theoretical insights about the \emph{privacy-utility trade-off} (Sec.~\ref{sec:theoretical_analysis}).
Next, 
we 
empirically
demonstrate that DP-FTS-DE delivers an effective performance with a strong privacy guarantee and induces a favorable trade-off between privacy and utility in real-world applications (Sec.~\ref{sec:experiments}).

\vspace{-1mm}
\section{Background}
\label{sec:background}
\vspace{-2mm}

\textbf{Bayesian optimization (BO).} \quad 
BO aims to maximize an objective function $f$ on a domain $\mathcal{X}\subset\mathbb{R}^D$ through sequential queries, i.e., to find $\mathbf{x}^*\in\arg\max_{\mathbf{x}\in\mathcal{X}}f(\mathbf{x})$.\footnote{For simplicity, we assume $\mathcal{X}$ to be discrete, but our theoretical analysis can be extended to compact domains through suitable discretizations~\cite{chowdhury2017kernelized}.} Specifically, in iteration $t\in[T]$ (we use $[N]$ to represent $\{1,\ldots,N\}$ for brevity),
an input $\mathbf{x}_t\in\mathcal{X}$ is selected to be queried to yield an output observation: $y_t\triangleq f(\mathbf{x}_t) + \zeta$ where $\zeta$ is sampled from a Gaussian noise with variance $\sigma^2$: $\zeta \sim \mathcal{N}(0,\sigma^2)$.
To select the $\mathbf{x}_t$'s intelligently, BO uses a \emph{Gaussian process} (GP)~\cite{rasmussen2004gaussian} as a surrogate to model the objective function $f$.
A GP is defined by its mean function $\mu$ and kernel function $k$. We assume w.l.o.g.~that $\mu(\mathbf{x})=0$ and $k(\mathbf{x},\mathbf{x}')\leq1,\forall\mathbf{x},\mathbf{x}'\in\mathcal{X}$.
We mainly focus on the widely used \emph{squared exponential} (SE) kernel in this work.
In iteration $t+1$, 
given the first $t$ input-output pairs, the GP posterior is given by $\mathcal{GP}(\mu_{t}(\cdot),\sigma_t^2(\cdot,\cdot))$ where
\begin{equation}
\begin{split}
    \mu_t(\mathbf{x}) \triangleq \mathbf{k}_{t}(\mathbf{x})^\top(\mathbf{K}_{t}+\lambda\mathbf{I})^{-1}\mathbf{y}_{t}\ ,
    \sigma_t^2(\mathbf{x},\mathbf{x}') \triangleq k(\mathbf{x},\mathbf{x}')-\mathbf{k}_{t}(\mathbf{x})^\top(\mathbf{K}_{t}+\lambda\mathbf{I})^{-1}\mathbf{k}_{t}(\mathbf{x}')
\end{split}
\label{gp_posterior}
\end{equation}
in which $\mathbf{k}_{t}(\mathbf{x})\triangleq (k(\mathbf{x}, \mathbf{x}_{t'}))^{\top}_{t'\in[t]}$, $\mathbf{y}_t\triangleq (y_{t'})^{\top}_{t'\in[t]}$, $\mathbf{K}_{t}\triangleq (k(\mathbf{x}_{t'}, \mathbf{x}_{t''}))_{t',t''\in[t]}$ and 
$\lambda>0$
is a regularization parameter~\cite{chowdhury2017kernelized}.
In iteration $t+1$, the GP posterior~\eqref{gp_posterior} is used to select the next query $\mathbf{x}_{t+1}$. For example, the \emph{Thompson sampling} (TS)~\cite{chowdhury2017kernelized} algorithm firstly samples a function $f_{t+1}$ from the GP posterior~\eqref{gp_posterior} and then chooses $\mathbf{x}_{t+1}=\arg\max_{\mathbf{x}\in\mathcal{X}}f_{t+1}(\mathbf{x})$.
BO algorithms are usually analyzed in terms of \emph{regret}. A hallmark for well-performing BO algorithms is to be asymptotically \emph{no-regret},
which requires the \emph{cumulative regret} $R_T\triangleq \sum^T_{t=1}(f(\mathbf{x}^*)-f(\mathbf{x}_t))$ to grow sub-linearly.

\emph{Random Fourier features} (RFFs)~\cite{rahimi2008random} has been adopted to approximate the kernel function $k$ using $M$-dimensional random features $\boldsymbol{\phi}$: $k(\mathbf{x},\mathbf{x}') \approx \boldsymbol{\phi}(\mathbf{x})^{\top}\boldsymbol{\phi}(\mathbf{x}')$. RFFs offers a high-probability guarantee on the approximation quality: $\sup_{\mathbf{x},\mathbf{x}'\in\mathcal{X}}|k(\mathbf{x},\mathbf{x}') - \boldsymbol{\phi}(\mathbf{x})^{\top}\boldsymbol{\phi}(\mathbf{x}')| \leq \varepsilon$ where $\varepsilon=\mathcal{O}(M^{-1/2})$~\cite{rahimi2008random}.
Of note, RFFs makes it particularly convenient to approximately sample a function from the GP posterior.
Specifically, define $\boldsymbol{\Phi}_t \triangleq (\boldsymbol{\phi}(\mathbf{x}_{t'}))^{\top}_{t'\in[t]}$ (i.e., a $t\times M$-dimensional matrix), $\boldsymbol{\Sigma}_t \triangleq \boldsymbol{\Phi}_t^{\top}\boldsymbol{\Phi}_t+\lambda\mathbf{I}$, and $\boldsymbol{\nu}_t \triangleq \boldsymbol{\Sigma}_t^{-1}\boldsymbol{\Phi}_t^{\top}\mathbf{y}_t$.
To sample a function $\widetilde{f}$ from approximate GP posterior, we only need to sample 
\begin{equation}
\boldsymbol{\omega} \sim \mathcal{N}(\boldsymbol{\nu}_t, \lambda\boldsymbol{\Sigma}_t^{-1})
\label{eq:sample_w}
\end{equation}
and set $\widetilde{f}(\mathbf{x})=\boldsymbol{\phi}(\mathbf{x})^{\top}\boldsymbol{\omega},\forall \mathbf{x}\in\mathcal{X}$.
RFFs has been adopted by FTS~\cite{dai2020federated} since it allows 
avoiding the sharing of raw data and improves the communication efficiency~\cite{dai2020federated}.

\textbf{Federated Bayesian Optimization (FBO).} \quad 
FBO involves 
$N$ agents $\mathcal{A}_1,\ldots,\mathcal{A}_N$.
Every agent $\mathcal{A}_n$
attempts to maximize its objective function $f^{n}: \mathcal{X} \rightarrow \mathbb{R}$, i.e., to find $\mathbf{x}^{n,*}\in\arg\max_{\mathbf{x}\in\mathcal{X}}f^n(\mathbf{x})$, by querying $\mathbf{x}^n_t$ and observing $y^n_t,\forall t\in[T]$.
Without loss of generality, our theoretical analyses mainly focus on the perspective of agent $\mathcal{A}_1$.
That is, we derive an upper bound on the cumulative regret of $\mathcal{A}_1$: $R^1_T\triangleq \sum^T_{t=1}(f^1(\mathbf{x}^{1,*})-f^1(\mathbf{x}^1_t))$ in Sec.~\ref{sec:theoretical_analysis}.
We characterize the similarity between $\mathcal{A}_1$ and $\mathcal{A}_n$ by $d_n\triangleq\max_{\mathbf{x}\in\mathcal{X}}|f^1(\mathbf{x})-f^n(\mathbf{x})|$ such that $d_1=0$ and a smaller $d_n$ indicates a larger degree of similarity between $\mathcal{A}_1$ and $\mathcal{A}_n$.
Following the work of~\cite{dai2020federated}, we assume that all participating agents share the same set of random features $\boldsymbol{\phi}(\mathbf{x}),\forall \mathbf{x}\in\mathcal{X}$.
In our theoretical analysis, we assume that all objective functions have a bounded norm induced by the \emph{reproducing kernel Hilbert space} (RKHS) associated with the kernel $k$, i.e., $\norm{f^n}_k \leq B,\forall n\in[N]$.
The work of~\cite{dai2020federated} has introduced the FTS algorithm for the FBO setting. 
In iteration $t+1$ of FTS, every agent $\mathcal{A}_n$ ($2\leq n \leq N$) samples a vector $\boldsymbol{\omega}_{n,t}$ from its GP posterior~\eqref{eq:sample_w} and sends it to $\mathcal{A}_1$.
Next, with probability $p_t\in(0,1]$, $\mathcal{A}_1$ chooses the next query using a function $f^1_{t+1}$ sampled from its own GP posterior: $\mathbf{x}_{t+1}=\arg\max_{\mathbf{x}\in\mathcal{X}}f^1_{t+1}(\mathbf{x})$; with probability $1-p_t$, $\mathcal{A}_1$ firstly randomly samples an agent $\mathcal{A}_n$ ($2\leq n \leq N$) and then chooses $\mathbf{x}_{t+1}=\arg\max_{\mathbf{x}\in\mathcal{X}}\boldsymbol{\phi}(\mathbf{x})^{\top}\boldsymbol{\omega}_{n,t}$.

\textbf{Differential Privacy (DP).} \quad
DP provides a rigorous framework for privacy-preserving data release~\cite{dwork2006calibrating}.
Consistent with that of~\cite{mcmahan2018learning}, we define two datasets as \emph{adjacent} if they differ by the data of a single user/agent, which leads to the definition of user-level DP:
\begin{definition}
A randomized mechanism $\mathcal{M}:\mathcal{D} \rightarrow \mathcal{R}$ 
satisfies $(\epsilon,\delta)$-DP if for any two adjacent datasets $D_1$ and $D_2$ and any subset of outputs $\mathcal{S}\subset \mathcal{R}$, $\mathbb{P}(\mathcal{M}(D_1) \in \mathcal{S}) \leq e^{\epsilon}\  \mathbb{P}(\mathcal{M}(D_2) \in \mathcal{S}) + \delta\ .$
\vspace{-1mm}
\end{definition}
Here, $\epsilon$ and $\delta$ are DP parameters such that the smaller they are, the better the privacy guarantee.
Intuitively, user-level DP ensures that adding or removing any single user/agent 
has an imperceptible impact on the output of the algorithm.
DP-FedAvg~\cite{mcmahan2018learning} has added user-level DP into the FL setting by adopting a general DP framework~\cite{mcmahan2018a}.
In DP-FedAvg, the central server applies a \emph{subsampled Gaussian mechanism} to the vectors (gradients) received from multiple agents in every iteration $t$:
\vspace{-1mm}
\begin{enumerate}
\itemsep-0.8mm
    \item Select a subset of agents by choosing every agent with a fixed probability $q$,
    \item Clip the vector $\boldsymbol{\omega}_{n,t}$ from every selected agent $n$ so that its $L_2$ norm is upper-bounded by $S$, 
    \item Add Gaussian noise (variance proportional to $S^2$) to the weighted average of clipped vectors.
\end{enumerate} 
\vspace{-1mm}
The central server then broadcasts the vector produced by step $3$ to all agents.
As a result of the general DP framework, they are able to not only provide a rigorous privacy guarantee, but also use the \emph{moments accountant} method~\cite{abadi2016deep} to calculate the \emph{privacy loss}.\footnote{\label{ftnote:privacy:loss}Given a 
$\delta$, the privacy loss is defined as an upper bound on 
$\epsilon$ calculated by the moments accountant method.}
More recently, the work of~\cite{mcmahan2018a} has formalized the methods used by~\cite{abadi2016deep} and~\cite{mcmahan2018learning} as a general DP framework that is applicable to generic iterative algorithms. Notably, the general DP framework~\cite{mcmahan2018a} can naturally process different parameter vectors (e.g., parameters of different layers of a DNN), which is an important property that allows us to integrate distributed exploration (Sec.~\ref{subsec:de}) into our algorithm.


\vspace{-1mm}
\section{Differentially Private FTS with Distributed Exploration}
\label{sec:dp_fts_de}
\vspace{-1mm}
Here we firstly introduce how we modify FTS to integrate DP to derive the DP-FTS algorithm (Sec.~\ref{subsec:dp_fts}).
Then, we describe distributed exploration which can be seamlessly integrated into DP-FTS to further improve utility (Sec.~\ref{subsec:de}). Lastly, we present the complete DP-FTS-DE algorithm (Sec.~\ref{subsec:dp_fts_de}).

\vspace{-1mm}
\subsection{Differentially Private Federated Thompson Sampling (DP-FTS)}
\label{subsec:dp_fts}
\vspace{-1mm}


The original FTS algorithm~\cite{dai2020federated} is not amenable to a straightforward integration of the general DP framework. This is because FTS~\cite{dai2020federated} requires an agent to receive all vectors $\boldsymbol{\omega}_{n,t}$'s from the other agents (Sec.~\ref{sec:background}), so the transformations to the vectors required by the general DP framework (e.g., subsampling and weighted averaging of the vectors) cannot be easily incorporated.
Therefore, we modify FTS by (a) adding a central server for performing the privacy-preserving transformations, and (b) passing a single aggregated vector (instead of one vector from every agent) to the agents.
Fig.~\ref{fig:setting}a illustrates our DP-FTS algorithm which is obtained by integrating the general DP framework into modified FTS.
Every iteration $t$ of DP-FTS consists of the following steps:

\textbf{\textcircled{{\scriptsize 1}}, \textcircled{{\scriptsize 2}} (by Agents):}
Every agent $\mathcal{A}_n$ samples a vector $\boldsymbol{\omega}_{n,t}$~\eqref{eq:sample_w} using its own current history of $t$ input-output pairs (step \textcircled{{\scriptsize 1}}) and sends $\boldsymbol{\omega}_{n,t}$ to the central server (step \textcircled{{\scriptsize 2}}). 

\textbf{\textcircled{{\scriptsize 3}}, \textcircled{{\scriptsize 4}} (by Central Server):}
Next, the central server processes the $N$ received vectors $\boldsymbol{\omega}_{n,t}$ using a subsampled Gaussian mechanism (step \textcircled{{\scriptsize 3}}): 
It (a) selects a subset of agents $\mathcal{S}_t$ by choosing each agent with probability $q$, 
(b) clips the vector $\boldsymbol{\omega}_{n,t}$ of every selected agent s.t.~its $L_2$ norm is upper-bounded by $S$, and 
(c) calculates a \emph{weighted average} of the clipped vectors using weights $\{\varphi_{n},\forall n\in[N]\}$, and adds to it a Gaussian noise.
The final vector $\boldsymbol{\omega}_t$ is then broadcast to all agents (step \textcircled{{\scriptsize 4}}).

\textbf{\textcircled{{\scriptsize 5}} (by Agents):}
After an agent $\mathcal{A}_n$ receives the vector $\boldsymbol{\omega}_t$ from the central server, it can choose the next query $\mathbf{x}^n_{t+1}$ (step \textcircled{{\scriptsize 5}}): With probability $p_{t+1} \in (0,1]$, $\mathcal{A}_n$ chooses $\mathbf{x}^n_{t+1}$ using standard TS by firstly sampling a function $f^n_{t+1}$ from its own GP posterior of $\mathcal{GP}(\mu^{n}_{t}(\cdot),\beta_{t+1}^2\sigma^n_t(\cdot,\cdot)^2)$~\eqref{gp_posterior} where $\beta_{t} \triangleq B+\sigma\sqrt{2(\gamma_{t-1} + 1 + \log(4/\delta)}$,\footnote{\label{footnote:gamma:delta}$\gamma_{t-1}$ is the max information about the objective function from any $t-1$ queries and $\delta\in(0,1)$ (Theorem~\ref{theorem:dp_fts_de}).} and then choosing $\mathbf{x}^n_{t+1}=\arg\max_{\mathbf{x}\in\mathcal{X}}f^n_{t+1}(\mathbf{x})$; with probability $1-p_{t+1}$, $\mathcal{A}_n$ chooses $\mathbf{x}^n_{t+1}$ using $\boldsymbol{\omega}_t$ received from the central server: $\mathbf{x}^n_{t+1}=\arg\max_{\mathbf{x}\in\mathcal{X}}\boldsymbol{\phi}(\mathbf{x})^{\top}\boldsymbol{\omega}_t$.
Consistent with that of~\cite{dai2020federated}, $(p_t)_{t\in\mathbb{Z}^{+}}$ is chosen as a monotonically increasing sequence such that $p_t \in (0,1],\forall t$ and $p_t\rightarrow 1$ as $t\rightarrow \infty$.
This choice of the sequence of $(p_t)_{t\in\mathbb{Z}^{+}}$ ensures that in the early stage (i.e., when $1-p_t$ is large), an agent can leverage the information from the other agents (via $\boldsymbol{\omega}_t$) to improve its convergence by accelerating its exploration. 

After choosing $\mathbf{x}^n_{t+1}$ and observing $y^n_{t+1}$, $\mathcal{A}_n$ adds $(\mathbf{x}^n_{t+1},y^n_{t+1})$ to its history and samples a new vector $\boldsymbol{\omega}_{n,t+1}$ (step \textcircled{{\scriptsize 1}}). 
Next, $\mathcal{A}_n$ sends $\boldsymbol{\omega}_{n,t+1}$ to the central server (step \textcircled{{\scriptsize 2}}), and the algorithm is repeated.
The detailed algorithm will be presented in Sec.~\ref{subsec:dp_fts_de} since DP-FTS is equivalent to the DP-FTS-DE algorithm with $P=1$ sub-region.

\vspace{-1mm}
\subsection{Distributed Exploration (DE)}
\label{subsec:de}
\vspace{-1mm}
To accelerate the practical convergence (utility) of DP-FTS,
we introduce 
DE to further accelerate the exploration in the early stages (Sec.~\ref{sec:introduction}).
At the beginning of BO, a small number of initial points are usually selected from the entire domain using an exploration method (e.g., random search) to warm-start BO. We use DE to allow every agent to explore a smaller local sub-region 
\emph{at initialization}, which is easier to model for the GP surrogate~\cite{eriksson2019scalable}, and leverage the ability of the general DP framework to handle different parameter vectors~\cite{mcmahan2018a} to integrate DE into DP-FTS in a seamless way.

\begin{figure}
     \centering
     \begin{tabular}{ccc}
         \hspace{-6mm}
         \includegraphics[width=0.33\linewidth]{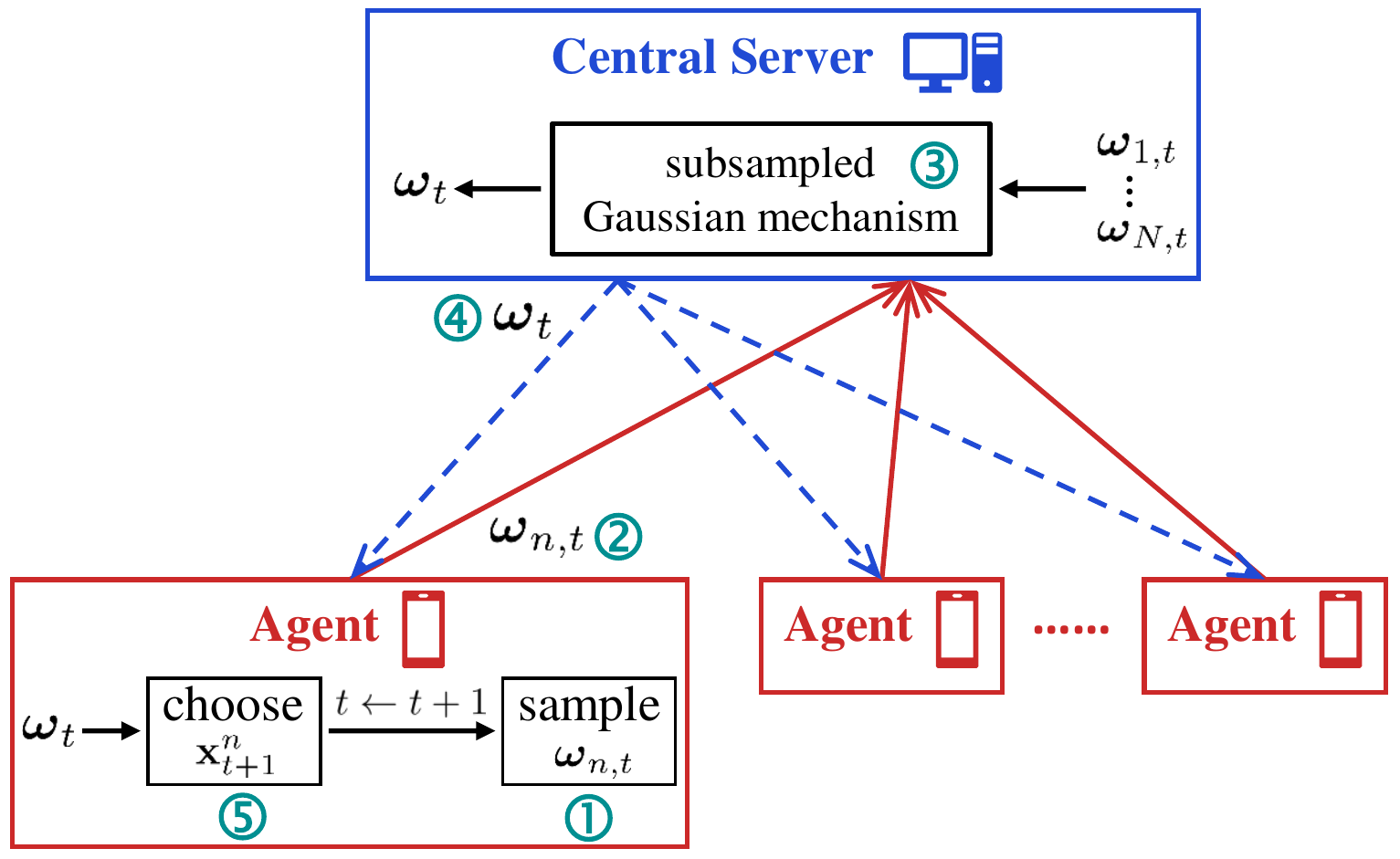}& \hspace{-3mm} 
         \includegraphics[width=0.29\linewidth]{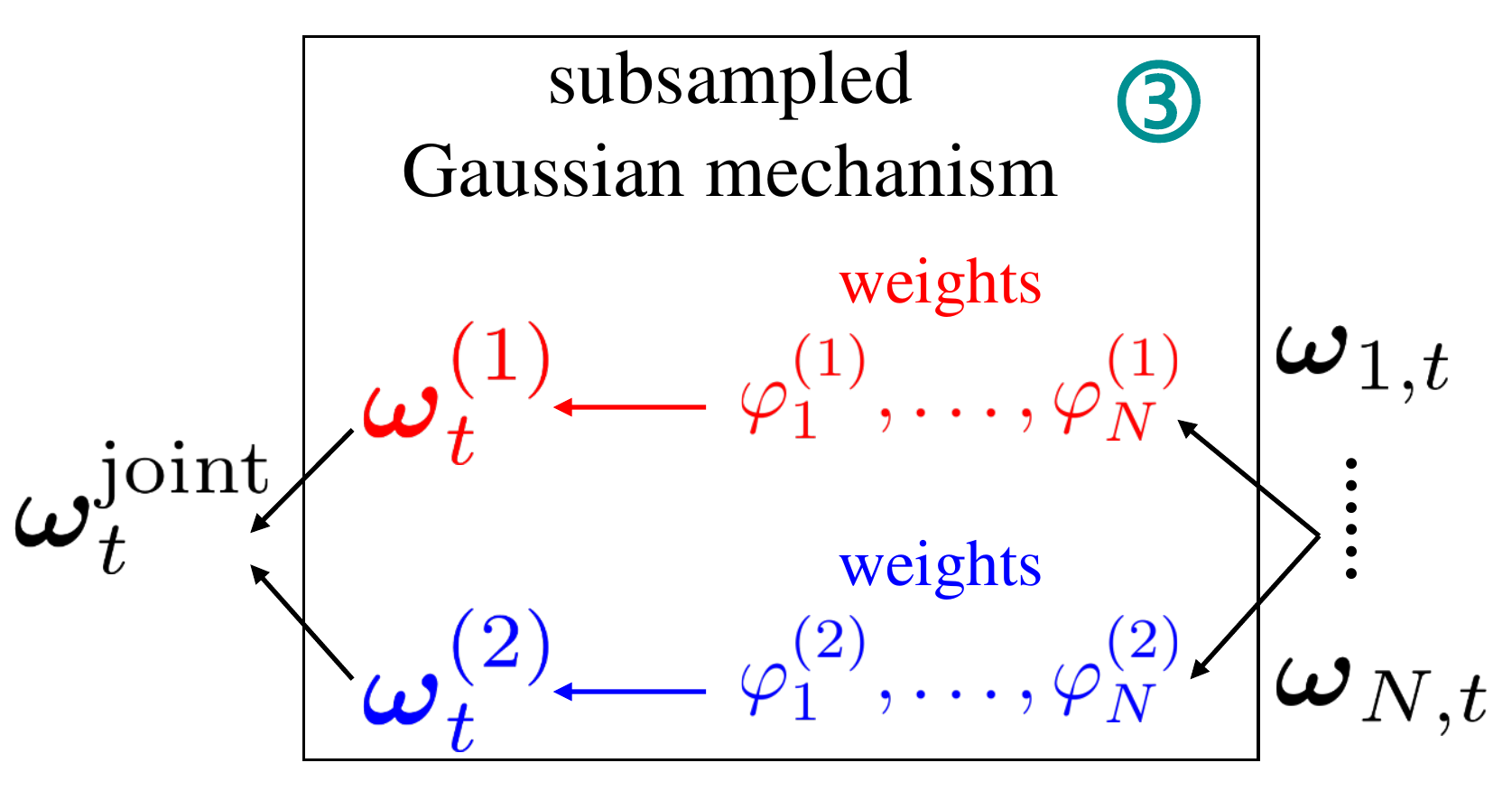}& \hspace{-4mm} 
         \includegraphics[width=0.24\linewidth]{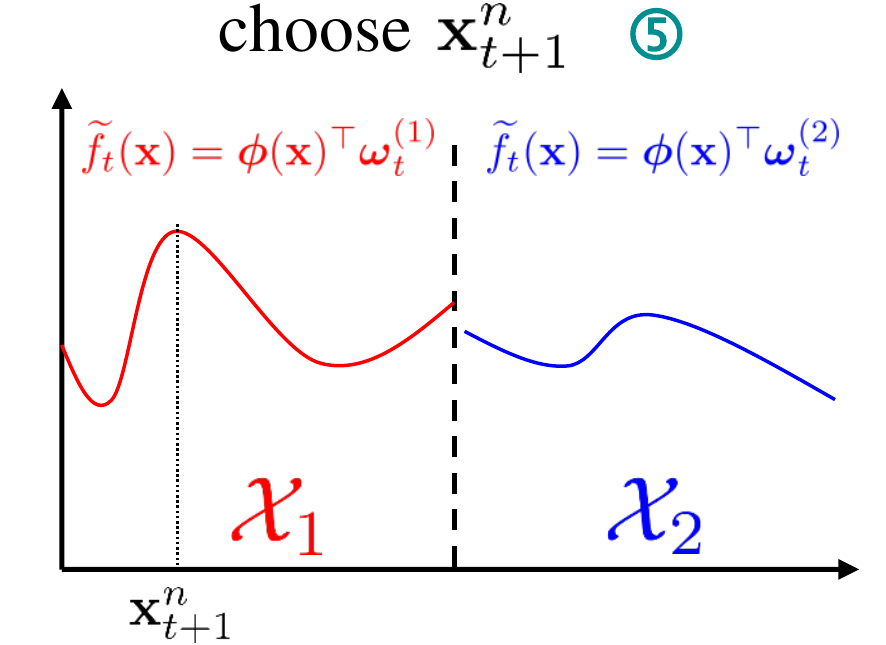}\\
         {(a)} & {(b)} & {(c)}
     \end{tabular}
\vspace{-2mm}
     \caption{
     (a) DP-FTS algorithm (without distributed exploration).
     (b-c) Replacing steps \textcircled{{\scriptsize 3}} and \textcircled{{\scriptsize 5}} in (a)
    with that in (b) and (c) respectively to derive the DP-FTS-DE algorithm ($P=2$).
     }
     \label{fig:setting}
\vspace{-4mm}
\end{figure}

Specifically, we partition the input domain $\mathcal{X}$ into $P\geq1$ disjoint sub-regions: $\mathcal{X}_1,\ldots,\mathcal{X}_P$ such that $\cup_{i=1,\ldots,P}\mathcal{X}_i=\mathcal{X}$ and $\mathcal{X}_i \cap \mathcal{X}_j=\emptyset,\forall i\ne j$.
At initialization, we assign every agent $\mathcal{A}_n$ to explore (i.e., choose the initial points randomly from) a particular sub-region $\mathcal{X}_{i_n}$.\footnote{For simplicity, we choose the sub-regions to be hyper-rectangles with equal volumes and assign an approximately equal number of agents ($\approx N/P$) to explore every sub-region.}
Note that if an agent $\mathcal{A}_n$ is assigned to explore a sub-region $\mathcal{X}_{i_n}$ (instead of exploring the entire domain), its vector $\boldsymbol{\omega}_{n,t}$~\eqref{eq:sample_w} sent to the central server is more informative about its objective function \emph{in this sub-region} $\mathcal{X}_{i_n}$.\footnote{Because its GP surrogate can model the objective function in this local sub-region more accurately~\cite{eriksson2019scalable}.}
As a result, for a sub-region $\mathcal{X}_i$, the vectors from those agents exploring $\mathcal{X}_i$ contain information that can help better explore $\mathcal{X}_i$.
So, we let the central server construct a separate vector $\boldsymbol{\omega}^{(i)}_t$ for every sub-region $\mathcal{X}_i$
and when constructing $\boldsymbol{\omega}^{(i)}_t$, give
more weights to the vectors from those agents exploring $\mathcal{X}_i$ because they are more informative about $\mathcal{X}_i$.
Consequently, the central server needs to construct $P$ different vectors $\{\boldsymbol{\omega}^{(i)}_t,\forall i\in[P]\}$ with each $\boldsymbol{\omega}^{(i)}_t$ using a separate set of weights $\{\varphi^{(i)}_n,\forall n\in[N]\}$.
Interestingly, from the perspective of the general DP framework~\cite{mcmahan2018a}, the different vectors $\{\boldsymbol{\omega}^{(i)}_t,\forall i\in[P]\}$ can be interpreted as analogous to the parameters of different layers of a DNN and can thus be naturally handled by the general DP framework.
After receiving the $P$ vectors from the central server, 
every agent uses $\boldsymbol{\omega}^{(i)}_t$ to reconstruct the sampled function in the sub-region $\mathcal{X}_i$: $\widetilde{f}_t(\mathbf{x})=\boldsymbol{\phi}(\mathbf{x})^{\top}\boldsymbol{\omega}^{(i)}_t,\forall \mathbf{x}\in\mathcal{X}_i$, and then (with probability $1-p_t$) chooses the next query by maximizing the sampled functions from all sub-regions (Fig.~\ref{fig:setting}c); see more details in Sec.~\ref{subsec:dp_fts_de}.

After initialization, every agent is allowed to query any input in the entire domain $\mathcal{X}$ regardless of the sub-region it is assigned to.
So, as $t$ becomes larger, every agent is likely to have explored (and become informative about) more sub-regions in addition to the one it is assigned to.
In this regard, for every sub-region $\mathcal{X}_i$, we make the set of weights $\{\varphi^{(i)}_n,\forall n\in[N]\}$ \emph{adaptive} such that they (a) give more weights to those agents exploring $\mathcal{X}_i$ when $t$ is small and (b) gradually become uniform among all agents as $t$ becomes large. 
We adopt the widely used softmax weighting with temperature $\mathcal{T}$, which has been shown to provide well-calibrated uncertainty for weighted ensemble of GP experts~\cite{cohen2020healing}.
Concretely, 
we let
$\varphi^{(i)}_n=\frac{\exp(( a \mathbb{I}^{(i)}_n + 1)/\mathcal{T})}{\sum^N_{n=1} \exp(( a \mathbb{I}^{(i)}_n + 1)/\mathcal{T})}$,\footnote{ $\mathbb{I}^{(i)}_n$ is an indicator variable that equals $1$ if agent $n$ is assigned to explore $\mathcal{X}_i$ and equals $0$ otherwise. $a>0$ is a constant and we set $a=15$ in all our experiments.
}
and gradually increase the temperature $\mathcal{T}$ from $1$ to $+\infty$ (more details in App.~\ref{app:experiments}).
The dependence of the weights on $t$ only requires minimal modifications to the algorithm and the theoretical results. So, we drop this dependence for simplicity.

\vspace{-1mm}
\subsection{DP-FTS-DE Algorithm}
\label{subsec:dp_fts_de}
\vspace{-1mm}
Our complete DP-FTS-DE algorithm after integrating DE (Sec.~\ref{subsec:de}) into DP-FTS (Sec.~\ref{subsec:dp_fts}) is presented in Algo.~\ref{alg:DP-FTS-DE} (central server's role) and Algo.~\ref{alg:BO} (agent's role), with the steps in circle corresponding to those in Fig.~\ref{fig:setting}a.
DP-FTS-DE differs from DP-FTS in two major aspects:
Firstly, at initialization ($t=0$), every agent only explores a local sub-region instead of the entire domain (line $2$ of Algo.~\ref{alg:BO}). Secondly, instead of a single vector $\boldsymbol{\omega}_t$, the central server produces and broadcasts 
$P$ vectors $\{\boldsymbol{\omega}^{(i)}_t,\forall i\in[P]\}$,
each
corresponding to a different sub-region $\mathcal{X}_i$ and using a different set of weights.
Applying different transformations to different vectors (e.g., parameters of different DNN layers) can be naturally incorporated into the general DP framework~\cite{mcmahan2018a}.
Different transformations performed by our central server to produce $P$ vectors can be interpreted as \emph{a single subsampled Gaussian mechanism} producing a single joint vector $\boldsymbol{\omega}^{\text{joint}}_t\triangleq (\boldsymbol{\omega}^{(i)}_t)_{i\in[P]}$ (Fig.~\ref{fig:setting}b).
Next, we present these transformations performed by the central server (lines $5$-$12$ of Algo.~\ref{alg:DP-FTS-DE}) from this perspective.

\textbf{Subsampling:} 
To begin with, after receiving the vectors $\boldsymbol{\omega}_{n,t}$'s from the agents (lines $3$-$4$ of Algo.~\ref{alg:DP-FTS-DE}), the central server firstly chooses a random subset of agents $\mathcal{S}_t$ by selecting each agent with probability $q$ (line 6).
Next, for every selected agent $n\in\mathcal{S}_t$, the central server constructs a $P\times M$-dimensional joint vector: $\boldsymbol{\omega}^{\text{joint}}_{n,t}\triangleq (N\varphi^{(i)}_n\boldsymbol{\omega}_{n,t})_{i\in[P]}$.

\vspace{-2mm}
\begin{algorithm}[H]
\begin{algorithmic}[1]
    \STATE $\boldsymbol{\omega}^{\text{joint}}_{-1}=\mathbf{0}$
	\FOR{iterations $t=0,1,2,\ldots, T$}
    	\FOR{agents $n=1,2,\ldots, N$ \textbf{in parallel}}
    	    \STATE $\boldsymbol{\omega}_{n,t}\leftarrow$ \textbf{BO-Agent-}$\boldsymbol{\mathcal{A}_n}$($t$, $\boldsymbol{\omega}^{\text{joint}}_{t-1}$) \qquad\quad\quad\quad\, \tikzmark{right} \quad\,{\color{ForestGreen}\textbf{\textcircled{{\scriptsize 2}}}}
    	\ENDFOR
    	\STATE $\boldsymbol{\omega}^{(i)}_t=\mathbf{0},\forall i\in[P]$ \tikzmark{top}
    	\STATE Choose a random subset  $\mathcal{S}_t\subset[N]$ of agents 
    	\FOR{sub-regions $i=1,2,\ldots, P$}
	    \FOR{agents $n\in\mathcal{S}_t$}
            \STATE $\widehat{\boldsymbol{\omega}}_{n,t}=\boldsymbol{\omega}_{n,t}\Big/\max\big(1, \frac{\norm{\boldsymbol{\omega}_{n,t}}_2}{S/\sqrt{P}}\big)$
	        \STATE  $\boldsymbol{\omega}^{(i)}_t \mathrel{+}= (\varphi^{(i)}_n/q)\ \widehat{\boldsymbol{\omega}}_{n,t}$
    	\ENDFOR
    	\STATE $\boldsymbol{\omega}^{(i)}_t \mathrel{+}=\mathcal{N}\left(\mathbf{0},(z\varphi_{\max}S/q)^2\mathbf{I}\right)$ \tikzmark{bottom}
    	\ENDFOR
    	\STATE Broadcast 
	$\boldsymbol{\omega}^{\text{joint}}_t=(\boldsymbol{\omega}^{(i)}_t)_{i\in[P]}$
    	to all agents \quad\quad\quad\,\, {\color{ForestGreen}\textbf{\textcircled{{\scriptsize 4}}}}
        \STATE Update the privacy loss using the moments accountant method~\cref{ftnote:privacy:loss}
	\ENDFOR
\end{algorithmic}
\AddNote{top}{bottom}{right}{\textbf{\textcircled{{\scriptsize 3}}}}
\caption{DP-FTS-DE (central server)}
\label{alg:DP-FTS-DE}
\vspace{-3.5mm}
\end{algorithm}
\vspace{-7.5mm}
\begin{algorithm}[H]
\begin{algorithmic}[1]
    \IF{$t=0$}
        \STATE Randomly select and query $N_{\text{init}}$ initial points from sub-region $\mathcal{X}_{i_n}$ 
    \ELSE
        \STATE Sample $r$ from the uniform distribution over $[0,1]$: $r\sim U(0,1)$ \qquad\quad \tikzmark{right} \tikzmark{top}
        \IF{$r\leq p_t$} 
        \STATE \quad $\mathbf{x}^n_t=\arg\max_{\mathbf{x}\in\mathcal{X}}f^n_t(\mathbf{x})$, where $f^n_t \sim \mathcal{GP}(\mu^n_{t}(\cdot),\beta_{t+1}^2\sigma^n_t(\cdot,\cdot)^2)$
        \ELSE
        \STATE \quad $\mathbf{x}^n_t=\arg\max_{\mathbf{x}\in\mathcal{X}}\boldsymbol{\phi}(\mathbf{x})^{\top}\boldsymbol{\omega}^{(i^{[\mathbf{x}]})}_{t-1}$.\footnotemark \tikzmark{bottom}
        \ENDIF
    \ENDIF
    \STATE Query $\mathbf{x}^n_t$ to observe $y^n_t$
    \STATE Sample $\boldsymbol{\omega}_{n,t}$ and send it to central server \qquad\quad\quad\quad\quad\quad\quad\quad\quad\quad\qquad {\color{ForestGreen}\textbf{\textcircled{{\scriptsize 1}}}, \textbf{\textcircled{{\scriptsize 2}}}}
\end{algorithmic}
\AddNote{top}{bottom}{right}{\textbf{\textcircled{{\scriptsize 5}}}}
\caption{\textbf{BO-Agent-}$\boldsymbol{\mathcal{A}_n}$($t$, $\boldsymbol{\omega}^{\text{joint}}_{t-1}=(\boldsymbol{\omega}^{(i)}_{t-1})_{i\in[P]}$)}
\label{alg:BO}
\vspace{-3.8mm}
\end{algorithm}
\vspace{-5mm}
\footnotetext{$i^{[\mathbf{x}]}$ represents the sub-region $\mathbf{x}$ is assigned to.}

\textbf{Clipping:}
Next, clip every selected vector $\boldsymbol{\omega}_{n,t}$ 
to obtain $\widehat{\boldsymbol{\omega}}_{n,t}$ 
such that $||\widehat{\boldsymbol{\omega}}_{n,t}||_2 \leq S/\sqrt{P}$.
This is equivalent to clipping 
$\boldsymbol{\omega}^{\text{joint}}_{n,t}$
to obtain: $\widehat{\boldsymbol{\omega}}^{\text{joint}}_{n,t}\triangleq (N\varphi^{(i)}_n\widehat{\boldsymbol{\omega}}_{n,t})_{i\in[P]}$, whose $L_2$ norm is bounded by
$||\widehat{\boldsymbol{\omega}}^{\text{joint}}_{n,t}||_2 \leq ({N^2\varphi_{\max}^2\sum^P_{i=1}\norm{\widehat{\boldsymbol{\omega}}_{n,t}}^2_2})^{1/2}\leq N\varphi_{\max}S$ 
where $\varphi_{\max}\triangleq\max_{i\in[P],n\in[N]}\varphi^{(i)}_{n}$.

\textbf{Weighted Average:}
Next, calculate a weighted average of the clipped joint vectors by giving equal weights\footnote{
This weight is only used 
to aid interpretation 
and is different from the weights $\varphi^{(i)}_n$'s used in our algorithm.} to all agents: $\boldsymbol{\omega}^{\text{joint}}_{t}=(qN)^{-1}\sum_{n\in\mathcal{S}_t} \widehat{\boldsymbol{\omega}}^{\text{joint}}_{n,t}$.\footnote{The summation is divided by $qN$ (i.e., expected number of agents selected) to make it unbiased~\cite{mcmahan2018learning}.}
Note that $\boldsymbol{\omega}^{\text{joint}}_{t}$ results from the concatenation of the vectors from all sub-regions: $\boldsymbol{\omega}^{\text{joint}}_{t}= (\boldsymbol{\omega}^{(i)}_t)_{i\in[P]}$ where $\boldsymbol{\omega}^{(i)}_t=({qN})^{-1}\sum_{n\in\mathcal{S}_t} N\varphi^{(i)}_n\widehat{\boldsymbol{\omega}}_{n,t}={q}^{-1}\sum_{n\in\mathcal{S}_t}\varphi^{(i)}_n\widehat{\boldsymbol{\omega}}_{n,t}$ (line $10$ of Algo.~\ref{alg:DP-FTS-DE}).
\\
\textbf{Gaussian Noise:}
Finally, add to each element of $\boldsymbol{\omega}^{\text{joint}}_{t}=(\boldsymbol{\omega}^{(i)}_t)_{i\in[P]}$ 
a zero-mean Gaussian noise with a standard deviation of $z({N\varphi_{\max}S})/({qN})=z\varphi_{\max}S/q$ (line $11$).

Next, 
the output $\boldsymbol{\omega}^{\text{joint}}_t=(\boldsymbol{\omega}^{(i)}_t)_{i\in[P]}$ 
is broadcast to all agents.
After an agent $\mathcal{A}_n$ receives $\boldsymbol{\omega}^{\text{joint}}_t$, 
with probability $p_{t+1}$, $\mathcal{A}_n$ chooses the next query $\mathbf{x}^n_{t+1}$ by maximizing a function $f^n_{t+1}$ sampled from its own GP posterior $\mathcal{GP}(\mu^n_{t}(\cdot),\beta_{t+1}^2\sigma^n_t(\cdot,\cdot)^2)$~\eqref{gp_posterior} (line 6 of Algo.~\ref{alg:BO}, $\beta_{t} \triangleq B+\sigma\sqrt{2(\gamma_{t-1} + 1 + \log(4/\delta)}$\cref{footnote:gamma:delta}); with probability $1-p_{t+1}$, $\mathcal{A}_n$ uses $\boldsymbol{\omega}^{\text{joint}}_t=[\boldsymbol{\omega}^{(i)}_t]_{i\in[P]}$ received from the central server to choose $\mathbf{x}^n_{t+1}$ by maximizing the reconstructed functions for all sub-regions (line 8 of Algo.~\ref{alg:BO}), as illustrated in Fig.~\ref{fig:setting}c.
Finally, it queries $\mathbf{x}^n_{t+1}$ to observe $y^n_{t+1}$, samples a new vector $\boldsymbol{\omega}_{n,t+1}$ and sends it to the central server (line 10 of Algo.~\ref{alg:BO}), and the algorithm is repeated.

\vspace{-1.8mm}
\section{Theoretical Analysis}
\label{sec:theoretical_analysis}
\vspace{-1.8mm}
In this section, 
we present theoretical guarantees on both the 
privacy and utility
of our DP-FTS-DE, which combine to yield interesting insights about the \emph{privacy-utility trade-off}.


\begin{proposition}[Privacy Guarantee]
\label{proposition:dp}
There exist constants $c_1$ and $c_2$ such that 
for fixed $q$ and $T$ and any $\epsilon<c_1q^2T, \delta>0$, DP-FTS-DE (Algo.~\ref{alg:DP-FTS-DE}) is $(\epsilon,\delta)$-DP if 
$z\geq c_2 q\sqrt{T \log(1/\delta)}/\epsilon$.
\vspace{-1mm}
\end{proposition}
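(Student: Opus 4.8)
The plan is to reduce the claim to the moments-accountant privacy bound for the subsampled Gaussian mechanism~\cite{abadi2016deep,mcmahan2018a}, whose statement already has exactly the form asserted here, with the noise multiplier playing the role of $z$. The first step is to identify, for each iteration $t\in\{0,\ldots,T\}$, the map from the database of the $N$ agents' records to the broadcast vector $\boldsymbol{\omega}^{\text{joint}}_t$ as a single subsampled Gaussian mechanism: as described in Sec.~\ref{subsec:dp_fts_de}, the central server (i) includes each agent independently with probability $q$, (ii) forms for each selected agent $n$ the clipped joint contribution $\widehat{\boldsymbol{\omega}}^{\text{joint}}_{n,t}=(N\varphi^{(i)}_n\widehat{\boldsymbol{\omega}}_{n,t})_{i\in[P]}$, (iii) averages these over the expected number $qN$ of selected agents, and (iv) adds i.i.d.\ Gaussian noise to every coordinate.

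The second step is to pin down the noise multiplier. The clipping in line~10 of Algo.~\ref{alg:DP-FTS-DE} ensures $\norm{\widehat{\boldsymbol{\omega}}_{n,t}}_2\le S/\sqrt{P}$, hence $\norm{\widehat{\boldsymbol{\omega}}^{\text{joint}}_{n,t}}_2\le(N^2\varphi_{\max}^2\sum_{i=1}^P\norm{\widehat{\boldsymbol{\omega}}_{n,t}}_2^2)^{1/2}\le N\varphi_{\max}S$, so each agent contributes a vector of $L_2$-norm at most $N\varphi_{\max}S/(qN)=\varphi_{\max}S/q$ to the pre-noise average, and (under add/remove adjacency) the $L_2$-sensitivity of this averaged query is at most $\varphi_{\max}S/q$. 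The Gaussian noise added in line~11 has per-coordinate standard deviation $z\varphi_{\max}S/q$, i.e.\ exactly $z$ times this sensitivity bound; hence each round is a subsampled Gaussian mechanism with sampling rate $q$ and noise multiplier $z$, uniformly in $t$ and independently of $P$ and $N$ — the $P$ sub-regions are handled jointly precisely because the sensitivity is controlled on the concatenated vector.

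The third step is adaptive composition. The only quantities observable to an adversary are the broadcast vectors $\boldsymbol{\omega}^{\text{joint}}_0,\ldots,\boldsymbol{\omega}^{\text{joint}}_T$; everything the agents subsequently do (selecting $\mathbf{x}^n_t$, re-sampling $\boldsymbol{\omega}_{n,t}$) is post-processing of these together with the agents' own private data, and each agent's message in round $t$ is a function of that agent's record and of the previously released vectors only. Consequently, removing one agent's record perturbs exactly one record of the input to each round's subsampled Gaussian mechanism, and DP-FTS-DE is an adaptive composition of $T+1$ such mechanisms. Applying the moments accountant — log-moments are additive over the composition, each bounded by that of a subsampled Gaussian with parameters $(q,z)$, and the tail bound converts the accumulated log-moment to an $(\epsilon,\delta)$ guarantee — yields constants $c_1,c_2$ such that for $\epsilon<c_1q^2T$ the algorithm is $(\epsilon,\delta)$-DP whenever $z\ge c_2 q\sqrt{T\log(1/\delta)}/\epsilon$, absorbing the difference between $T$ and $T+1$ into the constants.

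The main obstacle is the third step: carefully justifying that the adaptive, feedback-coupled structure of the algorithm — where deleting an agent changes its broadcast-dependent messages in every later round, and thereby also the downstream conditional distributions — still reduces cleanly to $T+1$ adaptive compositions of one subsampled Gaussian mechanism on the joint vector, with the per-round moment bound unchanged. This is exactly the setting the general DP framework of~\cite{mcmahan2018a} was designed to formalize, so the argument amounts to checking that DP-FTS-DE instantiates its hypotheses (bounded per-record contribution to each round's query, independent Poisson subsampling with fixed rate, coordinate-wise Gaussian noise calibrated to the sensitivity); the remaining bookkeeping (the joint-vector clipping bound, the add/remove sensitivity constant, and $T$ versus $T+1$) is routine and is absorbed into $c_1,c_2$.
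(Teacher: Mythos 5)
Your proposal is correct and follows essentially the same route as the paper's own proof: identify each round's server-side processing as a single subsampled Gaussian mechanism on the concatenated joint vector (with sampling rate $q$, sensitivity $\varphi_{\max}S/q$ from the clipping bound $\norm{\widehat{\boldsymbol{\omega}}^{\text{joint}}_{n,t}}_2 \leq N\varphi_{\max}S$, and noise multiplier $z$), and then invoke the moments-accountant composition result of Theorem 1 of~\cite{abadi2016deep} over the $T$ iterations. Your third step is in fact somewhat more explicit than the paper's argument, which simply defers the adaptive-composition bookkeeping to~\cite{abadi2016deep} and~\cite{mcmahan2018learning} after verifying that the per-round mechanism matches theirs.
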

Proposition~\ref{proposition:dp} formalizes our privacy guarantee, and its proof (App.~\ref{app:proposition_1}) follows directly from Theorem $1$ of~\cite{abadi2016deep}.
Proposition~\ref{proposition:dp} shows that a larger $z$ (i.e., larger variance for Gaussian noise), a smaller $q$ 
(i.e., smaller expected no.~of selected agents) 
and a smaller $T$ (i.e., smaller no.~of iterations) all improve the privacy guarantee because for a fixed $\delta$, they all allow $\epsilon$ to be smaller.
\begin{theorem}[Utility Guarantee]
\label{theorem:dp_fts_de}
Assume that all $f^n$'s lie in the RKHS of kernel $k$: $\norm{f^n}_k \leq B,\forall n\in[N]$.
Let $\gamma_t$ be the max information gain on $f^1$ from any set of $t$ observations, $\varepsilon$ denote an upper bound on the approximation error of RFFs (Sec.~\ref{sec:background}), $\mathcal{C}_t\triangleq \{n\in[N] \big| ||\boldsymbol{\omega}_{n,t}||_2 > S/\sqrt{P} \}$, $\delta\in(0,1)$, $\lambda=1+2/T$ and $\beta_{t} \triangleq B+\sigma\sqrt{2(\gamma_{t-1} + 1 + \log(4/\delta)}$ (Algo.~\ref{alg:BO}).
Choose $(p_t)_{t\in\mathbb{Z}^+}$ to be monotonically increasing s.t.~$1-p_t=\mathcal{O}(1/t^2)$.
Then, with probability of at least $1 - \delta$ ($\tilde{\mathcal{O}}$ hides all logarithmic factors),
\[
\textstyle R^1_T=\tilde{\mathcal{O}}\Big(\left(B+1/p_1\right)\gamma_T\sqrt{T}+{\sum}^T_{t=1}\psi_t+B\sum^T_{t=1}\vartheta_t\Big)
\]
where $\psi_t\triangleq \tilde{\mathcal{O}}( (1-p_t)P\varphi_{\max}{q}^{-1}(\Delta_{t} +  zS\sqrt{M}))$, $\Delta_{t} \triangleq \sum^N_{n=1} \Delta_{n,t}$, $\Delta_{n, t} \triangleq \tilde{\mathcal{O}}(\varepsilon B t^2 + B + \sqrt{M} + d_n + \sqrt{\gamma_t})$,
and
$\vartheta_t \triangleq (1-p_t)\sum^P_{i=1} \sum_{n\in\mathcal{C}_t}\varphi^{(i)}_n$.
\end{theorem}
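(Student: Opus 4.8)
The plan is to extend the regret analysis of FTS~\cite{dai2020federated}, isolating the extra error introduced by (i) the differentially private transformations (subsampling, clipping, Gaussian noise) performed by the central server and (ii) the sub-region decomposition, which together yield the new $\sum_t\psi_t$ and $B\sum_t\vartheta_t$ terms, while the remaining term $(B+1/p_1)\gamma_T\sqrt{T}$ is essentially inherited from FTS.

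First I would fix the high-probability events on which the whole argument runs and union-bound them so that they jointly hold with probability at least $1-\delta$: (a) the RKHS confidence bound $|f^n(\mathbf{x})-\mu^n_{t-1}(\mathbf{x})|\le\beta_t\sigma^n_{t-1}(\mathbf{x})$ for every agent $\mathcal{A}_n$, every $\mathbf{x}$, and every $t$ (standard GP-bandit concentration, which also fixes the form of $\beta_t$ and motivates the choice $\lambda=1+2/T$); (b) the concentration and anti-concentration of the Thompson samples $f^n_t$ around $\mu^n_{t-1}$; and (c) the RFF guarantee $\sup_{\mathbf{x},\mathbf{x}'}|k(\mathbf{x},\mathbf{x}')-\boldsymbol{\phi}(\mathbf{x})^\top\boldsymbol{\phi}(\mathbf{x}')|\le\varepsilon$, which lets me relate $\boldsymbol{\phi}(\cdot)^\top\boldsymbol{\nu}_{n,t}$ to $\mu^n_t(\cdot)$ and control $\|\boldsymbol{\nu}_{n,t}\|_2$, $\|\boldsymbol{\omega}_{n,t}\|_2$ --- this is where the $\varepsilon Bt^2$ and $\sqrt{M}$ pieces of $\Delta_{n,t}$ originate.

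Next I would perform the per-iteration decomposition, conditioning on the history $\mathcal{F}_{t-1}$. On iterations where the Bernoulli$(p_t)$ coin selects standard TS, I reuse the FTS ``unsaturated point'' argument: conditioned on playing the TS maximizer, the chosen point is unsaturated with constant probability, hence unconditionally with probability $\Omega(p_t)\ge\Omega(p_1)$, giving $f^1(\mathbf{x}^{1,*})-f^1(\mathbf{x}^1_t)=\tilde{\mathcal{O}}((B+1/p_1)\beta_t\sigma^1_{t-1}(\mathbf{x}^1_t))$ up to lower-order terms; summing with Cauchy--Schwarz and $\sum_t(\sigma^1_{t-1}(\mathbf{x}^1_t))^2=\tilde{\mathcal{O}}(\gamma_T)$ produces the $(B+1/p_1)\gamma_T\sqrt{T}$ term. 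On iterations where the coin selects the federated route, $\mathcal{A}_1$ plays $\mathbf{x}^1_t=\arg\max_{\mathbf{x}}\boldsymbol{\phi}(\mathbf{x})^\top\boldsymbol{\omega}^{(i^{[\mathbf{x}]})}_{t-1}$, and I bound the extra regret by a chain of approximations, applied \emph{uniformly over} $\mathbf{x}\in\mathcal{X}$, that passes from the DP output $\boldsymbol{\omega}^{(i)}_{t-1}=q^{-1}\sum_{n\in\mathcal{S}_{t-1}}\varphi^{(i)}_n\widehat{\boldsymbol{\omega}}_{n,t-1}+\text{noise}$ back to $f^1$: (1) stripping the Gaussian noise costs $\tilde{\mathcal{O}}(\varphi_{\max}q^{-1}zS\sqrt{M})$ per sub-region via a Gaussian tail bound on $\boldsymbol{\phi}(\mathbf{x})^\top\text{noise}$; (2) undoing clipping costs a term bounded by $(\sum_{n\in\mathcal{C}_{t-1}}\varphi^{(i)}_n)\cdot\tilde{\mathcal{O}}(B)$, which summed over $i$ is exactly the $\vartheta_t$ contribution; (3) replacing the subsampled weighted sum by its conditional mean $\sum_n\varphi^{(i)}_n\boldsymbol{\omega}_{n,t-1}$, then each $\boldsymbol{\omega}_{n,t-1}$ by $\boldsymbol{\nu}_{n,t-1}$, then $\boldsymbol{\phi}(\cdot)^\top\boldsymbol{\nu}_{n,t-1}$ by $\mu^n_{t-1}$, then $\mu^n_{t-1}$ by $f^n$, and finally $f^n$ by $f^1$, accumulates per agent precisely $\Delta_{n,t}=\tilde{\mathcal{O}}(\varepsilon Bt^2+B+\sqrt{M}+d_n+\sqrt{\gamma_t})$ and hence $\varphi_{\max}q^{-1}\Delta_t$ per sub-region. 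Summing the $P$ sub-regions and weighting by the probability $1-p_t$ of this route gives $\psi_t$.

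The main obstacle is step (3) together with the randomness of the subsampling set $\mathcal{S}_{t-1}$ and the clipping set $\mathcal{C}_{t-1}$: unlike in FTS, the object being maximized is assembled from $P$ separately subsampled, clipped, and noised weighted averages, so I must (i) derive deviation bounds for the subsampled weighted sum that are uniform over $\mathbf{x}\in\mathcal{X}$ and over all $P$ sub-regions at once --- this is where the factor $P$ and the $\varphi_{\max}$, $1/q$ dependencies enter, and where the data-dependent set $\mathcal{C}_t$ must be handled without circular reasoning --- and (ii) lift the resulting per-step conditional bounds to a high-probability bound on $R^1_T$ via an Azuma--Hoeffding argument on the martingale differences $r^1_t-\mathbb{E}[r^1_t\mid\mathcal{F}_{t-1}]$. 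Substituting $1-p_t=\mathcal{O}(1/t^2)$ at the end keeps $\sum_t\psi_t$ and $B\sum_t\vartheta_t$ at the stated magnitude and completes the bound.
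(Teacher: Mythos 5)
Your proposal is correct and follows essentially the same route as the paper's proof: the same union-bounded confidence/RFF events, the same saturated-point argument yielding the $(B+1/p_1)\gamma_T\sqrt{T}$ term on TS iterations, the same chain of approximations on federated iterations (Gaussian noise $\to zS\sqrt{M}$, clipping $\to\vartheta_t$ via $\mathcal{C}_t$, subsampling/RFF/heterogeneity $\to\Delta_{n,t}$, with the maximizer property killing the remaining cross term), and the same Azuma--Hoeffding lift. The only cosmetic difference is that the paper handles the subsampling deviation with a deterministic worst-case bound $|q-\mathbb{I}_n|\le 1$ (yielding the $1/q$ factor directly) rather than a concentration bound on the subsampled sum, but this does not change the argument.
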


%
Theorem~\ref{theorem:dp_fts_de} (proof in App.~\ref{app:proof_regret_bound}) gives an upper bound on the cumulative regret of agent $\mathcal{A}_1$.
Note that all three terms in the regret upper bound grow sub-linearly:
The first term is sub-linear because $\gamma_T=\mathcal{O}((\log T)^{D+1})$ for the SE kernel ($D$ is the dimension of the input space), and the second and third terms are both sub-linear since we have chosen $1-p_t=\mathcal{O}(1/t^2)$.\footnote{The requirement of $1-p_t=\mathcal{O}(1/t^2)$ is needed only to ensure that DP-FTS-DE is no-regret even if all other agents are heterogeneous and is hence a conservative choice. Therefore, it is reasonable to make $1-p_t$ decrease slower in practice as we show in our experiments (Sec.~\ref{sec:experiments}).}
So,
DP-FTS-DE preserves the no-regret property of FTS~\cite{dai2020federated}.
That is, agent $\mathcal{A}_1$ 
is asymptotically \emph{no-regret}
even if all other agents are heterogeneous, i.e., all other agents have significantly different objective functions from $\mathcal{A}_1$.
This is particularly desirable because it ensures the robustness of DP-FTS-DE against the heterogeneity of agents, which is an important challenge in both FL and FBO~\cite{dai2020federated,kairouz2019advances} and has also been an important consideration for other works on the theoretical convergence of FL~\cite{li2018federated,li2019convergence}.
To prove this robust guarantee, we have upper-bounded the \emph{worst-case error} 
induced by the use of
\emph{any} set of agents, which explains the dependence of the regret bound on $d_n\triangleq\max_{\mathbf{x}\in\mathcal{X}}|f^1(\mathbf{x})-f^n(\mathbf{x})|$ and $(p_t)_{t\in\mathbb{Z}^+}$.
Specifically, larger $d_n$'s indicate larger differences between the objective functions of $\mathcal{A}_1$ and the other agents and hence lead to a worse regret upper bound (through the term $\psi_t$).
Smaller values of the sequence $(p_t)_{t\in\mathbb{Z}^+}$ increase the utilization of information from the other agents (line $8$ of Algo.~\ref{alg:BO} becomes more likely to be executed), hence inflating the worst-case error resulting from these information.

Theorem~\ref{theorem:dp_fts_de}, when interpreted together with Proposition~\ref{proposition:dp}, also reveals some interesting theoretical insights regarding the \textbf{privacy-utility trade-off}.
Firstly, a larger $z$ (i.e., larger variance for  Gaussian noise)
improves the privacy guarantee (Proposition~\ref{proposition:dp}) and yet results in a worse utility since it leads to a worse regret upper bound (through $\psi_t$).
Secondly, a larger $q$ (i.e., more selected agents in each iteration) improves the utility since it tightens the regret upper bound (by reducing the value of $\psi_t$) at the expense of a worse privacy guarantee (Proposition~\ref{proposition:dp}).
A general guideline for choosing the values of $z$ and $q$ is to aim for a good utility while ensuring that the privacy loss is within the single-digit range (i.e., $<10$)~\cite{abadi2016deep}.
The value of the clipping threshold $S$ exerts no impact on the privacy guarantee. However, $S$ affects the regret upper bound (hence the utility) through two conflicting effects:
Firstly, a smaller $S$ reduces the value of $\psi_t$ and hence the regret bound. However, a smaller $S$ 
is likely to enlarge the cardinality of the set $\mathcal{C}_t$, hence increasing the value of 
$\vartheta_t$ (Theorem~\ref{theorem:dp_fts_de}).
Intuitively, a smaller $S$ impacts the performance positively by reducing the noise variance (line $11$ of Algo.~\ref{alg:DP-FTS-DE}) and yet negatively by causing more vectors to be clipped (line $9$ of Algo.~\ref{alg:DP-FTS-DE}).
A general guide on the selection of $S$ is to choose a small value while ensuring that a small number of vectors are clipped.

Regarding the dependence of the regret upper bound on the number of random features $M$, in addition to the dependence through $\Delta_{n,t}$ which has been analyzed by~\cite{dai2020federated}, the integration of DP introduces 
another dependence 
that implicitly affects the \textbf{privacy-utility trade-off}.
Besides increasing the value of $\psi_t$, a larger $M$  enlarges the value of $\vartheta_t$ as a larger dimension for the vectors $\boldsymbol{\omega}_{n,t}$'s is expected to increase their $L_2$ norms, hence increasing the cardinality of the set $\mathcal{C}_t$ and consequently the value of $\vartheta_t$ (Theorem~\ref{theorem:dp_fts_de}).
So, the additional dependence due to the integration of DP loosens the regret upper bound with an increasing $M$.
As a result, if $M$ is larger, we can either \emph{sacrifice privacy to preserve utility} by reducing $z$ or increasing $q$ (both can counter the increase of $\psi_t$), or \emph{sacrifice utility to preserve privacy} by keeping $z$ and $q$ unchanged.
The number of sub-regions $P$ also induces a trade-off about the performance of our algorithm, which is partially reflected by Theorem~\ref{theorem:dp_fts_de}.
Specifically, the regret bound depends on $P$ through three terms. 
Two of the terms ($P$ in $\psi_t$ and the summation of $P$ terms in $\vartheta_t$) arise due to the worst-case nature of our regret bound, as discussed earlier: They cause the regret upper bound to increase with $P$ due to the accumulation of the worst-case errors resulting from the $P$ vectors: $\{\boldsymbol{\omega}^{(i)}_t,\forall i\in[P]\}$.
Regarding the third term (in the definition of $\mathcal{C}_t$), a larger $P$ is expected to increase the cardinality of the set $\mathcal{C}_t$ (similar to the effect of a larger $M$ discussed above), consequently loosening the regret upper bound by inflating the value of $\vartheta_t$. In this case, as described above, we can choose to sacrifice either privacy or utility.
On the other hand, a larger $P$ (i.e., larger number of sub-regions) can improve the practical performance (utility) because it allows every agent to explore only a smaller sub-region which can be better modeled by its GP surrogate (Sec.~\ref{subsec:de}).
As a result of the worst-case nature of the regret bound mentioned earlier, the latter positive effect leading to better practical utility is not reflected by Theorem~\ref{theorem:dp_fts_de}.
Therefore, we instead verify this trade-off induced by $P$ about the practical performance in our experiments (Fig.~\ref{fig:effect_of_p_with_dp} in App.~\ref{app:synth_exp}).\vspace{-1.1mm}

\vspace{-1.5mm}
\section{Experiments}
\label{sec:experiments}
\vspace{-1.8mm}
Note that the privacy loss calculated by the moments accountant method is an upper bound on the value of $\epsilon$ for a given value of $\delta$.\cref{ftnote:privacy:loss}
When calculating the privacy loss, 
we follow the practice of~\cite{mcmahan2018learning} and set $\delta=1/N^{1.1}$.
All error bars denote standard errors.
Due to space constraints, some experimental details are deferred to App.~\ref{app:experiments}.

\vspace{-1.2mm}
\subsection{Synthetic Experiments}
\label{subsec:synth_exp}
\vspace{-1.3mm}
\begin{figure}
     \centering
     \begin{tabular}{cccc}
         \includegraphics[width=0.23\linewidth]{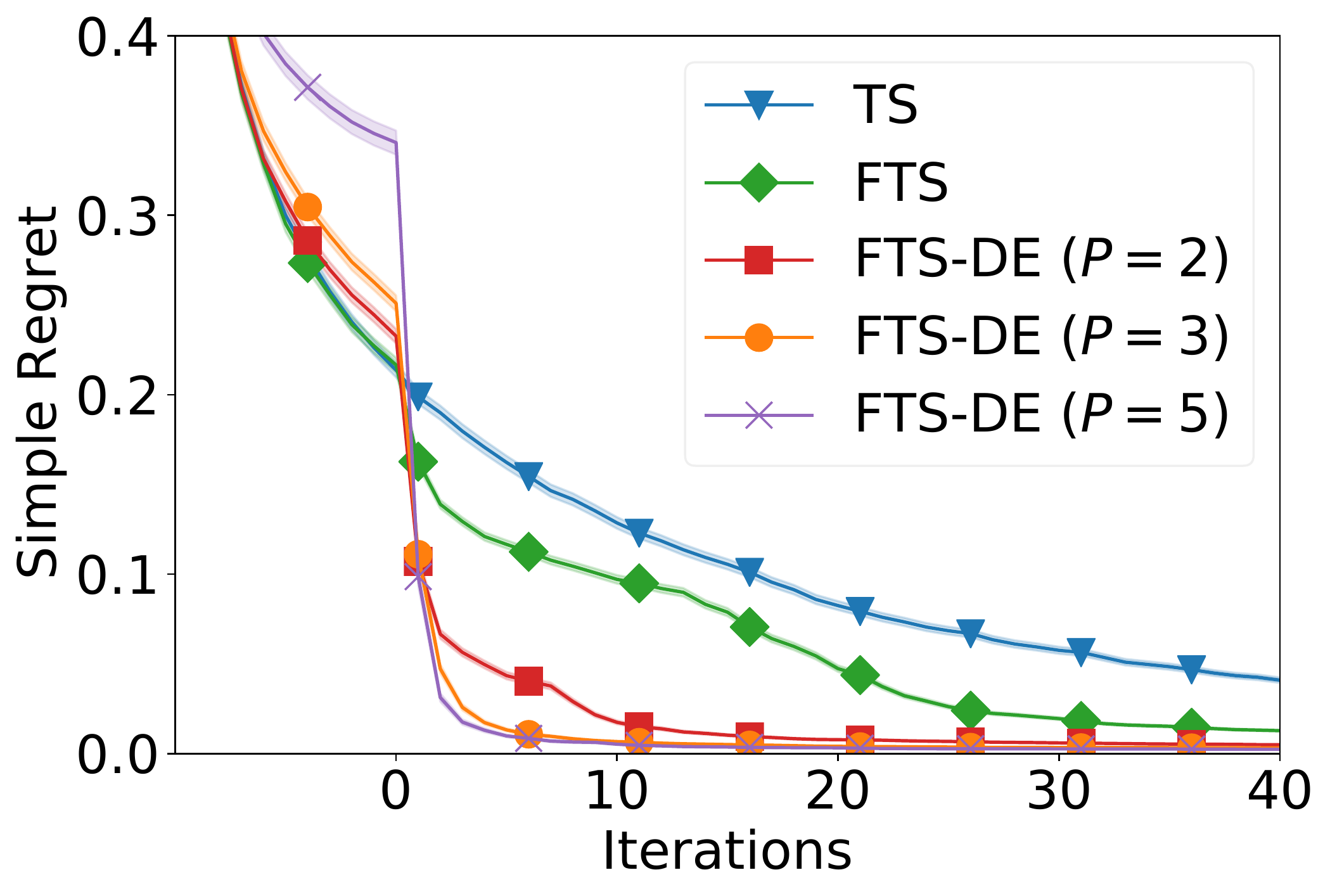} & \hspace{-6mm} 
         \includegraphics[width=0.23\linewidth]{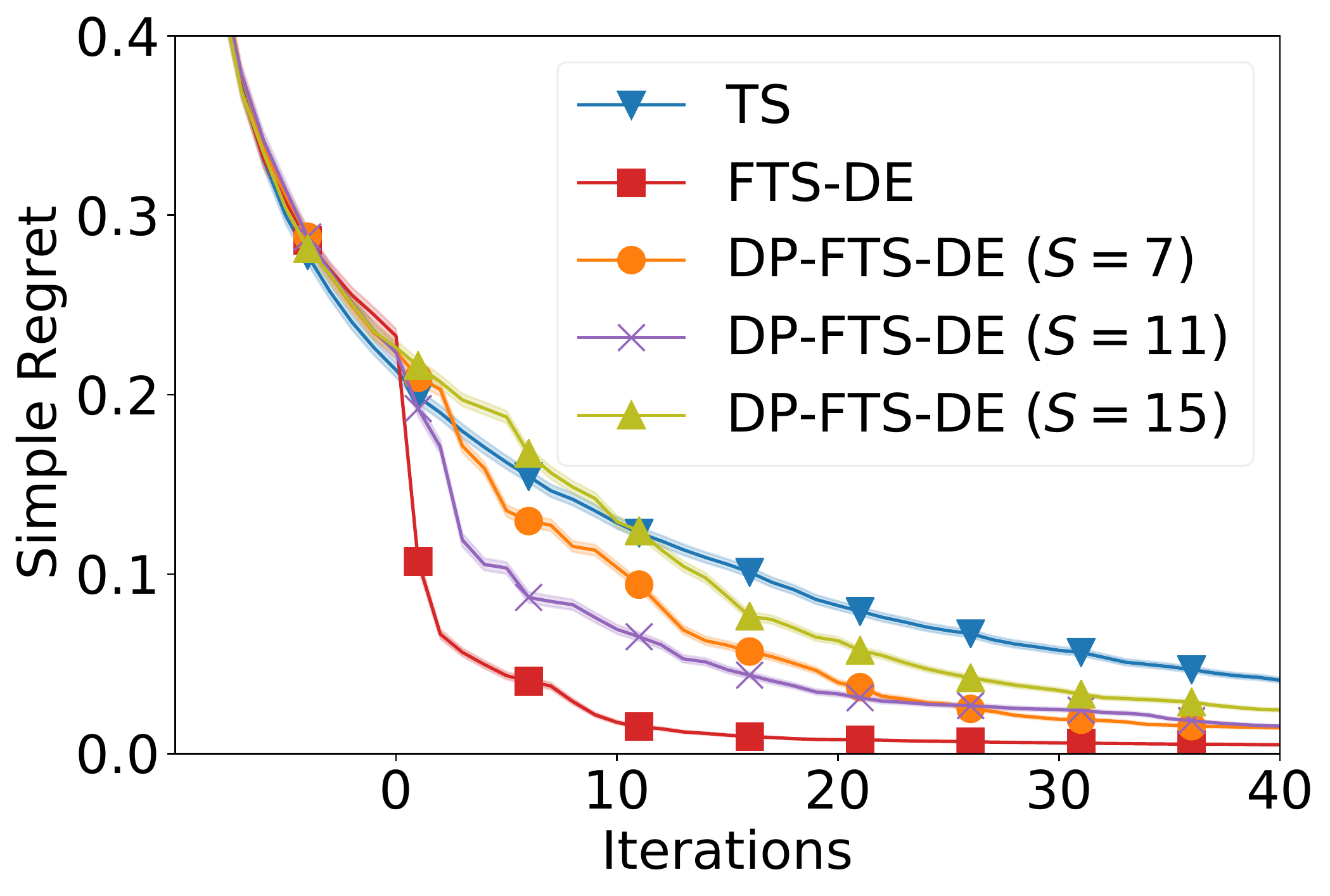}& \hspace{-6mm}
         \includegraphics[width=0.23\linewidth]{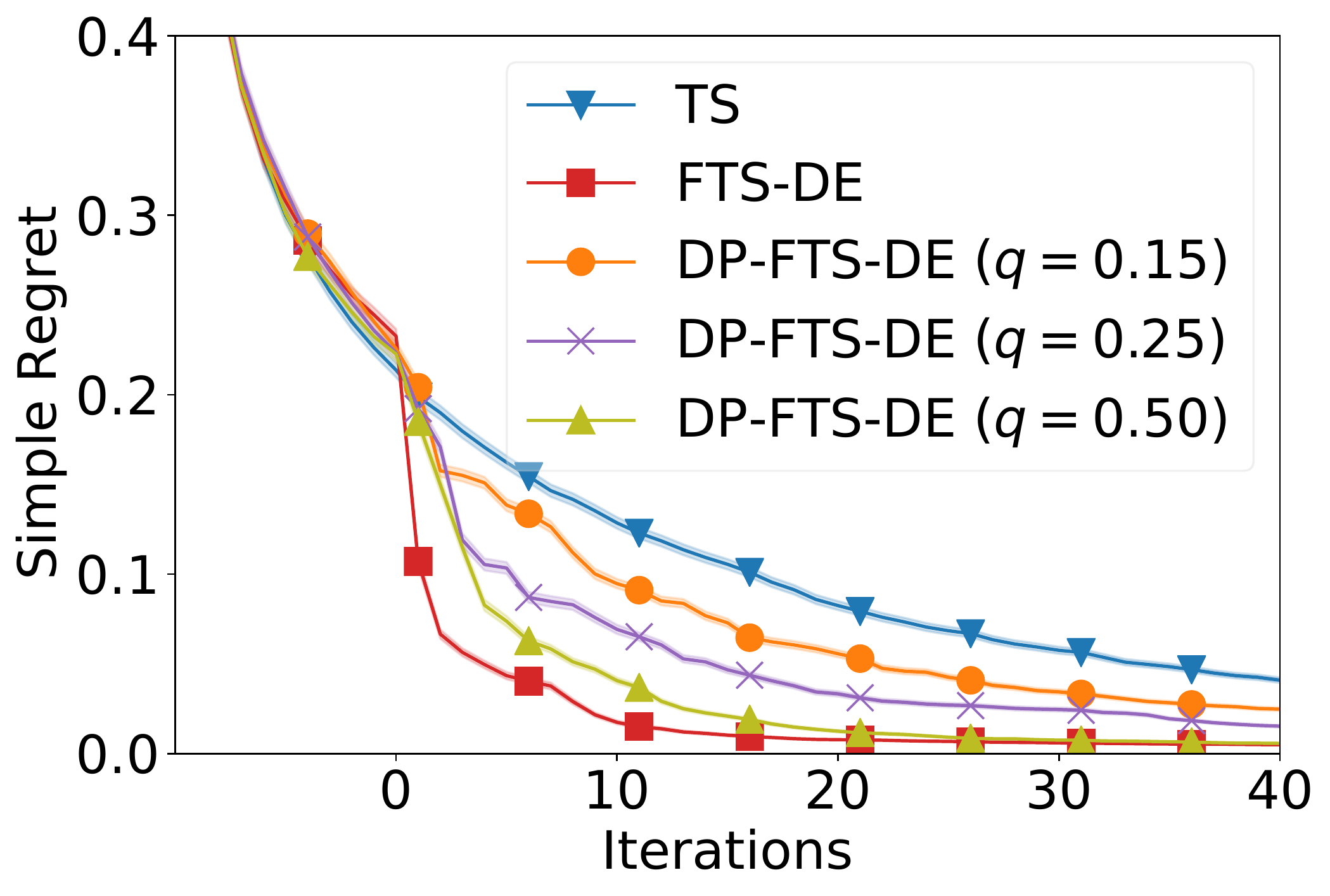} & \hspace{-6mm} 
         \includegraphics[width=0.23\linewidth]{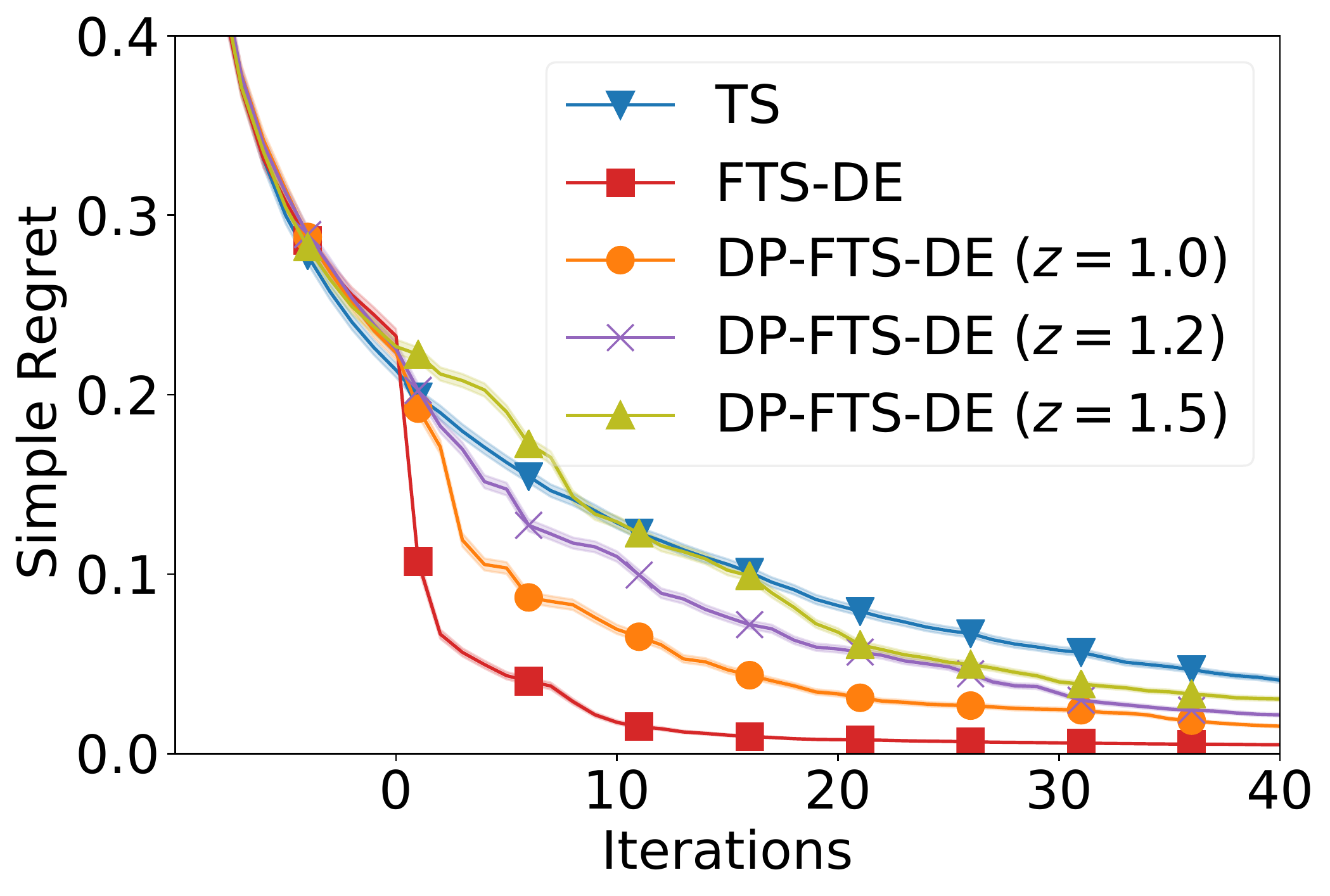}\\
         {(a)} & {(b)} & {(c)} & {(d)}
     \end{tabular}
\vspace{-2.5mm}
     \caption{
     (a) Benefit of DE.
     (b) Impact of $S$.
     (c) Impact of $q$;
     privacy losses after $T=40$ iterations are $5.93,9.91,20.12$ for the respective $q=0.15,0.25,0.5$.
     (d) Impact of $z$;
     privacy loss are $9.91,7.39,5.22$ for the respective $z=1.0,1.2,1.5$.
     Every curve is averaged over $N=200$ agents, each further averaged over $5$ runs with different random initializations of size $N_{\text{init}}=10$.
     The results before iteration $0$ correspond to the initialization period.
     }
     \label{fig:synth_1}
\vspace{-6mm}
\end{figure}

In synthetic experiments, we firstly sample a function from a GP with the SE kernel, and then apply different small random perturbations to the values of the sampled function to obtain the objective functions of $N=200$ different agents. 
We choose $M=50$ and $1-p_t=1/\sqrt{t},\forall t\in\mathbb{Z}^+$.
We firstly demonstrate the performance advantage of modified FTS and FTS-DE (without DP) over standard TS.
As shown in Fig.~\ref{fig:synth_1}a, FTS converges faster than standard TS and more importantly, FTS-DE (Sec.~\ref{subsec:de}) further improves the performance of FTS considerably.
Moreover, using a larger number $P$ of sub-regions (i.e., smaller sub-regions) brings more performance benefit. 
This is consistent with our analysis of DE (Sec.~\ref{subsec:de}) suggesting that smaller sub-regions are easier to model for the GP surrogate and hence can be better explored.
Moreover, we have also verified that after the integration of DP, DP-FTS-DE ($P=2$) can still achieve significantly better convergence (utility) than DP-FTS for the same level of privacy guarantee (Fig.~\ref{fig:synth_3} in App.~\ref{app:synth_exp}).
These results justify the practical significance of our DE technique  (Sec.~\ref{subsec:de}).
We have also shown (Fig.~\ref{fig:synth_4}, App.~\ref{app:synth_exp}) that both components in our DE technique (i.e., letting every agent explore only a local sub-region and giving more weights to those agents exploring the sub-region) are necessary for the performance of DE.

Fig.~\ref{fig:synth_1}b explores the impact of the clipping threshold $S$. From Fig.~\ref{fig:synth_1}b, an overly small $S$ may hinder the performance since it causes too many vectors to be clipped, and an excessively large $S$ may also be detrimental due to increasing the variance of the added Gaussian noise. This corroborates our analysis in Sec.~\ref{sec:theoretical_analysis}.
The value of $S=11$, which delivers the best performance in Fig.~\ref{fig:synth_1}b, has been chosen such that only a small percentage ($0.8\%$) of the vectors are clipped.
Figs.~\ref{fig:synth_1}c and d show the privacy-utility trade-off of our DP-FTS-DE algorithm induced by $q$ and $z$.
The results verify our theoretical insights regarding the impact of the parameters $q$ and $z$ on the privacy-utility trade-off (Sec.~\ref{sec:theoretical_analysis}), i.e., a larger $q$ (smaller $z$) leads to a better utility at the expense of a greater privacy loss.
Lastly, we also verify the robustness of our algorithm against agent heterogeneity. In particular, we show that when the objective functions of different agents are significantly different, our FTS-DE algorithm is still able to perform comparably with standard TS and letting the impact of the other agents decay faster (i.e., letting $1-p_t$ decrease faster) can improve the performance of FTS-DE in this scenario (see Fig.~\ref{fig:exp:hetero:agents} in App.~\ref{app:synth:more:results}).

\vspace{-1.3mm}
\subsection{Real-World Experiments}
\label{subsec:real_exp}
\vspace{-1.3mm}
\begin{figure}
     \centering
     \begin{tabular}{ccc}
         \includegraphics[width=0.28\linewidth]{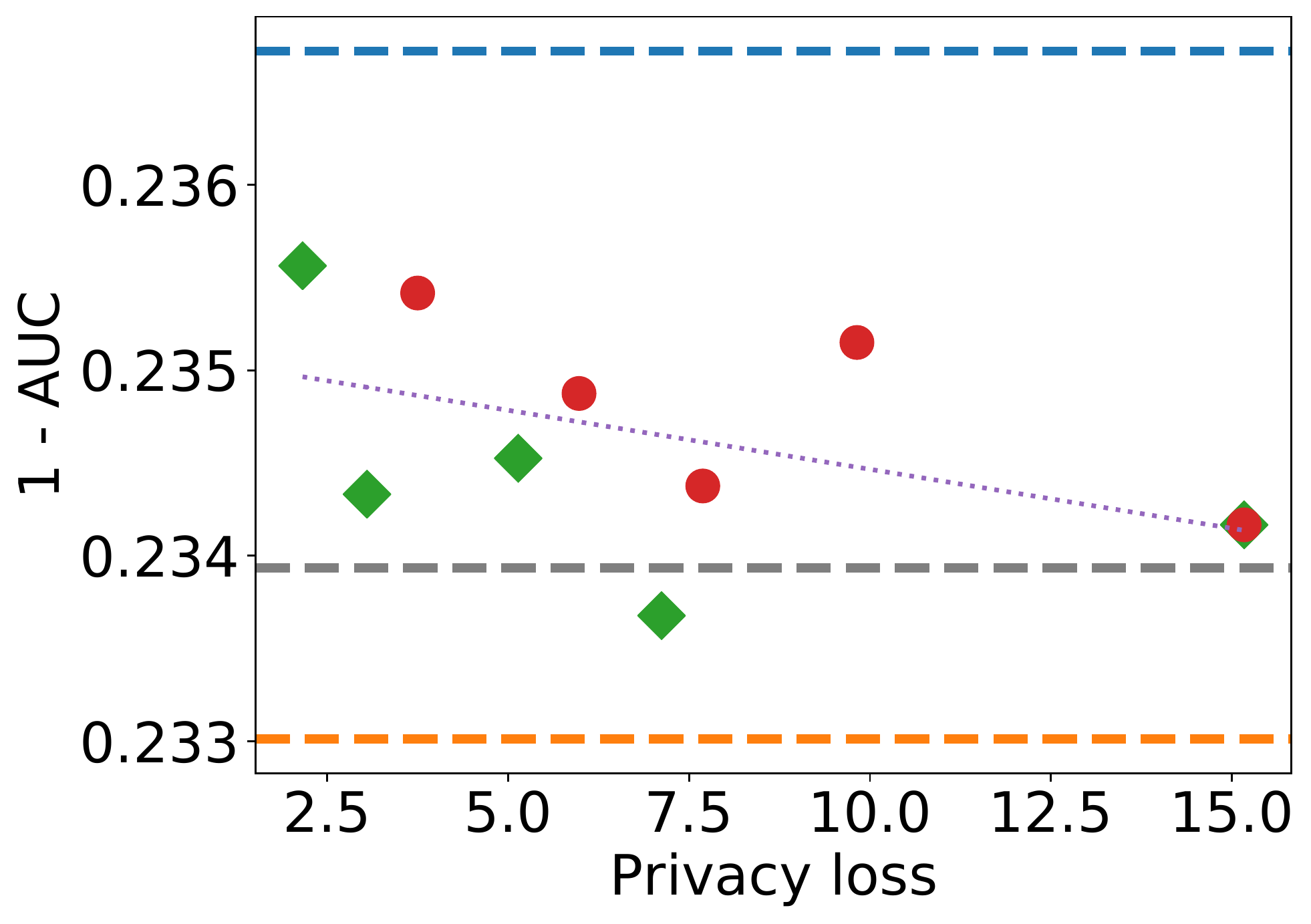} & \hspace{-4mm}
         \includegraphics[width=0.28\linewidth]{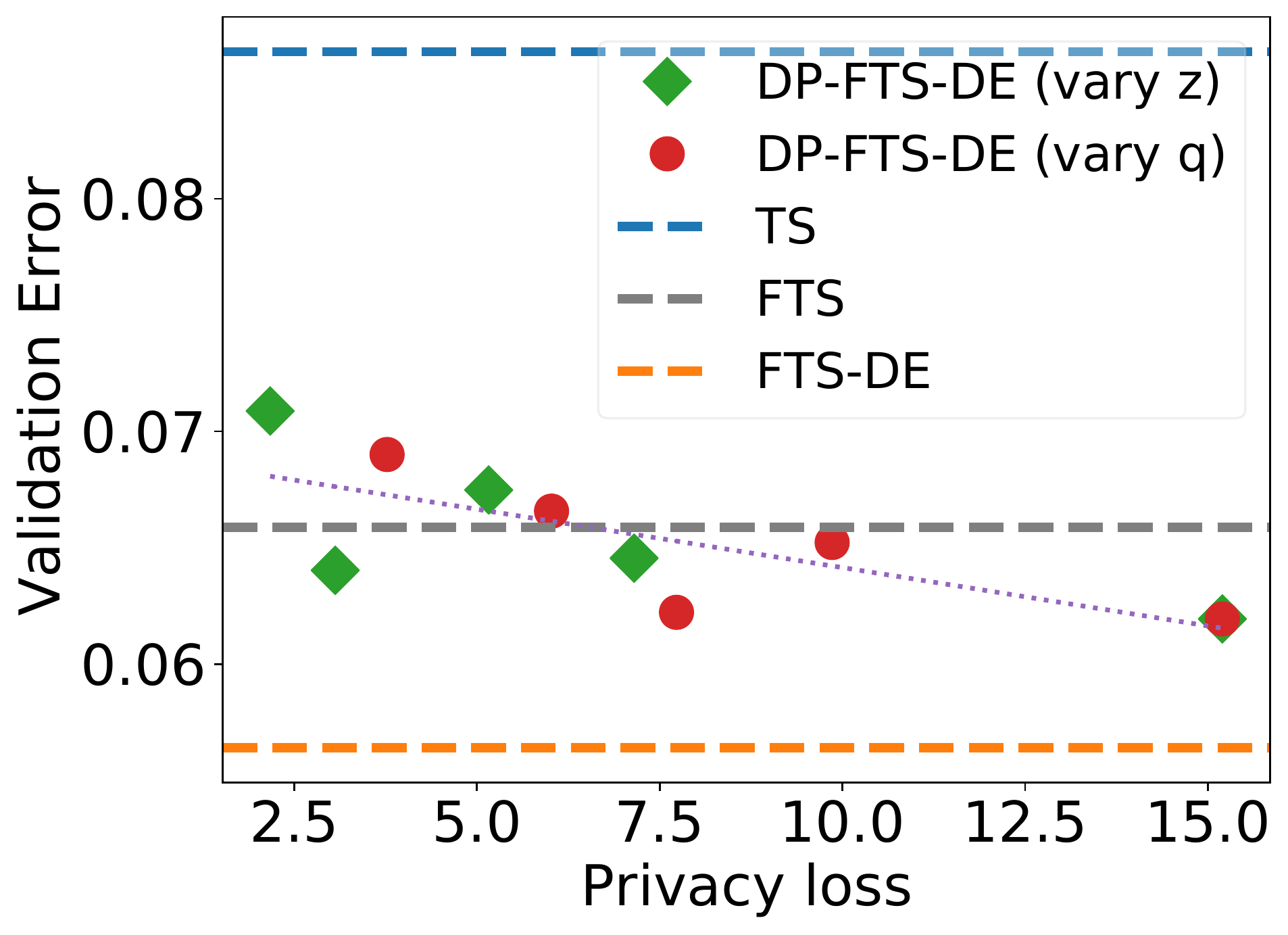} & \hspace{-4mm}
         \includegraphics[width=0.28\linewidth]{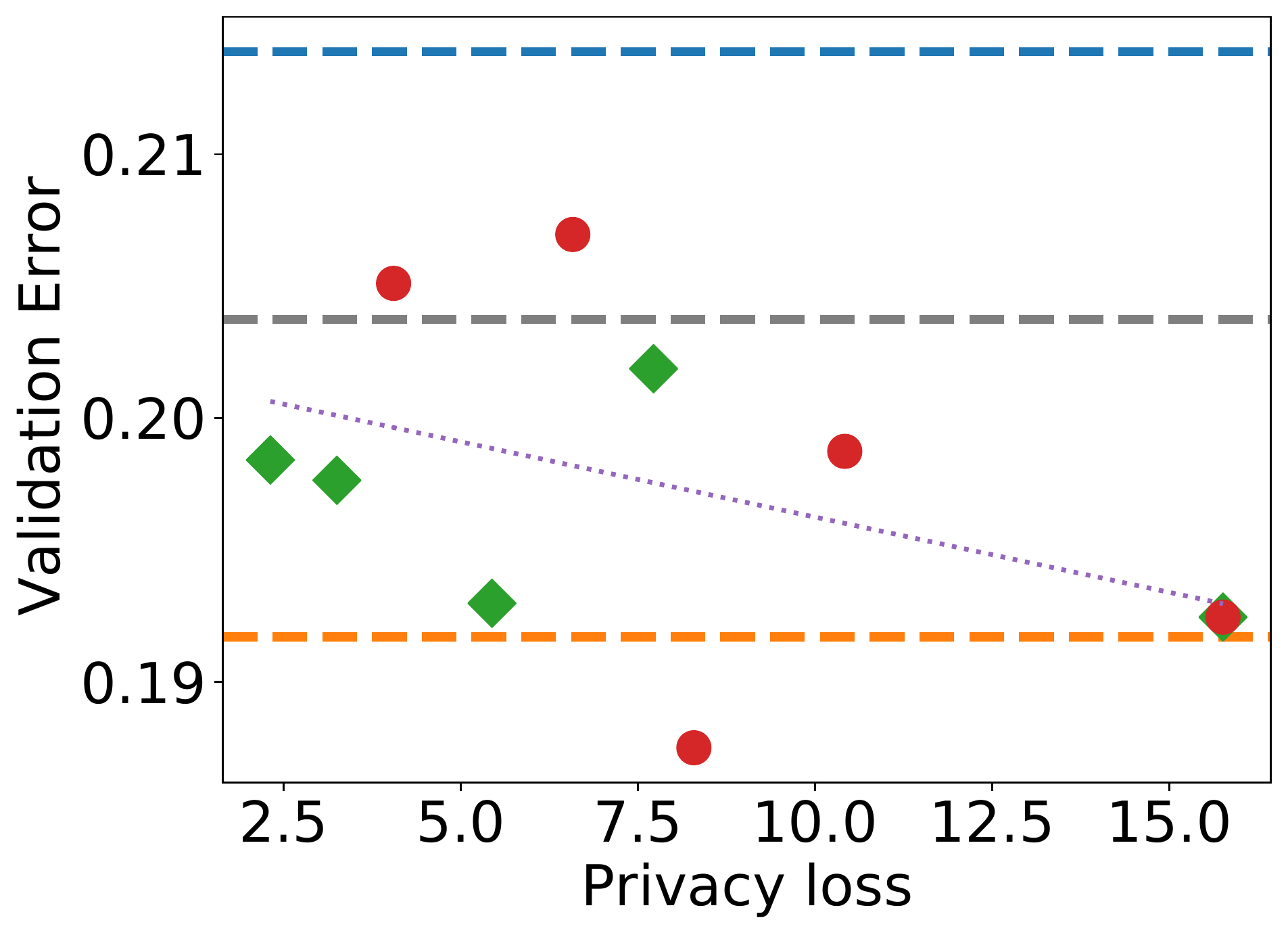} \\
         {(a)} & {(b)} & {(c)}\\
         \vspace{-1mm}
         \includegraphics[width=0.28\linewidth]{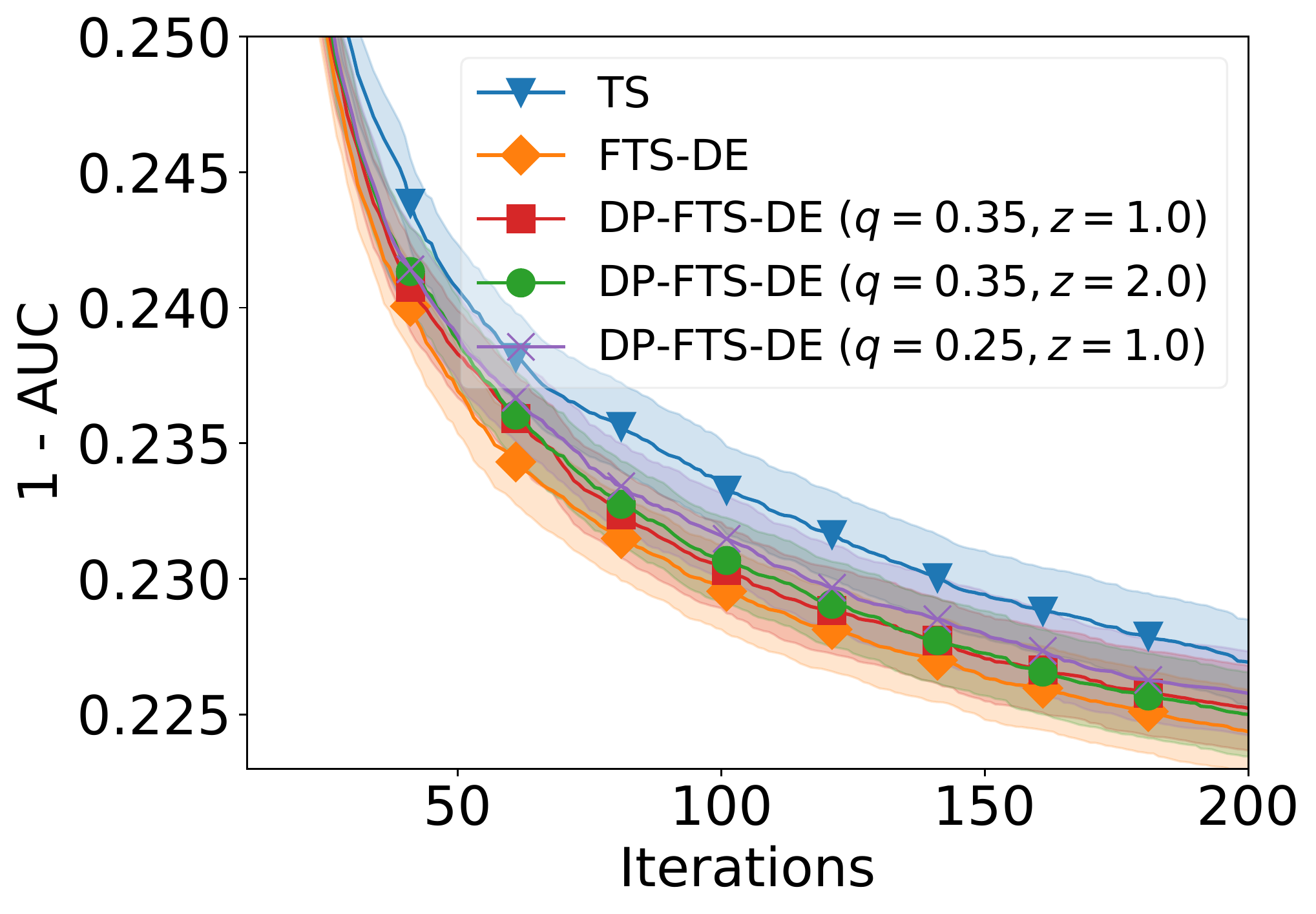} & \hspace{-4mm}
         \includegraphics[width=0.28\linewidth]{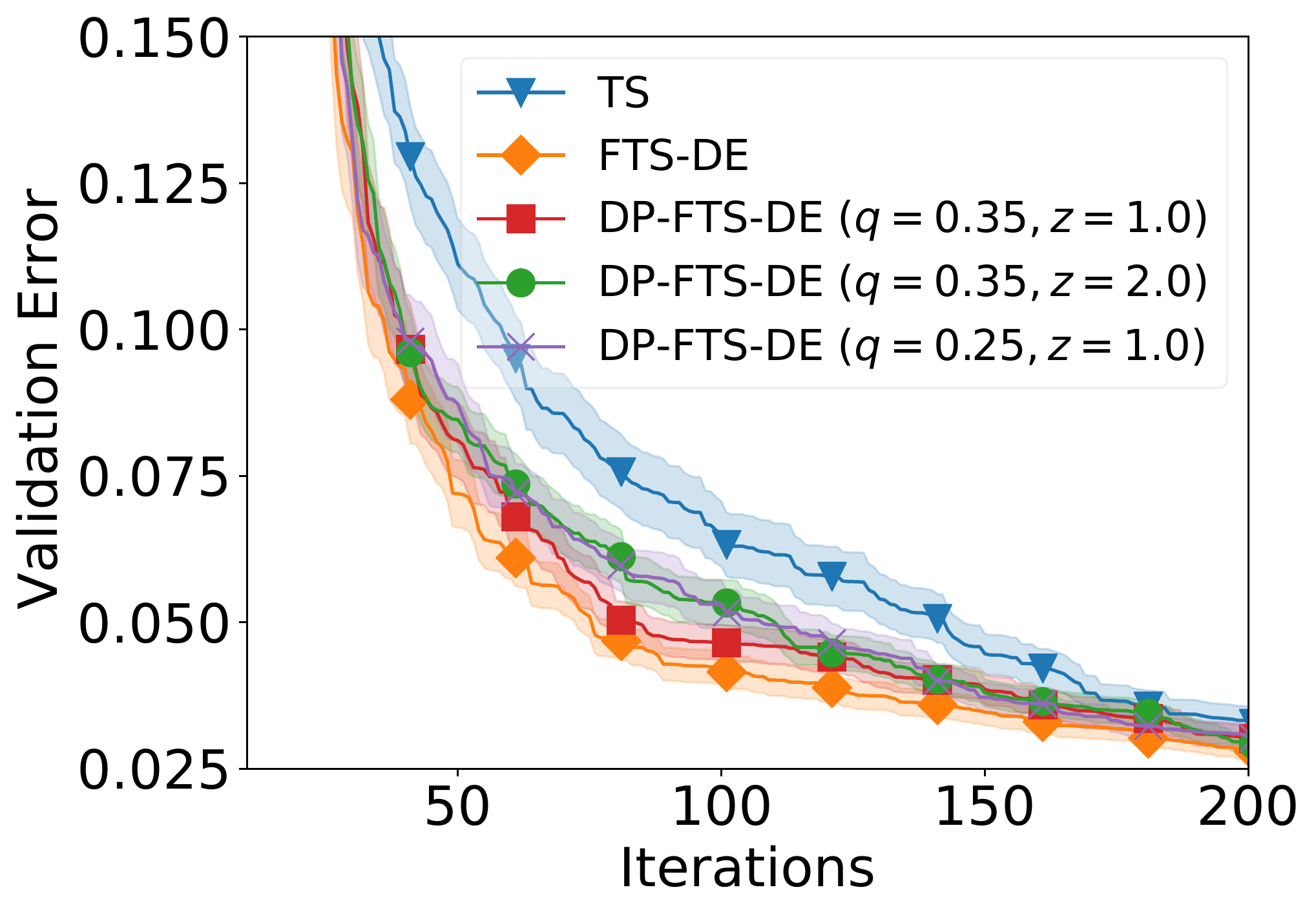} & \hspace{-4mm}
         \includegraphics[width=0.28\linewidth]{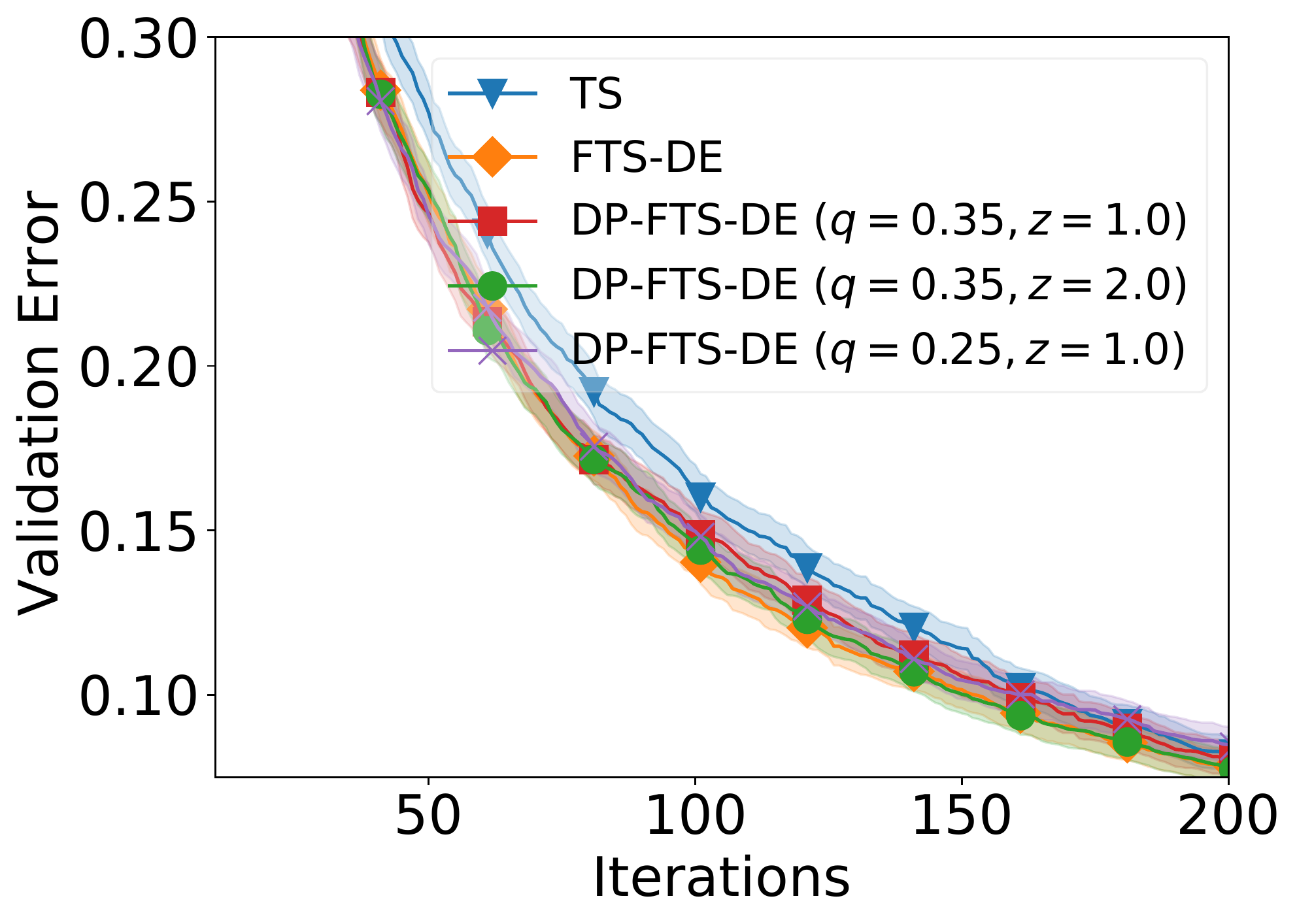}\\
         {(d)} & {(e)} & {(f)}
     \end{tabular}
\vspace{-2mm}
     \caption{(a,b,c) Privacy loss vs.~performance after $60$ iterations for landmine detection, human activity recognition, and EMNIST. 
     The more to the \emph{left} (\emph{bottom}), the better the privacy (utility). 
     (d,e,f) Convergence results for some settings in each experiment.
     Results are averaged over $N$ agents,
     each further averaged over $100$ (a,d) and $10$ (b,c,e,f) random initializations ($N_{\text{init}}=10$).
     Some error bars overlap because the agents 
     are highly heterogeneous and thus have significantly different scales.
     }
     \label{fig:real_exp}
\vspace{-5mm}
\end{figure}
We adopt $3$ commonly used datasets in FL and FBO~\cite{dai2020federated,smith2017federated}.
We firstly use a landmine detection dataset with $N=29$ landmine fields~\cite{xue2007multi} and tune $2$ hyperparameters of SVM for landmine detection.
Next, we use data collected using mobile phone sensors when $N=30$ subjects are performing $6$ activities~\cite{anguita2013public} and tune $3$ hyperparameters of logistic regression for activity classification.
Lastly, we use the images of handwritten characters by $N=50$ persons from EMNIST (a commonly used benchmark in FL)~\cite{cohen2017emnist} and tune $3$ hyperparameters of a convolutional neural network used for image classification.
In all $3$ experiments, we choose $P=4$, $S=22.0$, $M=100$, and $1-p_t=1/t$.
In the practical deployment of our algorithm, if the values of these parameters are tuned by running additional experiments, the additional privacy loss can be easily accounted for using existing techniques~\cite{abadi2016deep}.

Figs.~\ref{fig:real_exp}a,b,c plot the privacy (horizontal axis, more to the left is better) and utility (vertical axis, lower is better) after $60$ iterations\footnote{In practice, we recommend that the agents switch to local TS after the value of $1-p_t$ becomes so small (e.g., after $60$ iterations) that the probability of using the information from the central server is negligible (i.e., line $8$ of Algo.~\ref{alg:BO} is unlikely to be executed), after which no privacy loss is incurred.}.
The green dots correspond to $z=1, 1.6, 2, 3, 4$ ($q=0.35$) and the red dots represent $q=0.1, 0.15, 0.2, 0.25, 0.35$ ($z=1$).
Results show that with small privacy loss (in the single digit range), DP-FTS-DE achieves a competitive performance (utility) and significantly outperforms standard TS in all settings.
The figures also reveal a clear trade-off between privacy and utility, i.e., a smaller privacy loss (more to the left) generally results in a worse utility (larger vertical value).
In addition, these two observations can also be obtained from Figs.~\ref{fig:real_exp}d,e,f which plot some convergence results:
DP-FTS-DE and FTS-DE converge faster than TS; a smaller privacy loss (i.e., larger $z$ or smaller $q$) in general leads to a slower convergence.
Figs.~\ref{fig:real_exp}a,b,c also justify the importance of 
DE (Sec.~\ref{subsec:de}) since FTS-DE (and some settings of DP-FTS-DE) 
outperforms FTS without DE in all experiments. We also verify the importance of DE when DP is integrated 
in App.~\ref{app:real_exp} (Fig.~\ref{fig:har_compare_with_dp_fts}).
Furthermore, we demonstrate the robustness of our results against the choices of the weights and the number of sub-regions in App.~\ref{app:real:exp:different:weights}.
Lastly, our DP-FTS-DE can be easily adapted to use R\'enyi DP~\cite{wang2019subsampled}\footnote{We only need to modify step 6 of Algo.~\ref{alg:DP-FTS-DE} s.t.~we randomly select a fixed number of $Nq$ agents.}, which, although requires modifications to our theoretical analysis, offers slightly better privacy loss 
with comparable performances (Fig.~\ref{fig:rdp} in App.~\ref{app:real_exp}).

\vspace{-2mm}
\section{Related Works}
\vspace{-2mm}
\label{sec:related_works}
BO has been extensively studied recently under different settings~\cite{balakrishnan2020efficient,chowdhury2017kernelized,hoang2018decentralized,kharkovskii2020nonmyopic,nguyen2021top,nguyen2021trusted,shahriari2016taking,srinivas2009gaussian}.
Recent works have added privacy preservation to BO by applying DP to the output 
of BO~\cite{kusner2015differentially}, using a different notion of privacy other than DP~\cite{dai2018privacy}, adding DP to outsourced BO~\cite{kharkovskii2020private}, or adding local DP to BO~\cite{zhou2020local}.
However, none of these works can tackle the FBO setting considered in this paper.
Collaborative BO involving multiple agents has also been considered by the work of~\cite{sim2021collaborative}, however,~\cite{sim2021collaborative} has assumed that all agents share the same objective function and focused on the issue of fairness.
Moreover, BO in the multi-agent setting has also been studied from the perspective of game theory~\cite{dai2020r2,sessa2019no}.
Our method also
shares similarity with 
parallel BO~\cite{contal2013parallel,daxberger2017distributed,desautels2014parallelizing,garcia2019fully,hernandez2017parallel,kandasamy2018parallelised}. However, parallel BO optimizes a single objective function while we allow agents to have different objective functions.
Our algorithm is also related to multi-fidelity BO~\cite{dai2019bayesian,kandasamy2016gaussian,zhang2020bayesian,zhang2017information} because utilizing information received from the central server can be viewed as querying a low-fidelity function.
Our DE technique  
bears similarity to~\cite{eriksson2019scalable} which has also used separate GP surrogates to model different local sub-regions (hyper-rectangles) and shown significantly improved performance.

FL has attracted significant attention in recent years~\cite{kairouz2019advances,li2021federated,li2021model,li2019federated2,li2020practical,li2019federated,mcmahan2016communication}.
In particular, privacy preservation using DP has been an important topic for FL~\cite{hu2020personalized,li2020privacy,truex2019hybrid,wei2020federated}, including both central DP (with trusted central server)~\cite{mcmahan2018learning} and local~\cite{kasiviswanathan2011can,warner1965randomized,zhao2020local} or distributed DP~\cite{bittau2017prochlo,cheu2019distributed,dwork2006our,shi2011privacy} (without trusted central server).
In addition to our setting of FBO which can be equivalently called \emph{federated GP bandit}, other sequential decision-making problems have also been extended to the federated setting, including federated multi-armed bandit~\cite{shi2021federated,zhu2021federated}, federated linear bandit~\cite{dubey2020differentially}, and federated reinforcement learning~\cite{fan2021fault}.
Lastly, federated hyperparameter tuning (i.e., hyperparameter tuning of ML models in the federated setting) has been attracting growing attention recently~\cite{holly2021evaluation,khodak2021federated}.

\vspace{-2mm}
\section{Conclusion and Future Works}
\label{sec:conclusion}
\vspace{-2mm}
We introduce DP-FTS-DE, which equips FBO with rigorous privacy guarantee and is amenable to privacy-utility trade-off both theoretically and empirically.
Since our method 
aims to
promote larger-scale adoption of FBO, it may bring the negative societal impact of fairness: some users may become discriminated against by the algorithm. This can be mitigated by extending our method to consider fairness~\cite{sim2021collaborative}, suggesting an interesting future work.
A limitation of our work is that the privacy loss offered by the moments accountant method~\cite{abadi2016deep} is not state-of-the-art. 
For example, the more advanced privacy-preserving technique of Gaussian DP~\cite{bu2020deep,dong2019gaussian} can deliver smaller privacy losses than moments accountant and has been widely applied in practice.
So, a potential future work is to extend our method to adopt more advanced privacy-preserving techniques such as Gaussian DP.
Another limitation of our paper is that we have not accounted for the different network capabilities of different agents, which is a common problem in the federated setting. That is, our algorithm requires every agent to send its updated parameter vector to the central server in every iteration (lines 3 and 4 of Algo.~\ref{alg:DP-FTS-DE}), which may not be realistic in some scenarios since the messages from some agents may not be received by the central server. Therefore, accounting for this issue using a principled method represents an interesting future work.
Other potential future works include adapting our method for other sequential decision-making problems such as reinforcement learning~\cite{fan2021fault}, and incorporating risk aversion~\cite{nguyen2021optimizing,nguyen2021value} into our method for safety-critical domains such as clinical applications (Sec.~\ref{sec:introduction}).

\begin{ack}
This research/project is supported by the National Research Foundation, Singapore under its Strategic Capability Research Centres Funding Initiative. Any opinions, findings and conclusions or recommendations expressed in this material are those of the author(s) and do not reflect the views of National Research Foundation, Singapore.
\end{ack}

\bibliography{pfbo}
\bibliographystyle{abbrv}

\newpage
\appendix
\onecolumn


\section{Proof of Theoretical Results}
\label{app:proof:theoretical:results}
\subsection{Proof of Proposition~\ref{proposition:dp}}
\label{app:proposition_1}
%
%

Proposition~\ref{proposition:dp} follows directly from the DP guarantee of the works of~\cite{abadi2016deep} and~\cite{mcmahan2018learning} (e.g., Theorem $1$ of~\cite{abadi2016deep}). Therefore, to prove the validity of Proposition~\ref{proposition:dp}, we only need to show that the joint subsampled Gaussian mechanism we apply in every iteration (Sec.~\ref{subsec:dp_fts_de}) is the same as the one adopted by~\cite{abadi2016deep} and~\cite{mcmahan2018learning}.
Therefore, we demonstrate here that the interpretation of our privacy-preserving transformations as a single subsampled Gaussian mechanism, which we have described in Sec.~\ref{subsec:dp_fts_de}, is equivalent to the transformations adopted by the work of~\cite{mcmahan2018learning}.

Firstly, our subsampling step (step 6 of Algo.~\ref{alg:DP-FTS-DE}) is the same as the one adopted by~\cite{mcmahan2018learning} since we both use the same subsampling technique, i.e., select every agent with a fixed probability $q$. 
Secondly, we both clip the (joint) vector from every selected agent (step 9 of Algo.~\ref{alg:DP-FTS-DE}) to ensure that its $L_2$ norm is bounded: $\norm{\widehat{\boldsymbol{\omega}}^{\text{joint}}_{n,t}}_2 \leq N\varphi_{\max}S,\forall n\in\mathcal{S}_t$.
Thirdly, we have adopted one of the two weighted-average estimators proposed by~\cite{mcmahan2018learning}, i.e., the unbiased estimator.
Specifically, we set the weight (we follow~\cite{mcmahan2018learning} and denote the weight of agent $\mathcal{A}_n$ by $\omega_n$ here) of every agent to be $\omega_n=1,\forall n\in[N]$. As a result, the unbiased estimator leads to: $\boldsymbol{\omega}^{\text{joint}}_{t}=\frac{1}{q\sum^N_{n=1}\omega_n}\sum_{n\in\mathcal{S}_t} \omega_n \widehat{\boldsymbol{\omega}}^{\text{joint}}_{n,t}=\frac{1}{qN}\sum_{n\in\mathcal{S}_t} \widehat{\boldsymbol{\omega}}^{\text{joint}}_{n,t}$.
Lastly, we have calculated the sensitivity (which determines the variance of the Gaussian noise) in the same way as~\cite{mcmahan2018learning}, i.e., using Lemma 1 of~\cite{mcmahan2018learning}. In particular, note that our clipping step has ensured that $\norm{\omega_n\widehat{\boldsymbol{\omega}}^{\text{joint}}_{n,t}}_2 \leq N\varphi_{\max}S,\forall n\in\mathcal{S}_t$; according to Lemma 1 of~\cite{mcmahan2018learning}, we have that the sensitivity can be upper-bounded by: $\mathbb{S}\leq\frac{N\varphi_{\max}S}{q\sum^N_{n=1}\omega_n}=\varphi_{\max}S/q$, which leads to the standard deviation of the Gaussian noise we have added (step 11 of Algo.~\ref{alg:DP-FTS-DE}): $z\mathbb{S}=z\varphi_{\max}S/q$.

To conclude, the single joint subsampled Gaussian mechanism performed by our DP-FTS-DE algorithm in every iteration is the same as the one adopted by~\cite{mcmahan2018learning}. Therefore, the DP guarantee of~\cite{mcmahan2018learning} and~\cite{abadi2016deep} is also valid for our DP-FTS-DE algorithm, hence justifying the validity of our Proposition~\ref{proposition:dp}.

\subsection{Proof of Theorem~\ref{theorem:dp_fts_de}}
\label{app:proof_regret_bound}
In this section, we prove Theorem~\ref{theorem:dp_fts_de}, which gives an upper bound on the cumulative regret of agent $\mathcal{A}_1$ running our DP-FTS-DE algorithm.
The proof of Theorem~\ref{theorem:dp_fts_de} makes use of the proof of~\cite{dai2020federated}, 
and the main technical challenge is how to take into account the impacts of (a) our modification to the original FTS algorithm by incorporating a central server and an aggregation through weighted averaging (first paragraph of Sec.~\ref{subsec:dp_fts}), (b) the privacy-preserving transformations (lines $5$-$11$ of Algo.~\ref{alg:DP-FTS-DE}), and (c) distributed exploration (DE) (Sec.~\ref{subsec:de}).
Since we are mainly interested in the asymptotic regret upper bound, we ignore the impact of the initialization period. Considering initialization would only add a constant term $2BN_{\text{init}}$ to the upper bound on the cumulative regret in Theorem~\ref{theorem:dp_fts_de} ($N_{\text{init}}$ is the number of initial inputs selected during initialization), and hence would not affect the asymptotic no-regret property of our algorithm.

Note that as we have mentioned in Sections~\ref{sec:background} and~\ref{sec:theoretical_analysis}, we prove here an upper bound on the cumulative regret of agent $\mathcal{A}_1$, i.e., $R^1_T\triangleq \sum^T_{t=1}(f^1(\mathbf{x}^{1,*})-f^1(\mathbf{x}^1_t))$.
To simplify notations, we drop the superscript $1$ in the subsequent analysis, i.e., we use $f$ to denote $f^1$, $f_t$ to denote $f^1_t$, $\mathbf{x}_t$ to denote $\mathbf{x}^1_t$, $\mathbf{x}^*$ to denote $\mathbf{x}^{1,*}$, etc.
Similarly, we use $\mu_{t-1}$ and $\sigma_{t-1}$ to represent the GP posterior mean and standard deviation of $\mathcal{A}_1$ at iteration $t$.

\subsubsection{Definitions and Supporting Lemmas}
We firstly define some notations we use to represent the privacy-preserving transformations, which are consistent with those in the main text.
In iteration $t$, we use $\boldsymbol{\omega}_{n,t}$ to denote the vector the central server receives from agent $\mathcal{A}_n$.
For a given set of agents $\mathcal{C} \in \{1,\ldots,N\}$, define $\widetilde{\varphi}^{(i)}_{\mathcal{C}}\triangleq\sum_{n\in \mathcal{C}}\varphi^{(i)}_n$, i.e., the total weight of those agents in the set $\mathcal{C}$ for the sub-region $\mathcal{X}_i$.
Next, we define $N$ indicator (Bernoulli) random variables $\mathbb{I}_n,\forall n\in[N]$, where $\mathbb{P}(\mathbb{I}_n = 1)=q,\forall n\in[N]$. These indicator random variables will be used to account for the subsampling step (i.e., step 6 of Algo.~\ref{alg:DP-FTS-DE}).
Denote by $\widehat{\boldsymbol{\omega}}_{n,t}$ the resulting vector after $\boldsymbol{\omega}_{n,t}$ is clipped to have a maximum $L_2$ norm of $S/\sqrt{P}$ (i.e., step 9 of Algo.~\ref{alg:DP-FTS-DE}): 
\[
\widehat{\boldsymbol{\omega}}_{n,t} \triangleq \frac{\boldsymbol{\omega}_{n,t}}{\max(1,\frac{\norm{\boldsymbol{\omega}_{n,t}}_2}{S/\sqrt{P}})}.
\]
An immediate consequence is that $\forall \mathbf{x}\in\mathcal{X}$:
\begin{equation}
\begin{split}
|\boldsymbol{\phi}(\mathbf{x})^{\top} \widehat{\boldsymbol{\omega}}_{n,t}|=|\boldsymbol{\phi}(\mathbf{x})^{\top} \frac{\boldsymbol{\omega}_{n,t}}{\max(1,\frac{\norm{\boldsymbol{\omega}_{n,t}}_2}{S/\sqrt{P}})}| = |\boldsymbol{\phi}(\mathbf{x})^{\top} \boldsymbol{\omega}_{n,t}|\frac{1}{\max(1,\frac{\norm{\boldsymbol{\omega}_{n,t}}_2}{S/\sqrt{P}})} \leq |\boldsymbol{\phi}(\mathbf{x})^{\top} \boldsymbol{\omega}_{n,t}|.
\end{split}
\label{eq:clip_make_norm_smaller}
\end{equation}
Denote by $\boldsymbol{\eta}$ the added Gaussian noise vector (i.e., step 11 of Algo.~\ref{alg:DP-FTS-DE}): $\boldsymbol{\eta} \sim \mathcal{N}\left(\mathbf{0}, (z\varphi_{\max} S/q)^2\mathbf{I}\right)$. Next, define 
\begin{equation}
\boldsymbol{\omega}^{(i)}_t \triangleq \frac{\sum^N_{n=1}\mathbb{I}_n\varphi^{(i)}_n\widehat{\boldsymbol{\omega}}_{n,t}}{q}+\boldsymbol{\eta}.
\label{eq:define:omega:i:t:camera}
\end{equation}
As a result, for agent $\mathcal{A}_1$ at iteration $t>1$, with probability $1-p_t$, the query $\mathbf{x}^1_t$ is selected using the $\boldsymbol{\omega}^{(i)}_{t}$'s: $\mathbf{x}^1_t=\arg\max_{\mathbf{x}\in\mathcal{X}}\boldsymbol{\phi}(\mathbf{x})^{\top}\boldsymbol{\omega}^{(i^{[\mathbf{x}]})}_{t}$, where $i^{[\mathbf{x}]}$ represents the sub-region $\mathbf{x}$ is assigned to.
This corresponds to line 8 of Algo.~\ref{alg:BO}. Note that to simplify the notations in the subsequent analyses, we have slightly deviated from the indexing from Algo.~\ref{alg:BO} by using $\boldsymbol{\omega}^{(i^{[\mathbf{x}]})}_{t}$ instead of $\boldsymbol{\omega}^{(i^{[\mathbf{x}]})}_{t-1}$. To be consistent with Algo.~\ref{alg:BO}, we can simply replace all appearances of $\boldsymbol{\omega}^{(i^{[\mathbf{x}]})}_{t}$ by $\boldsymbol{\omega}^{(i^{[\mathbf{x}]})}_{t-1}$ in our proof.

Let $\delta\in(0,1)$, recall that we have defined in Theorem~\ref{theorem:dp_fts_de} that $\beta_{t} \triangleq B+\sigma\sqrt{2(\gamma_{t-1} + 1 + \log(4/\delta)}$
and define $c_t \triangleq \beta_t (1 + \sqrt{2\log(|\mathcal{X}|t^2)})$ for all $t\in\mathbb{Z}^+$.
Denote by $A_t$ the event that agent $\mathcal{A}_1$ chooses $\mathbf{x}^1_t$ by maximizing a sampled function from its own GP posterior belief (i.e., $\mathbf{x}^1_t=\arg\max_{\mathbf{x}\in\mathcal{X}}f^1_t(\mathbf{x})$, as in line $6$ of Algo.~\ref{alg:BO}), 
which happens with probability $p_t$; 
denote by $B_t$ the event that $\mathcal{A}_1$ chooses $\mathbf{x}^1_t$ using the information received from the central server: $\mathbf{x}^1_t=\arg\max_{\mathbf{x}\in\mathcal{X}}\boldsymbol{\phi}(\mathbf{x})^{\top}\boldsymbol{\omega}^{(i^{[\mathbf{x}]})}_{t}$ (line $8$ of Algo.~\ref{alg:BO}),
which happens with probability $(1-p_t)$.

Next, we denote as $\mathcal{F}_{t}$ the filtration which includes the history of selected inputs and observed outputs of agent $\mathcal{A}_1$ until (including) iteration $t$.
Now we define two events that are $\mathcal{F}_{t-1}$-measurable.
\begin{lemma}[Lemma 1 of~\cite{dai2020federated}]
\label{lemma:uniform_bound}
Let $\delta \in (0, 1)$. Define $E^f(t)$ as the event that $|\mu_{t-1}(\mathbf{x}) - f(\mathbf{x})| \leq \beta_t \sigma_{t-1}(\mathbf{x})$ for all $\mathbf{x}\in \mathcal{X}$.
We have that $\mathbb{P}\left[E^f(t)\right] \geq 1 - \delta / 4$ for all $t\geq 1$.
\end{lemma}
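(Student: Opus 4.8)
The plan is to recognize the statement as a direct instance of the standard RKHS confidence bound for kernelized bandits: it is precisely Lemma~1 of~\cite{dai2020federated}, which is in turn obtained from the self-normalized concentration result of~\cite{chowdhury2017kernelized} (their Theorem~2). So the proof reduces to checking that the hypotheses of that theorem hold in our setting and then specializing the confidence parameter to $\delta/4$.

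First I would line up the hypotheses already available here. The function $f \equiv f^1$ lies in the RKHS of $k$ with $\norm{f}_k \leq B$ (the standing assumption of Theorem~\ref{theorem:dp_fts_de}); the observation noise $\zeta \sim \mathcal{N}(0,\sigma^2)$ is (conditionally) $\sigma$-sub-Gaussian; and the GP posterior mean $\mu_{t-1}$ and variance $\sigma_{t-1}^2$ in~\eqref{gp_posterior} are computed with regularization parameter $\lambda = 1 + 2/T$ and kernel normalized so that $k(\mathbf{x},\mathbf{x})\leq 1$. These are exactly the conditions under which the kernelized self-normalized bound applies.

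Second I would invoke the concentration inequality itself. Its core --- which I would cite rather than reprove --- is a martingale / self-normalized bound on the cumulative noise sequence: writing $\gamma_{t-1}$ for the maximum information gain about $f^1$ from any $t-1$ observations (as in the statement), one has, with probability at least $1 - \delta'$, simultaneously for all $t \geq 1$ and all $\mathbf{x} \in \mathcal{X}$,
\[
|\mu_{t-1}(\mathbf{x}) - f(\mathbf{x})| \leq \Big(B + \sigma\sqrt{2\big(\gamma_{t-1} + 1 + \log(1/\delta')\big)}\Big)\,\sigma_{t-1}(\mathbf{x}).
\]
Taking $\delta' = \delta / 4$ turns the bracketed coefficient into $\beta_t = B + \sigma\sqrt{2(\gamma_{t-1} + 1 + \log(4/\delta))}$ as defined in the statement, so the displayed inequality is exactly the event $E^f(t)$; hence $\mathbb{P}[E^f(t)] \geq 1 - \delta/4$ for every $t \geq 1$ (in fact simultaneously over $t$).

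I expect the only real obstacle to be bookkeeping rather than anything deep: confirming that the normalization $\lambda = 1 + 2/T$ matches the form in which~\cite{chowdhury2017kernelized} state their bound (usually $\lambda = 1 + \eta$ for some $\eta > 0$), and that the $\gamma_{t-1}$ used here coincides with the information-gain quantity appearing there. Once this correspondence of constants is verified the lemma is immediate; no new probabilistic argument is needed. I would also note that the $1/4$ probability budget is deliberate --- this is one of four high-probability events combined by a union bound later in the regret analysis, yielding the overall $1-\delta$ guarantee of Theorem~\ref{theorem:dp_fts_de}.
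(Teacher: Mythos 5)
Your proposal is correct and matches the paper's treatment: the paper gives no independent proof of this lemma, simply importing it as Lemma~1 of the FTS paper, which is itself the standard RKHS self-normalized confidence bound of Chowdhury and Gopalan specialized to confidence parameter $\delta/4$ (so that $\log(1/\delta')=\log(4/\delta)$ appears in $\beta_t$). Your verification of the hypotheses and the observation that the $1/4$ budget is reserved for a later union bound are exactly the right bookkeeping; nothing further is needed.
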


\begin{lemma}[Lemma 2 of~\cite{dai2020federated}]
\label{lemma:uniform_bound_t}
Define $E^{f_t}(t)$ as the event that $|f_t(\mathbf{x}) - \mu_{t-1}(\mathbf{x})| \leq \beta_t \sqrt{2\log(|\mathcal{X}|t^2)} \sigma_{t-1}(\mathbf{x})$.
We have that $\mathbb{P}\left[E^{f_t}(t) | \mathcal{F}_{t-1}\right] \geq 1 - 1 / t^2$ for any possible filtration $\mathcal{F}_{t-1}$.
\end{lemma}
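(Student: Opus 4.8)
The plan is to exploit the fact that under Thompson sampling the function $f_t$ is drawn from the Gaussian posterior $\mathcal{GP}(\mu_{t-1}(\cdot),\beta_t^2\sigma_{t-1}^2(\cdot,\cdot))$, so that for each fixed input the deviation $f_t(\mathbf{x})-\mu_{t-1}(\mathbf{x})$ is a zero-mean Gaussian with a known variance, and then to control all inputs simultaneously via a union bound over the discrete domain $\mathcal{X}$. The key realization is that once we condition on the filtration $\mathcal{F}_{t-1}$, the only remaining randomness in $f_t$ is the fresh GP sample drawn in line 6 of Algo.~\ref{alg:BO}, which reduces the claim to an elementary tail estimate.

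First I would condition on $\mathcal{F}_{t-1}$. Once $\mathcal{F}_{t-1}$ is fixed, the posterior mean $\mu_{t-1}(\mathbf{x})$ and posterior standard deviation $\sigma_{t-1}(\mathbf{x})$ are deterministic for every $\mathbf{x}$. Hence, for each fixed $\mathbf{x}\in\mathcal{X}$, the marginal law of the sampled value $f_t(\mathbf{x})$ is $\mathcal{N}(\mu_{t-1}(\mathbf{x}),\beta_t^2\sigma_{t-1}^2(\mathbf{x}))$, so the standardized variable $Z_{\mathbf{x}}\triangleq(f_t(\mathbf{x})-\mu_{t-1}(\mathbf{x}))/(\beta_t\sigma_{t-1}(\mathbf{x}))$ is a standard normal (the degenerate case $\sigma_{t-1}(\mathbf{x})=0$ is trivial, since the defining inequality then holds with probability one). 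I would then apply the standard Gaussian tail bound $\mathbb{P}(|Z_{\mathbf{x}}|>a)\leq e^{-a^2/2}$ with the calibrated choice $a=\sqrt{2\log(|\mathcal{X}|t^2)}$, giving $\mathbb{P}(|Z_{\mathbf{x}}|>a\mid\mathcal{F}_{t-1})\leq e^{-\log(|\mathcal{X}|t^2)}=1/(|\mathcal{X}|t^2)$ for each fixed $\mathbf{x}$; equivalently, this is the probability that $|f_t(\mathbf{x})-\mu_{t-1}(\mathbf{x})|>\beta_t\sqrt{2\log(|\mathcal{X}|t^2)}\sigma_{t-1}(\mathbf{x})$.

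Finally, because $\mathcal{X}$ is discrete and hence finite, a union bound over its $|\mathcal{X}|$ elements yields $\mathbb{P}(\overline{E^{f_t}(t)}\mid\mathcal{F}_{t-1})\leq|\mathcal{X}|\cdot 1/(|\mathcal{X}|t^2)=1/t^2$, which is exactly the claimed bound $\mathbb{P}(E^{f_t}(t)\mid\mathcal{F}_{t-1})\geq 1-1/t^2$.

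There is no deep obstacle here; the statement is a direct discretized Gaussian-concentration argument. The only points requiring care are (a) the conditioning step, which is what makes the per-input marginal genuinely Gaussian with deterministic parameters and is precisely why the bound holds for \emph{any} realization of $\mathcal{F}_{t-1}$ rather than merely in expectation, and (b) the choice of the threshold $a$, calibrated so that the union-bound factor $|\mathcal{X}|$ cancels against the tail and leaves the clean $1/t^2$ decay that will later be needed for summability of the sampling-error terms across iterations. The finiteness of $\mathcal{X}$, assumed at the outset (with the compact-domain case handled by discretization), is what legitimizes the union bound.
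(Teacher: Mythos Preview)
Your argument is correct and is precisely the standard Gaussian-concentration-plus-union-bound proof underlying this lemma. Note, however, that the paper does not supply its own proof: the statement is quoted verbatim as Lemma~2 of~\cite{dai2020federated} and used as a black box, so there is nothing further to compare against.
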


Note that conditioned on both events $E^{f}(t)$ and $E^{f_t}(t)$, we have that for all $x\in \mathcal{X}$ and all $t\geq 1$:
\begin{equation}
\begin{split}
|f(\mathbf{x}) - f_t(\mathbf{x})| &\leq |f(\mathbf{x}) - \mu_{t-1}(\mathbf{x})| + |\mu_{t-1}(\mathbf{x}) - f_t(\mathbf{x})|\\
&= \beta_t\sigma_{t-1}(\mathbf{x}) + \beta_t \sqrt{2\log(|\mathcal{X}|t^2)} \sigma_{t-1}(\mathbf{x})=c_t\sigma_{t-1}(\mathbf{x}).
\end{split}
\label{eq:combine_two_events}
\end{equation}

Next, at every iteration $t$, we define a set of \emph{saturated points}, i.e., the set of ``bad'' inputs at iteration $t$.
Intuitively, these inputs are considered as ``bad'' because their corresponding function values have relatively large differences from the value of the global maximum of $f$.
\begin{definition}
\label{def:saturated_set}
At iteration $t$, define the set of saturated points as
\[
S_t = \{ \mathbf{x} \in \mathcal{X} : \Delta(\mathbf{x}) > c_t \sigma_{t-1}(\mathbf{x}) \}
\]
in which $\Delta(\mathbf{x}) \triangleq f(\mathbf{x}^*) - f(\mathbf{x})$ and $\mathbf{x}^* \in \arg\max_{\mathbf{x}\in \mathcal{X}}f(\mathbf{x})$.
\end{definition}
Note that $\Delta(\mathbf{x}^*) \triangleq f(\mathbf{x}^*) - f(\mathbf{x}^*) = 0 < c_t \sigma_{t-1}(\mathbf{x}^*)$ for all $t\geq 1$.
Therefore, $\mathbf{x}^*$ is always unsaturated for all $t\geq 1$. $S_t$ is $\mathcal{F}_{t-1}$-measurable.

Consistent with the main text, we define $\widetilde{f}^n_{t}(\mathbf{x}) \triangleq \boldsymbol{\phi}(\mathbf{x})^{\top}\boldsymbol{\omega}_{n,t}, \forall \mathbf{x} \in \mathcal{X}$, i.e., $\widetilde{f}^n_{t}$ is the sampled function from agent $\mathcal{A}_n$'s GP posterior with RFFs approximation at iteration $t$.

\begin{lemma}[Lemma 4 of~\cite{dai2020federated}]
\label{lemma:bound_gt_gn}
Given any $\delta \in (0, 1)$. 
We have that for all agents $\mathcal{A}_n,\forall n=1,\ldots,N$, all $\mathbf{x}\in \mathcal{X}$ and all $t\geq 1$, with probability of at least $1 - \delta/2$,
\[
|\widetilde{f}^n_{t}(\mathbf{x}) - f^{n}(\mathbf{x})|\leq \tilde{\Delta}_{n,t},
\]
where $\beta'_{t} = B+\sigma\sqrt{2(\gamma_{t-1} + 1 + \log(8N/\delta)}$, and
\[
\tilde{\Delta}_{n,t} \triangleq \varepsilon\frac{(t+1)^2}{\lambda}\left(B +  \sqrt{2\log\left(\frac{4\pi^2t^2N}{3\delta}\right)}  \right) + \beta'_{t+1} + \sqrt{2\log\frac{2\pi^2t^2N}{3\delta} + M}.
\]
\end{lemma}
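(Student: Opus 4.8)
The plan is to bound the target quantity $|\widetilde{f}^n_t(\mathbf{x}) - f^n(\mathbf{x})|$ by a three-way triangle inequality that inserts two intermediate objects: the exact GP posterior mean $\mu^n_t(\mathbf{x})$ of agent $\mathcal{A}_n$, and the RFF-approximate posterior mean $\boldsymbol{\phi}(\mathbf{x})^\top\boldsymbol{\nu}_t$. Writing
\[
|\widetilde{f}^n_t(\mathbf{x}) - f^n(\mathbf{x})| \leq \underbrace{|\boldsymbol{\phi}(\mathbf{x})^\top\boldsymbol{\omega}_{n,t} - \boldsymbol{\phi}(\mathbf{x})^\top\boldsymbol{\nu}_t|}_{\text{(A) sampling deviation}} + \underbrace{|\boldsymbol{\phi}(\mathbf{x})^\top\boldsymbol{\nu}_t - \mu^n_t(\mathbf{x})|}_{\text{(B) RFF error}} + \underbrace{|\mu^n_t(\mathbf{x}) - f^n(\mathbf{x})|}_{\text{(C) GP confidence}},
\]
the three summands map onto the three terms of $\tilde{\Delta}_{n,t}$, and each can be controlled on a high-probability event whose failure probability I will keep within a $\delta$-budget via union bounds over the $N$ agents and over iterations $t$ (using $\sum_t 1/t^2 = \pi^2/6$, which is the source of the $\pi^2 t^2$ constants inside the logarithms).

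For term (C) I would invoke the standard RKHS/GP confidence bound (the self-normalized concentration of Abbasi--Yadkori / Chowdhury--Gopalan): conditioned on $\norm{f^n}_k \leq B$, with high probability $|\mu^n_t(\mathbf{x}) - f^n(\mathbf{x})| \leq \beta'_{t+1}\sigma^n_t(\mathbf{x})$ uniformly in $\mathbf{x}$, where $\beta'_{t+1} = B + \sigma\sqrt{2(\gamma_t + 1 + \log(8N/\delta))}$; the $\log(8N/\delta)$ is precisely what a union bound over the $N$ agents demands. Since $k(\mathbf{x},\mathbf{x}) \leq 1$ forces $\sigma^n_t(\mathbf{x}) \leq 1$, this term collapses to $\beta'_{t+1}$, the middle summand of $\tilde{\Delta}_{n,t}$. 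Term (A) is then pure Gaussian-sampling concentration conditioned on the data: since $\boldsymbol{\omega}_{n,t} \sim \mathcal{N}(\boldsymbol{\nu}_t, \lambda\boldsymbol{\Sigma}_t^{-1})$, I would reparametrize $\boldsymbol{\omega}_{n,t} - \boldsymbol{\nu}_t = \sqrt{\lambda}\,\boldsymbol{\Sigma}_t^{-1/2}\mathbf{g}$ with $\mathbf{g}\sim\mathcal{N}(\mathbf{0},\mathbf{I})$, so that Cauchy--Schwarz gives $|\boldsymbol{\phi}(\mathbf{x})^\top(\boldsymbol{\omega}_{n,t}-\boldsymbol{\nu}_t)| \leq \sqrt{\lambda\,\boldsymbol{\phi}(\mathbf{x})^\top\boldsymbol{\Sigma}_t^{-1}\boldsymbol{\phi}(\mathbf{x})}\,\norm{\mathbf{g}}_2$. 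The prefactor is the RFF posterior standard deviation, bounded by $\sqrt{1+\varepsilon}$ (hence essentially $1$) using $\boldsymbol{\Sigma}_t \succeq \lambda\mathbf{I}$ together with $\boldsymbol{\phi}(\mathbf{x})^\top\boldsymbol{\phi}(\mathbf{x}) \leq k(\mathbf{x},\mathbf{x})+\varepsilon$, while a chi-squared tail bound yields $\norm{\mathbf{g}}_2 \leq \sqrt{2\log(2\pi^2 t^2 N/(3\delta)) + M}$ with the required probability; this is the third summand.

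Term (B), the RFF approximation error, is where the real work lies and is the step I expect to be the main obstacle. The idea is to begin from the uniform RFF guarantee $\sup_{\mathbf{x},\mathbf{x}'}|k(\mathbf{x},\mathbf{x}') - \boldsymbol{\phi}(\mathbf{x})^\top\boldsymbol{\phi}(\mathbf{x}')| \leq \varepsilon$ and propagate it through the gap between the exact posterior mean $\mathbf{k}_t(\mathbf{x})^\top(\mathbf{K}_t+\lambda\mathbf{I})^{-1}\mathbf{y}_t$ and its RFF counterpart $\boldsymbol{\phi}(\mathbf{x})^\top\boldsymbol{\Sigma}_t^{-1}\boldsymbol{\Phi}_t^\top\mathbf{y}_t$. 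This requires matrix-perturbation bounds comparing $(\mathbf{K}_t+\lambda\mathbf{I})^{-1}$ with the RFF Gram operator, controlling operator norms through $\boldsymbol{\Sigma}_t \succeq \lambda\mathbf{I}$ and bounding $\norm{\mathbf{y}_t}$, which together produce the $(t+1)^2/\lambda$ growth and the factor $B + \sqrt{2\log(4\pi^2 t^2 N/(3\delta))}$ (the latter bounding the magnitude of the observations with high probability). Carefully accounting for every $\varepsilon$-order term and tracking the exact logarithmic constants so that all the per-agent, per-iteration failure events sum to at most $\delta/2$ is the fiddly part; once (B) is in hand, adding (A)--(C) gives $\tilde{\Delta}_{n,t}$, and the final union bound over $n\in[N]$ and $t\geq 1$ completes the argument.
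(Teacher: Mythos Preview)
Your three-term decomposition and the handling of each term are correct and constitute exactly the standard argument for this type of bound. Note, however, that the paper does not actually supply its own proof of this lemma: it is quoted verbatim as Lemma~4 of~\cite{dai2020federated} and the paper only remarks that the number of observations per agent grows with $t$ here rather than being a fixed constant $t_n$. Your sketch is the natural reconstruction of that cited proof, and the mapping you give---(C) to the $\beta'_{t+1}$ term via the Chowdhury--Gopalan confidence bound with a union over $N$ agents, (A) to the $\sqrt{2\log(\cdot)+M}$ term via Gaussian/chi-squared concentration on the sampled $\boldsymbol{\omega}_{n,t}$, and (B) to the $\varepsilon(t+1)^2/\lambda$ term via matrix-perturbation control of the RFF vs.\ exact posterior mean---is the right one. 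One small sharpening: you bound the RFF posterior standard deviation prefactor in (A) by $\sqrt{1+\varepsilon}$, but in this paper's construction the random features satisfy $\norm{\boldsymbol{\phi}(\mathbf{x})}_2^2=\sigma_0^2\leq 1$ exactly (see the justification of step $(a)$ in the bound on $A_6$ later in the appendix), so the prefactor is at most $1$ without the $\varepsilon$ slack.
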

Note that a difference between our Lemma~\ref{lemma:bound_gt_gn} above and Lemma 4 of~\cite{dai2020federated} is that in their proof, they assumed that the number of observations from agent $\mathcal{A}_n$ is a constant $t_n$; in contrast, we have made use of the fact that in the setting of our DP-FTS-DE algorithm, the number of observations from the other agents are growing with $t$ because all agents are running DP-FTS-DE concurrently.
Furthermore, we define 
\begin{equation}
\tilde{\Delta}^{(i)}_{t}\triangleq \sum^N_{n=1}\varphi^{(i)}_n \tilde{\Delta}_{n,t}.
\label{eq:define_delta_tilde_it}
\end{equation}

The next lemma gives a uniform upper bound on the difference between the sampled function $f_t$ from agent $\mathcal{A}_1$ and a weighted combination of the sampled function from all agents, 
which holds throughout all sub-regions $\mathcal{X}_i,\forall i\in[P]$.
\begin{lemma}
\label{lemma:bound_gt_ft}
Denote by $\varepsilon$ an upper bound on the approximation error of RFFs approximation (Sec.~\ref{sec:background}): $\sup_{\mathbf{x},\mathbf{x}'\in \mathcal{X}}|k(\mathbf{x}, \mathbf{x}') - \boldsymbol{\phi}(\mathbf{x})^{\top}\boldsymbol{\phi}(\mathbf{x}')| \leq \varepsilon$.
At iteration $t$, conditioned on the events $E^{f}(t)$ and $E^{f_t}(t)$, we have that for 
all $\mathbf{x}\in \mathcal{X}$ and all $i\in[P]$, with probability $\geq 1 - \delta/2$,
\[
|\sum^N_{n=1}\varphi^{(i)}_n \widetilde{f}^n_t(\mathbf{x}) - f_t(\mathbf{x})|\leq \Delta^{(i)}_{t},
\]
in which $\Delta^{(i)}_{t} \triangleq \sum^N_{n=1} \varphi^{(i)}_n \Delta_{n,t}$, and
\begin{equation}
\begin{split}
\Delta_{n, t} &\triangleq \tilde{\Delta}_{n,t} + d_n + c_t\\
&=\varepsilon\frac{(t+1)^2}{\lambda}\left(B +  \sqrt{2\log\left(\frac{4\pi^2t^2N}{3\delta}\right)}  \right) + \beta'_{t+1} + \sqrt{2\log\frac{2\pi^2t^2N}{3\delta} + M} + d_n + c_t\\
&=\tilde{\mathcal{O}}(\varepsilon B t^2 + B + \sqrt{M} + d_n + \sqrt{\gamma_t}).
\end{split}
\end{equation}
\end{lemma}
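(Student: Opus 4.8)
The plan is to first establish the bound separately for each agent $n$, and then take the $\varphi^{(i)}_n$-weighted average, exploiting that these weights form a probability distribution.

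First I would fix an arbitrary $\mathbf{x}\in\mathcal{X}$, sub-region index $i\in[P]$ and iteration $t\geq 1$, and split via the triangle inequality: $|\widetilde{f}^n_t(\mathbf{x}) - f_t(\mathbf{x})| \leq |\widetilde{f}^n_t(\mathbf{x}) - f^n(\mathbf{x})| + |f^n(\mathbf{x}) - f^1(\mathbf{x})| + |f^1(\mathbf{x}) - f_t(\mathbf{x})|$. The first term is at most $\tilde{\Delta}_{n,t}$ by Lemma~\ref{lemma:bound_gt_gn}, whose event holds simultaneously for all $n\in[N]$, all $\mathbf{x}\in\mathcal{X}$ and all $t$ with probability at least $1-\delta/2$. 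The second term is at most $d_n$ directly from the definition $d_n\triangleq\max_{\mathbf{x}\in\mathcal{X}}|f^1(\mathbf{x})-f^n(\mathbf{x})|$. The third term is at most $c_t\sigma_{t-1}(\mathbf{x})\leq c_t$: the first inequality is~\eqref{eq:combine_two_events}, available because we condition on $E^f(t)$ and $E^{f_t}(t)$, and the second follows from $\sigma^2_{t-1}(\mathbf{x},\mathbf{x})\leq k(\mathbf{x},\mathbf{x})\leq 1$ (since $(\mathbf{K}_{t-1}+\lambda\mathbf{I})^{-1}$ is positive semidefinite and $k\leq 1$). Combining, $|\widetilde{f}^n_t(\mathbf{x}) - f_t(\mathbf{x})|\leq \tilde{\Delta}_{n,t}+d_n+c_t=\Delta_{n,t}$.

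Next, since the softmax weights satisfy $\sum^N_{n=1}\varphi^{(i)}_n=1$ for every $i\in[P]$, I would write $\sum^N_{n=1}\varphi^{(i)}_n\widetilde{f}^n_t(\mathbf{x}) - f_t(\mathbf{x}) = \sum^N_{n=1}\varphi^{(i)}_n(\widetilde{f}^n_t(\mathbf{x}) - f_t(\mathbf{x}))$ and bound its absolute value by $\sum^N_{n=1}\varphi^{(i)}_n |\widetilde{f}^n_t(\mathbf{x}) - f_t(\mathbf{x})| \leq \sum^N_{n=1}\varphi^{(i)}_n \Delta_{n,t}=\Delta^{(i)}_t$. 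Because the per-agent bound $\Delta_{n,t}$ does not involve $\mathbf{x}$ or $i$, the resulting inequality holds uniformly over all $\mathbf{x}\in\mathcal{X}$ and all $i\in[P]$ on the same event (probability $\geq 1-\delta/2$) inherited from Lemma~\ref{lemma:bound_gt_gn}. The claimed asymptotic form $\Delta_{n,t}=\tilde{\mathcal{O}}(\varepsilon B t^2 + B + \sqrt{M} + d_n + \sqrt{\gamma_t})$ then follows by plugging in the definitions of $\tilde{\Delta}_{n,t}$, $\beta'_{t+1}$, $c_t=\beta_t(1+\sqrt{2\log(|\mathcal{X}|t^2)})$ and $\beta_t$, using $\lambda=1+2/T\geq 1$ and $\gamma_{t-1}\leq\gamma_t$, and absorbing all logarithmic factors into $\tilde{\mathcal{O}}$.

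The argument is essentially routine once Lemma~\ref{lemma:bound_gt_gn} is available; the only point needing care is the probability bookkeeping. Specifically, the high-probability event of Lemma~\ref{lemma:bound_gt_gn} concerns the RFF-sampled functions $\widetilde{f}^n_t$, whose randomness (conditioned on $\mathcal{F}_{t-1}$) is a fresh draw independent of the exact posterior sample $f_t$ that defines $E^{f_t}(t)$; hence conditioning on $E^f(t)$ and $E^{f_t}(t)$ does not perturb its probability, and the three term-wise bounds can be combined on a single event without any additional union bound. I expect this to be the only subtlety — there is no hard inequality to establish.
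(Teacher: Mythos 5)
Your proof is correct and follows essentially the same route as the paper: the paper also pulls $f_t(\mathbf{x})$ inside the sum using $\sum_n\varphi^{(i)}_n=1$ and then bounds each $|\widetilde{f}^n_t(\mathbf{x})-f_t(\mathbf{x})|$ by $\Delta_{n,t}$, merely citing Lemma~5 of the FTS paper for that per-agent step, which is exactly the three-term decomposition ($\tilde{\Delta}_{n,t}+d_n+c_t$ via Lemma~\ref{lemma:bound_gt_gn}, the definition of $d_n$, and~\eqref{eq:combine_two_events}) that you spell out inline.
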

\begin{proof}

\begin{equation}
\begin{split}
|\sum^N_{n=1}\varphi^{(i)}_n \widetilde{f}^n_t(\mathbf{x}) - f_t(\mathbf{x})|=|\sum^N_{n=1}\varphi^{(i)}_n \widetilde{f}^n_t(\mathbf{x}) - \sum^N_{n=1}\varphi^{(i)}_n f_t(\mathbf{x})| &\leq \sum^N_{n=1}\varphi^{(i)}_n|\widetilde{f}^n_t(\mathbf{x})-f_t(\mathbf{x})|\\
&\leq \sum^N_{n=1} \varphi^{(i)}_n\Delta_{n,t},
\end{split}
\end{equation}
where the last inequality results from Lemma 5 of~\cite{dai2020federated}.
\end{proof}
Note that Lemma~\ref{lemma:bound_gt_ft} above takes into account our modifications to the original FTS algorithm~\cite{dai2020federated} by including a central server and using an aggregation (i.e., weighted average) of the vectors from all agents (first paragraph of Sec.~\ref{subsec:dp_fts}).
Next, define $\Delta_t\triangleq\sum^N_{n=1}\Delta_{n,t}$. Note that $\tilde{\Delta}^{(i)}_t\leq \Delta^{(i)}_t\leq \Delta_t,\forall i\in[P]$, and that 
\begin{equation}
\sum^N_{n=1}\tilde{\Delta}_{n,t} \leq \sum^N_{n=1}\Delta_{n,t}=\Delta_t,
\label{eq:upper_bound_Delta}
\end{equation}
which will be useful in subsequent proofs.

\subsubsection{Main Proof}
The following lemma lower-bounds the probability that the selected input $\mathbf{x}_t$ is unsaturated.
\begin{lemma}[Lemma 7 of~\cite{dai2020federated}]
\label{lemma:prob_unsaturated}
For any filtration $\mathcal{F}_{t-1}$, conditioned on the event $E^f(t)$, we have that with probability $\geq 1 - \delta/2$,
\[
\mathbb{P}\left(\mathbf{x}_t \in \mathcal{X}\setminus S_t | \mathcal{F}_{t-1} \right) \geq P_t,
\]
in which $P_t \triangleq p_t (p - 1/t^2)$ and $p=\frac{1}{4e\sqrt{\pi}}$.
\end{lemma}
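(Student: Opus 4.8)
The plan is to reduce the statement to the classical Thompson-sampling ``unsaturated-point'' argument, since the claim concerns only the branch in which $\mathcal{A}_1$ queries using its own GP posterior; the central-server aggregation and the distributed-exploration query rule enter only through the complementary branch, which contributes non-negatively and can simply be discarded. First I would decompose $\{\mathbf{x}_t \in \mathcal{X}\setminus S_t\}$ according to the coin flip $r\sim U(0,1)$ in line~$4$ of Algo.~\ref{alg:BO}: event $A_t$ (probability $p_t$), in which $\mathbf{x}_t = \arg\max_{\mathbf{x}\in\mathcal{X}} f^1_t(\mathbf{x})$, and event $B_t$ (probability $1-p_t$), in which $\mathbf{x}_t$ is chosen via the aggregated vectors $\{\boldsymbol{\omega}^{(i)}_t\}_{i\in[P]}$. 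Since $r$ is drawn independently of the GP sample $f^1_t$, this gives $\mathbb{P}(\mathbf{x}_t\in\mathcal{X}\setminus S_t\mid\mathcal{F}_{t-1}) \ge p_t\,\mathbb{P}(\arg\max_{\mathbf{x}} f^1_t(\mathbf{x}) \in \mathcal{X}\setminus S_t \mid\mathcal{F}_{t-1})$, and the $B_t$ contribution (whose query may well be saturated) is dropped.

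Next I would show that, on $E^f(t)\cap E^{f_t}(t)$, the single scalar event $\{f^1_t(\mathbf{x}^*) > f(\mathbf{x}^*)\}$ already forces $\arg\max_{\mathbf{x}}f^1_t(\mathbf{x})$ to be unsaturated: for every $\mathbf{x}\in S_t$, inequality~\eqref{eq:combine_two_events} together with Definition~\ref{def:saturated_set} gives $f^1_t(\mathbf{x}) \le f(\mathbf{x}) + c_t\sigma_{t-1}(\mathbf{x}) < f(\mathbf{x}) + \Delta(\mathbf{x}) = f(\mathbf{x}^*) < f^1_t(\mathbf{x}^*)$, so the maximizer of $f^1_t$ differs from every saturated point. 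It then remains to lower-bound $\mathbb{P}(f^1_t(\mathbf{x}^*) > f(\mathbf{x}^*)\mid\mathcal{F}_{t-1})$: on $E^f(t)$ (Lemma~\ref{lemma:uniform_bound}) we have $\mu_{t-1}(\mathbf{x}^*)\ge f(\mathbf{x}^*) - \beta_t\sigma_{t-1}(\mathbf{x}^*)$, and since $f^1_t(\mathbf{x}^*)\sim\mathcal{N}(\mu_{t-1}(\mathbf{x}^*),\beta_t^2\sigma_{t-1}^2(\mathbf{x}^*))$, a standard Gaussian anti-concentration bound yields $\mathbb{P}(f^1_t(\mathbf{x}^*) > f(\mathbf{x}^*)\mid\mathcal{F}_{t-1}) \ge \mathbb{P}(Z>1) \ge \tfrac{1}{4e\sqrt{\pi}}=p$ for a standard normal $Z$. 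Subtracting the failure probability $\mathbb{P}(\neg E^{f_t}(t)\mid\mathcal{F}_{t-1}) \le 1/t^2$ (Lemma~\ref{lemma:uniform_bound_t}) and multiplying by $p_t$ from the coin flip gives $\mathbb{P}(\mathbf{x}_t\in\mathcal{X}\setminus S_t\mid\mathcal{F}_{t-1})\ge p_t(p-1/t^2)=P_t$. The ``with probability $\ge 1-\delta/2$'' qualifier is inherited from keeping the RFF-approximation events of Lemmas~\ref{lemma:bound_gt_gn}--\ref{lemma:bound_gt_ft} in force throughout the regret proof; it plays no active role in this particular argument.

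I expect the main --- and rather mild --- obstacle to be purely a matter of bookkeeping: verifying that the DP-FTS-DE modifications are genuinely inert here, i.e.\ that $\mathcal{F}_{t-1}$ records only $\mathcal{A}_1$'s own history, so that the (growing) histories of the other agents, the subsampling indicators $\mathbb{I}_n$, and the Gaussian noise $\boldsymbol{\eta}$ are all integrated out harmlessly when computing the conditional probability, and that dropping the $B_t$-branch is legitimate. Apart from this, the analytic core is identical to Lemma~7 of~\cite{dai2020federated}.
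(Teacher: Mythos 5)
Your proposal is correct and follows essentially the same route as the paper: the paper's own proof consists precisely of the decomposition over the coin flip, dropping the $B_t$ branch to extract the factor $p_t$, and then invoking Lemma~7 of~\cite{dai2020federated} for the bound $\mathbb{P}\left(\mathbf{x}_t \in \mathcal{X}\setminus S_t \mid \mathcal{F}_{t-1},A_t \right) \geq p - 1/t^2$, whose internals (the anti-concentration argument at $\mathbf{x}^*$ combined with the failure probability of $E^{f_t}(t)$) you have reconstructed accurately. The only minor quibble is your attribution of the ``$\geq 1-\delta/2$'' qualifier to the RFF lemmas, which the paper does not spell out either, and which is indeed inert here.
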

\begin{proof}
Firstly, we have that
\begin{equation}
\begin{split}
\mathbb{P}&\left(\mathbf{x}_t \in \mathcal{X}\setminus S_t | \mathcal{F}_{t-1} \right) \geq \mathbb{P}\left(\mathbf{x}_t \in \mathcal{X}\setminus S_t | \mathcal{F}_{t-1},A_t \right)\mathbb{P}(A_t)=\mathbb{P}\left(\mathbf{x}_t \in \mathcal{X}\setminus S_t | \mathcal{F}_{t-1},A_t \right)p_t.
\end{split}
\label{eq:unsaturated_prob_over_all_eq}
\end{equation}
Next, we can lower-bound
the probability $\mathbb{P}\left(\mathbf{x}_t \in \mathcal{X}\setminus S_t | \mathcal{F}_{t-1},A_t \right)$
following Lemma 7 of~\cite{dai2020federated}, which 
leads to $\mathbb{P}\left(\mathbf{x}_t \in \mathcal{X}\setminus S_t | \mathcal{F}_{t-1},A_t \right) \geq (p - 1/t^2)$
and completes the proof.
\end{proof}

Next, we derive an upper bound on the expected instantaneous regret of our DP-FTS-DE algorithm.
\begin{lemma}
\label{lemma:upper_bound_expected_regret}
For any filtration $\mathcal{F}_{t-1}$, conditioned on the event $E^{f}(t)$, we have that with probability of $\geq 1 - 5\delta/8$
\[
\mathbb{E}[r_t |\mathcal{F}_{t-1}] \leq c_t\left(1 + \frac{10}{p p_1}\right)\mathbb{E}\left[\sigma_{t-1}(x_t)| \mathcal{F}_{t-1} \right] + 4B\mathbb{E}\left[\vartheta_t |\mathcal{F}_{t-1}\right] + \psi_t + \frac{2B}{t^2},
\]
in which $r_t$ is the instantaneous regret: $r_t \triangleq f(\mathbf{x}^*) - f(\mathbf{x}_t)$, $\vartheta_t \triangleq (1-p_t)\sum^P_{i=1} \widetilde{\varphi}^{(i)}_{\mathcal{C}_t}$, and
\[
\psi_t\triangleq (1-p_t)P\Bigg[\left(\frac{\varphi_{\max} + 2}{q}+6\right) \Delta_{t} +B\left(\frac{2}{q} +\frac{N\varphi_{\max}}{q} \right) + \frac{2zS\varphi_{\max}}{q}\sqrt{2M\log\frac{8M}{\delta}}\Bigg].
\]
\end{lemma}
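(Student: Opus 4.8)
The plan is to follow the Thompson-sampling regret decomposition of~\cite{dai2020federated}, adapting it to account for the weighted aggregation, the subsampling, the clipping, and the added Gaussian noise. I work conditioned on $E^f(t)$ throughout, and additionally bring in $E^{f_t}(t)$, which by Lemma~\ref{lemma:uniform_bound_t} fails with probability at most $1/t^2$ given $\mathcal{F}_{t-1}$; on its complement I use the crude bound $r_t\leq 2B$ (valid since $\norm{f}_\infty\leq B$), which produces the $2B/t^2$ term. On the good event I introduce the best unsaturated point $\bar{\mathbf{x}}_t\triangleq\arg\min_{\mathbf{x}\in\mathcal{X}\setminus S_t}\sigma_{t-1}(\mathbf{x})$ and split $r_t=\Delta(\bar{\mathbf{x}}_t)+(f(\bar{\mathbf{x}}_t)-f(\mathbf{x}_t))$, where $\Delta(\bar{\mathbf{x}}_t)\leq c_t\sigma_{t-1}(\bar{\mathbf{x}}_t)$ since $\bar{\mathbf{x}}_t\notin S_t$ (Definition~\ref{def:saturated_set}).

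For $f(\bar{\mathbf{x}}_t)-f(\mathbf{x}_t)$ I split on how $\mathbf{x}_t$ is chosen. On event $A_t$ (own TS, probability $p_t$), maximality of $f_t$ at $\mathbf{x}_t$ together with~\eqref{eq:combine_two_events} gives $f(\bar{\mathbf{x}}_t)-f(\mathbf{x}_t)\leq c_t(\sigma_{t-1}(\bar{\mathbf{x}}_t)+\sigma_{t-1}(\mathbf{x}_t))$, exactly as in~\cite{dai2020federated}. On event $B_t$ (probability $1-p_t$), write $g^{(i)}_t(\mathbf{x})\triangleq\boldsymbol{\phi}(\mathbf{x})^\top\boldsymbol{\omega}^{(i)}_t$ with $\boldsymbol{\omega}^{(i)}_t$ as in~\eqref{eq:define:omega:i:t:camera}; since $\mathbf{x}_t$ maximizes $g^{(i^{[\mathbf{x}]})}_t(\mathbf{x})$ over $\mathcal{X}$, I obtain $f(\bar{\mathbf{x}}_t)-f(\mathbf{x}_t)\leq 2\max_{i\in[P]}\sup_{\mathbf{x}\in\mathcal{X}}\abs{f(\mathbf{x})-g^{(i)}_t(\mathbf{x})}\leq 2\sum_{i=1}^P\sup_{\mathbf{x}\in\mathcal{X}}\abs{f(\mathbf{x})-g^{(i)}_t(\mathbf{x})}$, the final step introducing the factor $P$. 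I then bound $\sup_{\mathbf{x}}\abs{f(\mathbf{x})-g^{(i)}_t(\mathbf{x})}$ by the triangle inequality through $f_t(\mathbf{x})$ and $\sum_n\varphi^{(i)}_n\widetilde{f}^n_t(\mathbf{x})$: the first gap is $\leq c_t\sigma_{t-1}(\mathbf{x})\leq c_t$ by~\eqref{eq:combine_two_events}, the second is $\leq\Delta^{(i)}_t\leq\Delta_t$ by Lemma~\ref{lemma:bound_gt_ft}, and the third decomposes into a clipping error $\sum_n\varphi^{(i)}_n\boldsymbol{\phi}(\mathbf{x})^\top(\boldsymbol{\omega}_{n,t}-\widehat{\boldsymbol{\omega}}_{n,t})$, a subsampling error $\sum_n\varphi^{(i)}_n\boldsymbol{\phi}(\mathbf{x})^\top\widehat{\boldsymbol{\omega}}_{n,t}-q^{-1}\sum_n\mathbb{I}_n\varphi^{(i)}_n\boldsymbol{\phi}(\mathbf{x})^\top\widehat{\boldsymbol{\omega}}_{n,t}$, and the noise term $\boldsymbol{\phi}(\mathbf{x})^\top\boldsymbol{\eta}$.

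Each of these three pieces is controlled from results already in hand. The clipping error vanishes for $n\notin\mathcal{C}_t$ and, for $n\in\mathcal{C}_t$, is at most $2\abs{\boldsymbol{\phi}(\mathbf{x})^\top\boldsymbol{\omega}_{n,t}}=2\abs{\widetilde{f}^n_t(\mathbf{x})}\leq 2(B+\tilde{\Delta}_{n,t})$ by~\eqref{eq:clip_make_norm_smaller}, Lemma~\ref{lemma:bound_gt_gn}, and $\norm{f^n}_\infty\leq B$; summing over $n$ gives $\leq 2B\widetilde{\varphi}^{(i)}_{\mathcal{C}_t}+2\tilde{\Delta}^{(i)}_t\leq 2B\widetilde{\varphi}^{(i)}_{\mathcal{C}_t}+2\Delta_t$, and the term $2B\widetilde{\varphi}^{(i)}_{\mathcal{C}_t}$ is what, after summing over $i$, multiplying by the overall factor $2$, and by $(1-p_t)$, becomes the $4B\vartheta_t$ term. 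The subsampling error I bound deterministically, treating its two halves separately: $\abs{\sum_n\varphi^{(i)}_n\boldsymbol{\phi}(\mathbf{x})^\top\widehat{\boldsymbol{\omega}}_{n,t}}\leq B+\tilde{\Delta}^{(i)}_t$ using $\sum_n\varphi^{(i)}_n=1$, and $q^{-1}\abs{\sum_{n\in\mathcal{S}_t}\varphi^{(i)}_n\boldsymbol{\phi}(\mathbf{x})^\top\widehat{\boldsymbol{\omega}}_{n,t}}\leq (\varphi_{\max}/q)(NB+\sum_n\tilde{\Delta}_{n,t})$, both via $\abs{\boldsymbol{\phi}(\mathbf{x})^\top\widehat{\boldsymbol{\omega}}_{n,t}}\leq\abs{\widetilde{f}^n_t(\mathbf{x})}\leq B+\tilde{\Delta}_{n,t}$; these yield the $N\varphi_{\max}B/q$ and $\varphi_{\max}\Delta_t/q$ contributions. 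The noise term is bounded via $\abs{\boldsymbol{\phi}(\mathbf{x})^\top\boldsymbol{\eta}}\leq\norm{\boldsymbol{\phi}(\mathbf{x})}_2\norm{\boldsymbol{\eta}}_2$ with $\norm{\boldsymbol{\phi}(\mathbf{x})}_2^2\leq 1+\varepsilon$ (RFF approximation) and, by a union bound over the $M$ Gaussian coordinates (the constant in $8M/\delta$ chosen so this costs probability $\delta/8$), $\norm{\boldsymbol{\eta}}_2\leq\sqrt{M}\,(z\varphi_{\max}S/q)\sqrt{2\log(8M/\delta)}$, giving the $zS\varphi_{\max}q^{-1}\sqrt{M\log(M/\delta)}$ term. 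Collecting all of these over the $P$ sub-regions and absorbing $c_t$ into $\Delta_t$ (using $c_t\leq\Delta_{n,t}$) produces $\psi_t$ together with the clipping contribution to $\vartheta_t$.

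Finally I take $\mathbb{E}[\,\cdot\mid\mathcal{F}_{t-1}]$. The coin flip defining $A_t$ versus $B_t$ is independent, given $\mathcal{F}_{t-1}$, of the aggregation randomness, so the $B_t$ bounds acquire the factor $(1-p_t)$; $\mathcal{C}_t$ is not $\mathcal{F}_{t-1}$-measurable, which is precisely why the clipping contribution must be kept as $\mathbb{E}[\vartheta_t\mid\mathcal{F}_{t-1}]$ rather than a (possibly vacuous) deterministic bound. To remove $\sigma_{t-1}(\bar{\mathbf{x}}_t)$ I use Lemma~\ref{lemma:prob_unsaturated}, i.e.\ $\mathbb{E}[\sigma_{t-1}(\mathbf{x}_t)\mid\mathcal{F}_{t-1}]\geq P_t\,\sigma_{t-1}(\bar{\mathbf{x}}_t)$ with $P_t=p_t(p-1/t^2)$; since $p_t\geq p_1$ and $p-1/t^2$ is at least a fixed positive fraction of $p$ once $t$ exceeds a fixed threshold (the finitely many earlier iterations contributing only a constant, absorbed via $r_t\leq 2B$), the resulting prefactor $2c_t/P_t+c_t$ is dominated by $c_t(1+10/(p p_1))$. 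Adding the $2B/t^2$ term from the failure of $E^{f_t}(t)$ gives the stated inequality, with total failure probability the $\delta/2$ shared between Lemmas~\ref{lemma:bound_gt_gn} and~\ref{lemma:prob_unsaturated} plus $\delta/8$ from the noise, i.e.\ $5\delta/8$. The main obstacle is the bookkeeping in this last step: tracking which quantities are random given $\mathcal{F}_{t-1}$ (so that $\mathcal{C}_t$ surfaces correctly as $\mathbb{E}[\vartheta_t\mid\mathcal{F}_{t-1}]$), verifying the independence of the TS coin flip from the DP noise and subsampling, and combining all the per-source failure probabilities and constants into the clean form claimed.
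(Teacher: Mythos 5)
Your proposal follows the paper's argument in all essentials: the same saturated-point decomposition via $\overline{\mathbf{x}}_t$, the same split on $A_t$ versus $B_t$ with the $2B/t^2$ term from the failure of $E^{f_t}(t)$, the same telescoping of the aggregated vector into clipping, subsampling, and noise errors controlled by Lemma~\ref{lemma:bound_gt_gn}, Lemma~\ref{lemma:bound_gt_ft}, and Gaussian concentration, and the same probability accounting ($\delta/2$ from the RFF events plus $\delta/8$ from the noise). The one place you genuinely deviate is the subsampling error, and it is the one place where your route does not land on the $\psi_t$ stated in the lemma. You bound $\sum_n\varphi^{(i)}_n\boldsymbol{\phi}(\mathbf{x})^\top\widehat{\boldsymbol{\omega}}_{n,t}-q^{-1}\sum_n\mathbb{I}_n\varphi^{(i)}_n\boldsymbol{\phi}(\mathbf{x})^\top\widehat{\boldsymbol{\omega}}_{n,t}$ by controlling the two sums separately, obtaining $(B+\tilde{\Delta}^{(i)}_t)+(\varphi_{\max}/q)(NB+\Delta_t)$ per evaluation point; the paper instead rewrites the difference as $q^{-1}\sum_n(q-\mathbb{I}_n)\varphi^{(i)}_n\boldsymbol{\phi}(\mathbf{x})^\top\widehat{\boldsymbol{\omega}}_{n,t}$ and uses $|q-\mathbb{I}_n|\leq 1$ termwise, which yields $q^{-1}(\tilde{\Delta}^{(i)}_t+B)$ per point and hence exactly the $\frac{2}{q}(\Delta_t+B)$ piece of $\psi_t$ (the $\frac{\varphi_{\max}}{q}(\Delta_t+NB)$ piece there comes from a separate sub-region--mismatch term $A_3$, which your choice of comparing $f$ directly to $g^{(i^{[\mathbf{x}]})}_t$ at each point avoids). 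As a result your coefficients, roughly $2+2N\varphi_{\max}/q$ on $B$ and $10+2\varphi_{\max}/q$ on $\Delta_t$ after doubling over the two points, versus the stated $\frac{2}{q}+\frac{N\varphi_{\max}}{q}$ and $6+\frac{2+\varphi_{\max}}{q}$, are neither equal to nor uniformly dominated by the stated ones (they can be larger, e.g.\ for $q$ near $1$), so as written you prove the lemma only with a modified $\psi_t$. This is immaterial for Theorem~\ref{theorem:dp_fts_de}, which uses $\psi_t$ only up to $\tilde{\mathcal{O}}$, but to recover the displayed constants you need the paper's termwise telescoping. Everything else, including keeping the clipping contribution inside $\mathbb{E}[\vartheta_t\mid\mathcal{F}_{t-1}]$ because $\mathcal{C}_t$ is not $\mathcal{F}_{t-1}$-measurable, and the handling of $2/P_t\leq 10/(pp_1)$, matches the paper.
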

\begin{proof}
Firstly, we define $\overline{\mathbf{x}}_t$ as the unsaturated input at iteration $t$ with the smallest posterior standard deviation according to agent $\mathcal{A}_1$'s own GP posterior:
\begin{equation}
\overline{\mathbf{x}}_t \triangleq {\arg\min}_{\mathbf{x}\in\mathcal{X}\setminus S_t}\sigma_{t-1}(\mathbf{x}).
\end{equation}
Following this definition, for any $\mathcal{F}_{t-1}$ such that $E^{f}(t)$ is true, we have that 
\begin{equation}
\begin{split}
\mathbb{E}\left[\sigma_{t-1}(\mathbf{x}_t) | \mathcal{F}_{t-1}\right] &\geq \mathbb{E}\left[\sigma_{t-1}(\mathbf{x}_t) | \mathcal{F}_{t-1}, \mathbf{x}_t \in \mathcal{X} \setminus S_t\right]\mathbb{P}\left(\mathbf{x}_t \in \mathcal{X} \setminus S_t | \mathcal{F}_{t-1}\right)\\
&\geq \sigma_{t-1}(\overline{\mathbf{x}}_t)P_t,
\end{split}
\label{eq:bound_x_t_bar_Pt}
\end{equation}
in which the last inequality follows from the definition of $\overline{\mathbf{x}}_t$ and Lemma~\ref{lemma:prob_unsaturated}.

Next, conditioned on both events $E^{f}(t)$ and $E^{f_t}(t)$, we have that
\begin{equation}
\begin{split}
r_t=\Delta(\mathbf{x}_t)&=f(\mathbf{x}^*) - f(\overline{\mathbf{x}}_t) + f(\overline{\mathbf{x}}_t) - f(\mathbf{x}_t)\\
&\stackrel{(a)}{\leq} \Delta(\overline{\mathbf{x}}_t) + f_t(\overline{\mathbf{x}}_t) + c_t\sigma_{t-1}(\overline{\mathbf{x}}_t) - f_t(\mathbf{x}_t) + c_t\sigma_{t-1}(\mathbf{x}_t)\\
&\stackrel{(b)}{\leq} c_t\sigma_{t-1}(\overline{\mathbf{x}}_t) + c_t\sigma_{t-1}(\overline{\mathbf{x}}_t) + c_t\sigma_{t-1}(\mathbf{x}_t) + f_t(\overline{\mathbf{x}}_t) - f_t(\mathbf{x}_t)\\
&=c_t(2\sigma_{t-1}(\overline{\mathbf{x}}_t) + \sigma_{t-1}(\mathbf{x}_t)) + \underline{f_t(\overline{\mathbf{x}}_t) - f_t(\mathbf{x}_t)},
\end{split}
\label{eq:r_t_bound_original}
\end{equation}
in which $(a)$ follows from the definition of $\Delta(\mathbf{x})$ and equation~\eqref{eq:combine_two_events},
and $(b)$ results from the fact that $\overline{\mathbf{x}}_t$ is unsaturated.
Denote by $\overline{i}$ the sub-region to which $\overline{\mathbf{x}}_t$ belongs given $\mathcal{F}_{t-1}$.
Next, we analyze the expected value of the underlined term given $\mathcal{F}_{t-1}$:
\begin{equation}
\begin{split}
\mathbb{E}&\left[f_t(\overline{\mathbf{x}}_t) - f_t(\mathbf{x}_t) | \mathcal{F}_{t-1}\right] \\
&\stackrel{(a)}{=} \mathbb{P}\left(A_t\right)\mathbb{E}\left[f_t(\overline{\mathbf{x}}_t) - f_t(\mathbf{x}_t) | \mathcal{F}_{t-1}, A_t\right] + \mathbb{P}\left(B_t\right) \mathbb{E}\left[f_t(\overline{\mathbf{x}}_t) - f_t(\mathbf{x}_t) | \mathcal{F}_{t-1}, B_{t}\right]\\
&\stackrel{(b)}{\leq} \mathbb{P}\left(B_t\right) \mathbb{E}\left[f_t(\overline{\mathbf{x}}_t) - f_t(\mathbf{x}_t) | \mathcal{F}_{t-1}, B_{t}\right]\\
&\stackrel{(c)}{\leq} \mathbb{P}\left(B_t\right)\sum^P_{i=1}\mathbb{P}\left[\mathbf{x}_t\in\mathcal{X}_i\right]\mathbb{E}\left[f_t(\overline{\mathbf{x}}_t) - f_t(\mathbf{x}_t) | \mathcal{F}_{t-1}, B_{t}, \mathbf{x}_t\in\mathcal{X}_i \right]\\
&\leq \mathbb{P}\left(B_t\right)\sum^P_{i=1}\mathbb{E}\left[f_t(\overline{\mathbf{x}}_t) - f_t(\mathbf{x}_t) | \mathcal{F}_{t-1}, B_{t}, \mathbf{x}_t\in\mathcal{X}_i \right]\\
&\stackrel{(d)}{\leq} \mathbb{P}\left(B_t\right) \sum^P_{i=1}\mathbb{E}\left[\sum^N_{n=1}\varphi^{(i)}_n \widetilde{f}^n_t (\overline{\mathbf{x}}_t)+\Delta^{(i)}_{t} - \sum^N_{n=1}\varphi^{(i)}_n \widetilde{f}^n_t (\mathbf{x}_t) + \Delta^{(i)}_{t} \bigg| \mathcal{F}_{t-1}, B_{t}, \mathbf{x}_t\in\mathcal{X}_i\right]\\
&\stackrel{(e)}{\leq} \mathbb{P}\left(B_t\right)\sum^P_{i=1} \mathbb{E}\left[ \left[\boldsymbol{\phi}(\overline{\mathbf{x}}_t)^{\top}-\boldsymbol{\phi}(\mathbf{x}_t)^{\top}\right] \underline{\sum^N_{n=1} \varphi^{(i)}_n\boldsymbol{\omega}_{n,t}} + 2\Delta^{(i)}_{t} \bigg| \mathcal{F}_{t-1}, B_{t}, \mathbf{x}_t\in\mathcal{X}_i\right],
\end{split}
\label{eq:long_proof_1}
\end{equation}
in which $(a)$ and $(c)$ result from the tower rule of expectation; 
$(b)$ follows since conditioned on the event $A_t$, i.e., $\mathbf{x}_t=\arg\max_{\mathbf{x}\in\mathcal{X}}f_t(\mathbf{x})$, we have that $f_t(\overline{\mathbf{x}}_t) - f_t(\mathbf{x}_t) \leq 0$;
$(d)$ results from Lemma~\ref{lemma:bound_gt_ft} and hence holds with probability $\geq 1-\delta/2$;
$(e)$ is a consequence of the definition of $\widetilde{f}^n_t$: $\widetilde{f}^n_t(\mathbf{x})=\boldsymbol{\phi}(\mathbf{x})^{\top}\boldsymbol{\omega}_{n,t}, \forall \mathbf{x} \in \mathcal{X}$.
Next, we further decompose the underlined term $\underline{\sum^N_{n=1} \varphi^{(i)}_n\boldsymbol{\omega}_{n,t}}$ by:
\begin{equation}
\begin{split}
\sum^N_{n=1} &\varphi^{(i)}_n\boldsymbol{\omega}_{n,t} = \frac{\sum^N_{n=1} q \varphi^{(i)}_n\boldsymbol{\omega}_{n,t}}{q} = \frac{\sum^N_{n=1} \mathbb{E}_{\mathbb{I}_n}[\mathbb{I}_n] \varphi^{(i)}_n\boldsymbol{\omega}_{n,t}}{q} = \mathbb{E}_{\mathbb{I}_{1:n}}\left[\frac{\sum^N_{n=1} \mathbb{I}_n \varphi^{(i)}_n\boldsymbol{\omega}_{n,t}}{q}\right]\\
&= \mathbb{E}_{\mathbb{I}_{1:n}}\bigg[\frac{\sum^N_{n=1} \mathbb{I}_n \varphi^{(i)}_n\boldsymbol{\omega}_{n,t}}{q} - \frac{\sum^N_{n=1} \mathbb{I}_n \varphi^{(i)}_n\widehat{\boldsymbol{\omega}}_{n,t}}{q} - \boldsymbol{\eta} + \frac{\sum^N_{n=1} \mathbb{I}_n \varphi^{(i)}_n\widehat{\boldsymbol{\omega}}_{n,t}}{q} + \boldsymbol{\eta} \bigg]\\
&= \mathbb{E}_{\mathbb{I}_{1:n}}\bigg[\frac{\sum^N_{n=1} \mathbb{I}_n \varphi^{(i)}_n\boldsymbol{\omega}_{n,t}}{q} - \frac{\sum^N_{n=1} \mathbb{I}_n \varphi^{(i)}_n\widehat{\boldsymbol{\omega}}_{n,t}}{q} - \boldsymbol{\eta} + \boldsymbol{\omega}^{(i)}_t \bigg],
\end{split}
\label{eq:long_proof_2}
\end{equation}
where in the last equality we have made use of the definition of $\boldsymbol{\omega}^{(i)}_t$~\eqref{eq:define:omega:i:t:camera}.
Next, we plug~\eqref{eq:long_proof_2} back into~\eqref{eq:long_proof_1}:
\begin{equation}
\begin{split}
\mathbb{E}&\left[f_t(\overline{\mathbf{x}}_t) - f_t(\mathbf{x}_t) | \mathcal{F}_{t-1}\right]\\
&\leq \mathbb{P}\left(B_t\right)\sum^P_{i=1} \mathbb{E}\Bigg[ \left[\boldsymbol{\phi}(\overline{\mathbf{x}}_t)^{\top}-\boldsymbol{\phi}(\mathbf{x}_t)^{\top}\right] \mathbb{E}_{\mathbb{I}_{1:n}}\bigg[\frac{\sum^N_{n=1} \mathbb{I}_n \varphi^{(i)}_n\boldsymbol{\omega}_{n,t}}{q} - \frac{\sum^N_{n=1} \mathbb{I}_n \varphi^{(i)}_n\widehat{\boldsymbol{\omega}}_{n,t}}{q} - \boldsymbol{\eta} \bigg] + \\
&\qquad\qquad \left[\boldsymbol{\phi}(\overline{\mathbf{x}}_t)^{\top}\mathbb{E}_{\mathbb{I}_{1:n}}[\boldsymbol{\omega}^{(i)}_t] - \boldsymbol{\phi}(\mathbf{x}_t)^{\top}\mathbb{E}_{\mathbb{I}_{1:n}}[\boldsymbol{\omega}^{(i)}_t]\right] + 2\Delta^{(i)}_{t} \bigg| \mathcal{F}_{t-1}, B_{t}, \mathbf{x}_t\in\mathcal{X}_i\Bigg]\\
&\leq \mathbb{P}\left(B_t\right)\sum^P_{i=1} \mathbb{E}\Bigg[ \underbrace{\left[\boldsymbol{\phi}(\overline{\mathbf{x}}_t)^{\top}-\boldsymbol{\phi}(\mathbf{x}_t)^{\top}\right] \mathbb{E}_{\mathbb{I}_{1:n}}\bigg[\frac{\sum^N_{n=1} \mathbb{I}_n \varphi^{(i)}_n\boldsymbol{\omega}_{n,t}}{q} - \frac{\sum^N_{n=1} \mathbb{I}_n \varphi^{(i)}_n\widehat{\boldsymbol{\omega}}_{n,t}}{q}\bigg]}_{A_1} + \\
&\qquad \quad \Big[\underbrace{\phi(\overline{\mathbf{x}}_t)^{\top}\mathbb{E}_{\mathbb{I}_{1:n}}[\boldsymbol{\omega}^{(i)}_t] - \phi(\overline{\mathbf{x}}_t)^{\top} \boldsymbol{\omega}^{(i)}_t}_{A_2} + \underbrace{\boldsymbol{\phi}(\overline{\mathbf{x}}_t)^{\top} \boldsymbol{\omega}^{(i)}_t - \boldsymbol{\phi}(\overline{\mathbf{x}}_t)^{\top} \boldsymbol{\omega}^{(\overline{i})}_t}_{A_3} + \underbrace{\boldsymbol{\phi}(\overline{\mathbf{x}}_t)^{\top} \boldsymbol{\omega}^{(\overline{i})}_t - \boldsymbol{\phi}(\mathbf{x}_t)^{\top} \boldsymbol{\omega}^{(i)}_t}_{A_4} +\\
&\qquad \quad \underbrace{\boldsymbol{\phi}(\mathbf{x}_t)^{\top} \boldsymbol{\omega}^{(i)}_t - \boldsymbol{\phi}(\mathbf{x}_t)^{\top}\mathbb{E}_{\mathbb{I}_{1:n}}[\boldsymbol{\omega}^{(i)}_t]}_{A_5} \Big] \underbrace{ - \left[ \boldsymbol{\phi}(\overline{\mathbf{x}}_t)^{\top} - \boldsymbol{\phi}(\mathbf{x}_t)^{\top} \right]\boldsymbol{\eta}}_{A_6} + 2\Delta^{(i)}_{t} \bigg| \mathcal{F}_{t-1}, B_{t}, \mathbf{x}_t\in\mathcal{X}_i\Bigg]
\end{split}
\label{eq:exp_regret_unfinished}
\end{equation}
Next, we separately upper-bound the terms $A_1$ to $A_6$.
Firstly, we bound the term $A_1$. 
Define $\mathcal{C}_t \triangleq \{n\in[N] \Big| \norm{\boldsymbol{\omega}_{n,t}}_2 > S/\sqrt{P} \}$, which is the same as the definition in Theorem~\ref{theorem:dp_fts_de}. That is, $\mathcal{C}_t$ contains the indices of those agents whose vector of $\boldsymbol{\omega}_{n,t}$ has a larger $L_2$ norm than $S/\sqrt{P}$ in iteration $t$. $A_1$ can thus be analyzed as:
\begin{equation}
\begin{split}
\bigg|&\left[\boldsymbol{\phi}(\overline{\mathbf{x}}_t)^{\top}-\boldsymbol{\phi}(\mathbf{x}_t)^{\top}\right] \mathbb{E}_{\mathbb{I}_{1:n}}\bigg[\frac{\sum^N_{n=1} \mathbb{I}_n \varphi^{(i)}_n\boldsymbol{\omega}_{n,t}}{q} - \frac{\sum^N_{n=1} \mathbb{I}_n \varphi^{(i)}_n\widehat{\boldsymbol{\omega}}_{n,t}}{q} \bigg] \bigg|\\
&\stackrel{(a)}{=}\bigg|\left[\boldsymbol{\phi}(\overline{\mathbf{x}}_t)^{\top}-\boldsymbol{\phi}(\mathbf{x}_t)^{\top}\right]\sum^N_{n=1}\varphi^{(i)}_n(\boldsymbol{\omega}_{n,t}-\widehat{\boldsymbol{\omega}}_{n,t})\bigg|\\
&\stackrel{(b)}{=}\bigg|\left[\boldsymbol{\phi}(\overline{\mathbf{x}}_t)^{\top}-\boldsymbol{\phi}(\mathbf{x}_t)^{\top}\right]\sum_{n\in \mathcal{C}_t}\varphi^{(i)}_n(\boldsymbol{\omega}_{n,t}-\widehat{\boldsymbol{\omega}}_{n,t})\bigg|\\
&=\bigg|\sum_{n\in \mathcal{C}_t}\varphi^{(i)}_n \left[\boldsymbol{\phi}(\overline{\mathbf{x}}_t)^{\top}-\boldsymbol{\phi}(\mathbf{x}_t)^{\top}\right]\left[\boldsymbol{\omega}_{n,t}-\widehat{\boldsymbol{\omega}}_{n,t}\right]\bigg|\\
&=\bigg|\sum_{n\in \mathcal{C}_t}\varphi^{(i)}_n \left[\boldsymbol{\phi}(\overline{\mathbf{x}}_t)^{\top}\boldsymbol{\omega}_{n,t} + \boldsymbol{\phi}(\mathbf{x}_t)^{\top}\widehat{\boldsymbol{\omega}}_{n,t} - \boldsymbol{\phi}(\overline{\mathbf{x}}_t)^{\top}\widehat{\boldsymbol{\omega}}_{n,t} - \boldsymbol{\phi}(\mathbf{x}_t)^{\top}\boldsymbol{\omega}_{n,t}\right]\bigg|\\
&\leq \sum_{n\in \mathcal{C}_t}\varphi^{(i)}_n \left[\big|\boldsymbol{\phi}(\overline{\mathbf{x}}_t)^{\top}\boldsymbol{\omega}_{n,t}\big| + \big|\boldsymbol{\phi}(\mathbf{x}_t)^{\top}\widehat{\boldsymbol{\omega}}_{n,t}\big| + \big|\boldsymbol{\phi}(\overline{\mathbf{x}}_t)^{\top}\widehat{\boldsymbol{\omega}}_{n,t}\big| + \big|\boldsymbol{\phi}(\mathbf{x}_t)^{\top}\boldsymbol{\omega}_{n,t}\big|\right]\\
&\leq \sum_{n\in \mathcal{C}_t}\varphi^{(i)}_n \left[\big|\boldsymbol{\phi}(\overline{\mathbf{x}}_t)^{\top}\boldsymbol{\omega}_{n,t}\big| + \big|\boldsymbol{\phi}(\mathbf{x}_t)^{\top}\boldsymbol{\omega}_{n,t}\big| + \big|\boldsymbol{\phi}(\overline{\mathbf{x}}_t)^{\top}\boldsymbol{\omega}_{n,t}\big| + \big|\boldsymbol{\phi}(\mathbf{x}_t)^{\top}\boldsymbol{\omega}_{n,t}\big|\right]\\
&\leq 2\sum_{n\in \mathcal{C}_t}\varphi^{(i)}_n \left[\big| \widetilde{f}^n_t(\overline{\mathbf{x}}_t)\big| + \big| \widetilde{f}^n_t(\mathbf{x})\big|\right]\\
&\stackrel{(c)}{\leq} 2\sum_{n\in \mathcal{C}_t}\varphi^{(i)}_n (\tilde{\Delta}_{n,t} + B + \tilde{\Delta}_{n,t} + B)\\
&= 4\sum_{n\in \mathcal{C}_t}\varphi^{(i)}_n \tilde{\Delta}_{n,t} + 4\sum_{n\in \mathcal{C}_t}\varphi^{(i)}_n B\\
&\stackrel{(d)}{\leq} 4\sum^N_{n=1}\varphi^{(i)}_n \tilde{\Delta}_{n,t} + 4B\widetilde{\varphi}^{(i)}_{\mathcal{C}_t}\\
&\stackrel{(e)}{=}4\left( \tilde{\Delta}^{(i)}_t + B\widetilde{\varphi}^{(i)}_{\mathcal{C}_t} \right),
\end{split}
\label{eq:A1}
\end{equation}
in which $(a)$ follows since $\mathbb{E}_n[\mathbb{I}_n]=q,\forall n\in[N]$;
$(b)$ follows since for those agents $n\not\in\mathcal{C}_t$, $\boldsymbol{\omega}_{n,t}-\widehat{\boldsymbol{\omega}}_{n,t}=\mathbf{0}$ because the vector $\boldsymbol{\omega}_{n,t}$ is not clipped;
$(c)$ results from Lemma~\ref{lemma:bound_gt_gn} and that $|f^n(\mathbf{x})| \leq B,\forall \mathbf{x}\in\mathcal{X},n\in[N]$ (this is because of our assumption that $\norm{f^n}_k \leq B,\forall n\in[N]$, Sec.~\ref{sec:background});
$(d)$ follows from the definition of $\widetilde{\varphi}^{(i)}_{\mathcal{C}_t}\triangleq\sum_{n\in \mathcal{C}_t}\varphi^{(i)}_n$;
$(e)$ results from the definition of $\tilde{\Delta}^{(i)}_t$~\eqref{eq:define_delta_tilde_it}.

Subsequently, we upper-bound the terms $A_2$ and $A_5$. For any $\mathbf{x}\in\mathcal{X}$, we have that
\begin{equation}
\begin{split}
\Big|\boldsymbol{\phi}&(\mathbf{x})^{\top}\mathbb{E}_{\mathbb{I}_{1:n}}[\boldsymbol{\omega}^{(i)}_t] - \boldsymbol{\phi}(\mathbf{x})^{\top} \boldsymbol{\omega}^{(i)}_t\Big|=\Big|\boldsymbol{\phi}(\mathbf{x})^{\top} \left( \mathbb{E}_{\mathbb{I}_{1:n}} \left[\frac{\sum^N_{n=1} \mathbb{I}_n \varphi^{(i)}_n\widehat{\boldsymbol{\omega}}_{n,t}}{q} \right]-\frac{\sum^N_{n=1} \mathbb{I}_n \varphi^{(i)}_n\widehat{\boldsymbol{\omega}}_{n,t}}{q}\right) \Big|\\
&=\Big|\boldsymbol{\phi}(\mathbf{x})^{\top} \left(  \frac{\sum^N_{n=1} q \varphi^{(i)}_n\widehat{\boldsymbol{\omega}}_{n,t}}{q} -\frac{\sum^N_{n=1} \mathbb{I}_n \varphi^{(i)}_n\widehat{\boldsymbol{\omega}}_{n,t}}{q}\right) \Big|\\
&=\Big| \boldsymbol{\phi}(\mathbf{x})^{\top} \frac{1}{q} \sum^N_{n=1}(q-\mathbb{I}_n)\varphi^{(i)}_n\widehat{\boldsymbol{\omega}}_{n,t} \Big|
\leq \frac{1}{q} \sum^N_{n=1} \Big| (q-\mathbb{I}_n)\varphi^{(i)}_n \boldsymbol{\phi}(\mathbf{x})^{\top}\widehat{\boldsymbol{\omega}}_{n,t} \Big|\\
&\stackrel{(a)}{\leq} \frac{1}{q} \sum^N_{n=1}\varphi^{(i)}_n \big|\boldsymbol{\phi}(\mathbf{x})^{\top}\widehat{\boldsymbol{\omega}}_{n,t} \big| \stackrel{(b)}{\leq} \frac{1}{q} \sum^N_{n=1}\varphi^{(i)}_n \big|\boldsymbol{\phi}(\mathbf{x})^{\top}\boldsymbol{\omega}_{n,t} \big|\\
&= \frac{1}{q} \sum^N_{n=1}\varphi^{(i)}_n \big|\widetilde{f}^n_t(\mathbf{x}) \big| = \frac{1}{q} \sum^N_{n=1}\varphi^{(i)}_n \big|\widetilde{f}^n_t(\mathbf{x}) - f^n(\mathbf{x}) + f^n(\mathbf{x}) \big|\\
&\leq \frac{1}{q} \sum^N_{n=1}\varphi^{(i)}_n \left(|\widetilde{f}^n_t(\mathbf{x}) - f^n(\mathbf{x})| + |f^n(\mathbf{x})|\right)\\
&\stackrel{(c)}{\leq} \frac{1}{q} \sum^N_{n=1}\varphi^{(i)}_n \left(\tilde{\Delta}_{n,t} + B\right)\\
&\stackrel{(d)}{=} \frac{1}{q} \left(\tilde{\Delta}^{(i)}_{t} + B\right),
\end{split}
\label{eq:A2_A5}
\end{equation}
in which $(a)$ follows since $|q-\mathbb{I}_n| \leq 1$;
$(b)$ results from~\eqref{eq:clip_make_norm_smaller};
$(c)$ results from Lemma~\ref{lemma:bound_gt_gn} and that 
$|f^n(\mathbf{x})| \leq B,\forall \mathbf{x}\in\mathcal{X},n\in[N]$;
$(d)$ results from the definition of $\tilde{\Delta}^{(i)}_t$~\eqref{eq:define_delta_tilde_it}.

Next, we upper-bound the term $A_3$, which arises because the sub-regions $i$ and $\overline{i}$ may be different.
We have that for any $\mathbf{x}\in\mathcal{X}$,
\begin{equation}
\begin{split}
\Big|\boldsymbol{\phi}(\mathbf{x})^{\top}\boldsymbol{\omega}^{(i)}_t &- \boldsymbol{\phi}(\mathbf{x})^{\top}\boldsymbol{\omega}^{(\overline{i})}_t\Big|=\Big|\boldsymbol{\phi}(\mathbf{x})^{\top}\left(\boldsymbol{\omega}^{(i)}_t - \boldsymbol{\omega}^{(\overline{i})}_t\right)\Big|\\
&=\Big|\boldsymbol{\phi}(\mathbf{x})^{\top}\left( \frac{\sum^N_{n=1} \mathbb{I}_n \varphi^{(i)}_n\widehat{\boldsymbol{\omega}}_{n,t}}{q} - \frac{\sum^N_{n=1} \mathbb{I}_n \varphi^{(\overline{i})}_n\widehat{\boldsymbol{\omega}}_{n,t}}{q}\right) \Big|\\
&\leq \frac{1}{q} \sum^N_{n=1} \mathbb{I}_n \Big|\varphi^{(i)}_n-\varphi^{(\overline{i})}_n\Big| \Big|\boldsymbol{\phi}(\mathbf{x})^{\top} \widehat{\boldsymbol{\omega}}_{n,t} \Big|\\
&\leq \frac{1}{q} \sum^N_{n=1} \varphi_{\max} \Big|\boldsymbol{\phi}(\mathbf{x})^{\top} \widehat{\boldsymbol{\omega}}_{n,t} \Big|\\
&\stackrel{(a)}{\leq} \frac{1}{q} \sum^N_{n=1} \varphi_{\max} \Big|\boldsymbol{\phi}(\mathbf{x})^{\top} \boldsymbol{\omega}_{n,t} \Big|\\
&\stackrel{(b)}{\leq} \frac{1}{q} \sum^N_{n=1} \varphi_{\max} \left( \tilde{\Delta}_{n,t}+B \right)\\
&\stackrel{(c)}{\leq} \frac{\varphi_{\max}}{q}\left(\Delta_t + NB\right),
\end{split}
\label{eq:A3}
\end{equation}
$(a)$ follows because of~\eqref{eq:clip_make_norm_smaller};
$(b)$ results from Lemma~\ref{lemma:bound_gt_gn} and that 
$|f^n(\mathbf{x})| \leq B,\forall \mathbf{x}\in\mathcal{X},n\in[N]$;
$(c)$ follows from~\eqref{eq:upper_bound_Delta}.

Next, regarding $A_4$, note that conditioned on the event $B_t$, $\mathbf{x}_t$ is selected by: $\mathbf{x}_t=\arg\max_{\mathbf{x}\in\mathcal{X}}\boldsymbol{\phi}(\mathbf{x})^{\top}\boldsymbol{\omega}^{(i^{[\mathbf{x}]})}_t$ in which $i^{[\mathbf{x}]}$ represents the sub-region $\mathbf{x}$ belongs to. Therefore, because $\overline{\mathbf{x}}_t\in\mathcal{X}_{\overline{i}}$ and $\mathbf{x}_t\in\mathcal{X}_i$ (since we are conditioning on this event), we have that $\boldsymbol{\phi}(\overline{\mathbf{x}}_t)^{\top} \boldsymbol{\omega}^{(\overline{i})}_t - \boldsymbol{\phi}(\mathbf{x}_t)^{\top} \boldsymbol{\omega}^{(i)}_t \leq 0$. In other words, $A_4 \leq 0$.

Finally, The term $A_6$ can be upper-bounded using standard Gaussian concentration inequality: 
\begin{equation}
\begin{split}
\Big|\left[ \boldsymbol{\phi}(\overline{\mathbf{x}}_t)^{\top} - \boldsymbol{\phi}(\mathbf{x}_t)^{\top} \right]\boldsymbol{\eta}\Big| &\leq \norm{\boldsymbol{\phi}(\overline{\mathbf{x}}_t) - \boldsymbol{\phi}(\mathbf{x}_t)}_2 \norm{\boldsymbol{\eta}}_2\\
&\leq \left(\norm{\boldsymbol{\phi}(\overline{\mathbf{x}}_t)}_2 + \norm{\boldsymbol{\phi}(\mathbf{x}_t)}_2\right)\norm{\boldsymbol{\eta}}_2 \stackrel{(a)}{\leq} 2\norm{\boldsymbol{\eta}}_2\\
&\stackrel{(b)}{\leq} \frac{2zS\varphi_{\max}}{q}\sqrt{2M\log\frac{8M}{\delta}},
\end{split}
\label{eq:A6}
\end{equation}
where $(a)$ follows since the random features have been constructed such that $\norm{\boldsymbol{\phi}(\mathbf{x})}^2_2 = \sigma_0^2 \leq 1$~\cite{dai2020federated},
and $(b)$ follows from standard Gaussian concentration inequality and hence holds with probability $>1-\delta/8$.

Now we can exploit the upper bounds on the terms $A_1$ to $A_6$ we have derived above (equations~\eqref{eq:A1},~\eqref{eq:A2_A5},~\eqref{eq:A3},~\eqref{eq:A6}), and continue to upper-bound $\mathbb{E}\left[f_t(\overline{\mathbf{x}}_t)-f_t(\mathbf{x}_t)|\mathcal{F}_{t-1}\right]$ following~\eqref{eq:exp_regret_unfinished}:
\begin{equation}
\begin{split}
    \mathbb{E}[&f_t(\overline{\mathbf{x}}_t)-f_t(\mathbf{x}_t)|\mathcal{F}_{t-1}] \leq \mathbb{P}(B_t)\sum^P_{i=1}\mathbb{E}\Bigg[\underbrace{4 \left(\tilde{\Delta}^{(i)}_t + B\widetilde{\varphi}^{(i)}_{\mathcal{C}_t}\right)}_{A_1} + \underbrace{\frac{2}{q} \left(\tilde{\Delta}^{(i)}_{t} + B\right)}_{A_2+A_5} +\\
    &\qquad \underbrace{\frac{\varphi_{\max}}{q}\left(\Delta_t + NB\right)}_{A_3} + \underbrace{\frac{2zS\varphi_{\max}}{q}\sqrt{2M\log\frac{8M}{\delta}}}_{A_6}+ 2\Delta^{(i)}_{t} \bigg| \mathcal{F}_{t-1}, B_{t},\mathbf{x}_t\in\mathcal{X}_i\Bigg]\\
    &= (1-p_t)\sum^P_{i=1}\Bigg[4 \left(\tilde{\Delta}^{(i)}_t + B\mathbb{E}\left[\widetilde{\varphi}^{(i)}_{\mathcal{C}_t}|\mathcal{F}_{t-1}\right]\right) + \frac{2}{q} \left(\tilde{\Delta}^{(i)}_{t} +B\right) + \\
    &\qquad \frac{\varphi_{\max}}{q}\left(\Delta_t + NB\right)+ \frac{2zS\varphi_{\max}}{q}\sqrt{2M\log\frac{8M}{\delta}} + 2\Delta^{(i)}_{t}\Bigg]\\
    &= (1-p_t)\sum^P_{i=1}\Bigg[4B \mathbb{E}\left[\widetilde{\varphi}^{(i)}_{\mathcal{C}_t}|\mathcal{F}_{t-1}\right] + \left(\frac{2}{q}+4\right) \tilde{\Delta}^{(i)}_{t} + 2\Delta^{(i)}_{t} + \frac{\varphi_{\max}}{q} \Delta_t +\\
    &\qquad B\left(\frac{2}{q} +\frac{N\varphi_{\max}}{q} \right) + \frac{2zS\varphi_{\max}}{q}\sqrt{2M\log\frac{8M}{\delta}}\Bigg]\\
    &\stackrel{(a)}{\leq} (1-p_t)\sum^P_{i=1}\Bigg[4B \mathbb{E}\left[\widetilde{\varphi}^{(i)}_{\mathcal{C}_t}|\mathcal{F}_{t-1}\right] + \left(\frac{2}{q}+6+\frac{\varphi_{\max}}{q}\right) \Delta_{t} +\\
    &\qquad B\left(\frac{2}{q} +\frac{N\varphi_{\max}}{q} \right) + \frac{2zS\varphi_{\max}}{q}\sqrt{2M\log\frac{8M}{\delta}}\Bigg]\\
    &=4B \mathbb{E} \left[(1-p_t)\sum^P_{i=1} \widetilde{\varphi}^{(i)}_{\mathcal{C}_t}|\mathcal{F}_{t-1}\right] + (1-p_t)\Bigg[P\left(\frac{2}{q}+6+\frac{\varphi_{\max}}{q}\right) \Delta_{t} +\\
    &\qquad PB\left(\frac{2}{q} +\frac{N\varphi_{\max}}{q} \right) + P\frac{2zS\varphi_{\max}}{q}\sqrt{2M\log\frac{8M}{\delta}}\Bigg]\\
    &= 4B \mathbb{E}\left[\vartheta_t|\mathcal{F}_{t-1}\right] + \psi_t,
\end{split}
\label{eq:proof_exp_regret_final}
\end{equation}
where $(a)$ follows because $\tilde{\Delta}^{(i)}_t\leq \Delta^{(i)}_t\leq \Delta_t,\forall i\in[P]$.
In the last equality, we have made use of the definitions of $\vartheta_t$ and $\psi_t$.
Note that since we have made use of Lemma~\ref{lemma:bound_gt_ft}~\eqref{eq:long_proof_1} which holds with probability $\geq 1-\delta/2$, and Gaussian concentration inequality~\eqref{eq:A6} which holds with probability $\geq 1-\delta/8$, equation~\eqref{eq:proof_exp_regret_final} 
holds with probability $\geq 1 - \delta/2 - \delta/8=1-5\delta/8$.

Finally, we plug~\eqref{eq:proof_exp_regret_final} back into~\eqref{eq:r_t_bound_original}:
\begin{equation}
\begin{split}
\mathbb{E}&\left[r_t | \mathcal{F}_{t-1}\right] \\
&\leq \mathbb{E}\left[c_t(2\sigma_{t-1}(\overline{\mathbf{x}}_t) + \sigma_{t-1}(\mathbf{x}_t)) + f_t(\overline{\mathbf{x}}_t) - f_t(\mathbf{x}_t) | \mathcal{F}_{t-1}\right] + 2B\mathbb{P}\left[\overline{E^{f_t}(t)} | \mathcal{F}_{t-1} \right]\\
&\leq \mathbb{E}\left[c_t(2\sigma_{t-1}(\overline{\mathbf{x}}_t) + \sigma_{t-1}(\mathbf{x}_t)) | \mathcal{F}_{t-1}\right] + \mathbb{E}\left[f_t(\overline{\mathbf{x}}_t) - f_t(\mathbf{x}_t) | \mathcal{F}_{t-1}\right] + 2B\mathbb{P}\left[\overline{E^{f_t}(t)} | \mathcal{F}_{t-1} \right]\\
& \stackrel{(a)}{\leq} \frac{2c_t}{P_t}\mathbb{E}\left[\sigma_{t-1}(\mathbf{x}_t)| \mathcal{F}_{t-1} \right] + c_t\mathbb{E}\left[\sigma_{t-1}(\mathbf{x}_t) | \mathcal{F}_{t-1}\right] + 4B \mathbb{E}\left[\vartheta_t|\mathcal{F}_{t-1}\right] + \psi_t + \frac{2B}{t^2}\\
&\leq c_t\left(1 + \frac{2}{P_t}\right)\mathbb{E}\left[\sigma_{t-1}(\mathbf{x}_t)| \mathcal{F}_{t-1} \right] + 4B \mathbb{E}\left[\vartheta_t|\mathcal{F}_{t-1}\right] + \psi_t + \frac{2B}{t^2}\\
&\stackrel{(b)}{\leq} c_t\left(1 + \frac{10}{pp_1}\right)\mathbb{E}\left[\sigma_{t-1}(\mathbf{x}_t)| \mathcal{F}_{t-1} \right] + 4B \mathbb{E}\left[\vartheta_t|\mathcal{F}_{t-1}\right] + \psi_t + \frac{2B}{t^2},
\end{split}
\label{eq:expected_inst_regret}
\end{equation}
in which $(a)$ follows from~\eqref{eq:bound_x_t_bar_Pt} and~\eqref{eq:proof_exp_regret_final}, and $(b)$ follows since:
\begin{equation}
\frac{2}{P_t} = \frac{2}{p_t(p - \frac{1}{t^2})}\leq \frac{10}{p p_t}\leq \frac{10}{pp_1},
\end{equation}
which was valid because $1/(p-1/t^2) \leq 5/p$ and $p_t\geq p_1$ for all $t\geq 1$.

Note that since the proof of~\eqref{eq:expected_inst_regret} makes use of~\eqref{eq:proof_exp_regret_final}, therefore,~\eqref{eq:expected_inst_regret}, as well as Lemma~\ref{lemma:upper_bound_expected_regret}, also holds with probability of $\geq 1-5\delta/8$.

\end{proof}

\begin{lemma}
Given $\delta \in (0, 1)$, then with probability of at least $1 - \delta$,
\begin{equation*}
\begin{split}
R_T\leq &c_T\left(1 + \frac{10}{p p_1}\right) \mathcal{O}(\sqrt{T\gamma_T}) + \sum^T_{t=1}\psi_t + \frac{B\pi^2}{3} + 4B\sum^T_{t=1}\vartheta_t +\\
&\left[c_T\left(1 + \frac{4B}{p} + \frac{10}{p p_1}
\right) + \psi_1 + 4B \right]\sqrt{2T\log\frac{8}{\delta}},
\end{split}
\end{equation*}
in which 
$\gamma_T$ is the maximum information gain about $f$ obtained from any set of $T$ observations.
\end{lemma}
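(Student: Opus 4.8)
The plan is to promote the per-step estimate of Lemma~\ref{lemma:upper_bound_expected_regret} to a cumulative one through the usual super-martingale plus Azuma--Hoeffding device (as in~\cite{chowdhury2017kernelized,dai2020federated}), and then to collapse the remaining deterministic sums using the maximum information gain and $\sum_{t\ge1}t^{-2}=\pi^2/6$.

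First I would pass to surrogate instantaneous regrets $\bar r_t\triangleq r_t\,\mathbbm{1}[E^f(t)]$. By Lemma~\ref{lemma:uniform_bound}, $E^f(t)$ holds for every $t$ with probability $\ge 1-\delta/4$, and on that event $R_T=\sum_{t=1}^T\bar r_t$, so it suffices to control $\sum_{t=1}^T\bar r_t$. I would then define the difference sequence $\bar X_t\triangleq \bar r_t-c_t\big(1+\tfrac{10}{pp_1}\big)\sigma_{t-1}(\mathbf{x}_t)-4B\vartheta_t-\psi_t-\tfrac{2B}{t^2}$ and verify that $\{\bar X_t\}$ is a super-martingale difference sequence relative to $\{\mathcal{F}_t\}$: on $\overline{E^f(t)}$ (which is $\mathcal{F}_{t-1}$-measurable) we have $\bar r_t=0$, hence $\bar X_t\le 0$; on $E^f(t)$, Lemma~\ref{lemma:upper_bound_expected_regret} is exactly the statement that $\mathbb{E}[\bar X_t\mid\mathcal{F}_{t-1}]\le 0$, which inherits that lemma's failure probability $5\delta/8$. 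Each $|\bar X_t|$ is bounded by a constant: $|\bar r_t|\le 2B$ since $\norm{f}_k\le B$ and $k\le1$, $\sigma_{t-1}(\cdot)\le 1$ (the choice $\lambda=1+2/T$ enters here), $\tfrac{2}{P_t}\le\tfrac{10}{pp_1}$, and $\psi_t,\vartheta_t$ are dominated by their values at $t=1$; loosening, $|\bar X_t|\le\bar c\triangleq c_T(1+\tfrac{4B}{p}+\tfrac{10}{pp_1})+\psi_1+4B$.

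Next I would apply Azuma--Hoeffding: with probability $\ge 1-\delta/8$, $\sum_{t=1}^T\bar X_t\le\bar c\sqrt{2T\log(8/\delta)}$. Rearranging and using $c_t\le c_T$ yields $\sum_{t=1}^T\bar r_t\le c_T(1+\tfrac{10}{pp_1})\sum_{t=1}^T\sigma_{t-1}(\mathbf{x}_t)+4B\sum_{t=1}^T\vartheta_t+\sum_{t=1}^T\psi_t+2B\sum_{t=1}^Tt^{-2}+\bar c\sqrt{2T\log(8/\delta)}$. To finish I would substitute the two deterministic estimates $\sum_{t=1}^Tt^{-2}\le\pi^2/6$ (giving $B\pi^2/3$) and $\sum_{t=1}^T\sigma_{t-1}(\mathbf{x}_t)=\mathcal{O}(\sqrt{T\gamma_T})$, the latter by Cauchy--Schwarz together with the standard information-gain bound $\sum_{t=1}^T\sigma^2_{t-1}(\mathbf{x}_t)=\mathcal{O}(\gamma_T)$; a union bound over the three failure events --- $E^f(\cdot)$ failing somewhere ($\le\delta/4$), the concentration events behind Lemma~\ref{lemma:upper_bound_expected_regret} ($\le 5\delta/8$), and the Azuma deviation ($\le\delta/8$) --- closes the proof with probability $\ge 1-\delta$.

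I expect the main obstacle to be the bookkeeping of the high-probability events: the inequality of Lemma~\ref{lemma:upper_bound_expected_regret} holds only on a concentration event for the RFF approximation and the injected Gaussian noise, which is not $\mathcal{F}_{t-1}$-measurable, so a little care is needed to keep $\{\bar X_t\}$ a legitimate super-martingale difference sequence while still charging these events into the final union bound (handled exactly as in~\cite{dai2020federated}). The bounded-difference constant, the information-gain estimate, and the $p$-series are all routine.
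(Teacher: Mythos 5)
Your proposal is correct and follows essentially the same route as the paper, which itself simply invokes the standard super-martingale plus Azuma--Hoeffding argument of Lemma 11 of~\cite{dai2020federated} with the deviation probability set to $\delta/8$ (so that $\delta/4 + 5\delta/8 + \delta/8 = \delta$), exactly as you do. Your bookkeeping of the failure events, the bounded-difference constant, the $\sum_t t^{-2}\le\pi^2/6$ step, and the $\sum_t\sigma_{t-1}(\mathbf{x}_t)=\mathcal{O}(\sqrt{T\gamma_T})$ step all match the intended argument.
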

\begin{proof}
The proof resembles the that of Lemma 11 of~\cite{dai2020federated}, and is hence omitted. A difference from Lemma 11 of~\cite{dai2020federated} is that an error probability of $\delta/8$
has been used in the Azuma-Hoeffding Inequality in the proof here.
\end{proof}

Finally, we are ready to prove Theorem~\ref{theorem:dp_fts_de}. Recall that $c_t = \mathcal{O}\left(\left(B + \sqrt{\gamma_t+\log(1/\delta)}\right)\sqrt{\log t}\right)$. Therefore, 
\begin{equation}
\begin{split}
R_T &= \mathcal{O}\Bigg(\frac{1}{p_1}\left(B + \sqrt{\gamma_T + \log\frac{1}{\delta}}\right)\sqrt{\log T}\sqrt{T\gamma_T} + \sum^T_{t=1}\psi_t + B\sum^T_{t=1}\vartheta_t + \\
&\qquad \left(B + \frac{1}{p_1}\right)\left(B + \sqrt{\gamma_T+\log\frac{1}{\delta}}\right)\sqrt{\log T}\sqrt{T\log\frac{1}{\delta}}\Bigg)\\
&=\mathcal{O}\left(\left(B + \frac{1}{p_1}\right)\sqrt{T\log T\gamma_T \log\frac{1}{\delta}\left(\gamma_T + \log\frac{1}{\delta}\right)} + \sum^T_{t=1}\psi_t + B\sum^T_{t=1}\vartheta_t\right)\\
&=\tilde{\mathcal{O}}\left(\left(B+\frac{1}{p_1}\right)\gamma_T\sqrt{T} + \sum^T_{t=1}\psi_t + B\sum^T_{t=1}\vartheta_t\right),
\end{split}
\end{equation}
which finally completes the proof.

\section{Experiments}
\label{app:experiments}
As we have mentioned in the main text (last paragraph of Sec.~\ref{subsec:de}), we choose the weights for sub-region $i$ to be
$\varphi^{(i)}_n=\frac{\exp(( a\mathbb{I}^{(i)}_n + 1)/\mathcal{T})}{\sum^N_{n=1} \exp(( a\mathbb{I}^{(i)}_n + 1)/\mathcal{T})}$, where $\mathbb{I}^{(i)}_n$ is an indicator variable that equals $1$ if agent $n$ is assigned to explore $\mathcal{X}_i$ and equals $0$ otherwise. We set $a=15$ in all our experiments, and gradually increase the temperature $\mathcal{T}$ from $1$ to $+\infty$.
Specifically, for the synthetic experiments, we choose the temperature $\mathcal{T}$ as $\mathcal{T}_t = a / (a_t - 1),\forall t\geq 1$; we set $a_t=a+1=16$ for the first $5$ iterations ($t\leq 5$), decay the value of $a_t$ linearly to $1$ in the next $5$ iterations (i.e., $a_t=16,12.25,8.5,4.75,1$ for $t=6,\ldots,10$), and fix $a_t=1,\forall t > 10$. Note that when $a_t=1$, $\mathcal{T}_t=+\infty$ (i.e., after $10$ iterations) and the distribution becomes uniform among all agents.
Similarly, for all real-world experiments, we use the same softmax weighting scheme except that we fix $a_t=a+1=16$ for the first $10$ iterations, decay the value of $a_t$ linearly to $1$ in the next $30$ iterations, and fix $a_t=1$ afterwards. That is, the distribution becomes uniform among all agents after $40$ iterations.
All our experiments are performed on a computing cluster where each device has one NVIDIA Tesla T4 GPU and 48 cores of Xeon Silver 4116 (2.1Ghz) processors.

\subsection{Synthetic Experiments}
\label{app:synth_exp}
\subsubsection{Detailed Experimental Setting}
\label{app:synth:exp:setting}
Our synthetic experiments involve $N=200$ agents.
We define the domain of the synthetic functions to be $1$-dimensional and discrete, i.e., an equally spaced grid on the $1$-dimensional interval $[0,1]$ with a domain size of $|\mathcal{X}|=1000$.
To generate the objective functions for the $N=200$ different agents, we firstly sample a function $f$ from a GP with the SE kernel and a length scale of $0.03$, and normalize the function values into the range $[0, 1]$.
Next, for every agent $\mathcal{A}_n,\forall n\in[N]$, we go through all $|\mathcal{X}|=1000$ inputs in the entire domain, and for each input $\mathbf{x}$, we derive the function value for agent $\mathcal{A}_n$ as $f^n(\mathbf{x}) = f(\mathbf{x}) + d$, in which $d=0.02$ or $=-0.02$ with equal probability (i.e., a probability of $0.5$ each).
In this way, the objective functions of all agents are related to each other.
When observing a function value, we add a Gaussian noise $\zeta \sim \mathcal{N}(0,\sigma^2)$ with a variance of $\sigma^2=0.01$ (Sec.~\ref{sec:background}).

To construct the $P$ sub-regions to be used for distributed exploration (DE), we simply need to divide the interval $[0,1]$ into $P$ disjoint hyper-rectangles with equal volumes. For example, when $P=2$, the two sub-regions contain the inputs in the sub-regions $[0, 0.5)$ and $[0.5,1]$ respectively; 
when $P=3$, the three sub-regions include the inputs in the sub-regions $[0,1/3)$, $[1/3,2/3)$ and $[2/3,1]$ respectively.

\subsubsection{More Experimental Results}
\label{app:synth:more:results}
\textbf{Comparison between DP-FTS-DE and DP-FTS.}
We have shown in the main text (Fig.~\ref{fig:synth_1}a) that FTS-DE significantly outperforms FTS without DE.
Here, we demonstrate in Fig.~\ref{fig:synth_3} that after DP is integrated, DP-FTS-DE still yields a significantly better utility than DP-FTS for the same level of privacy guarantee (loss).
These results justify the practical benefit of the technique of DE (Sec.~\ref{subsec:de}).
Note that for a fair comparison, we have used a smaller value of $S$ for DP-FTS without DE such that a similar percentage of vectors are clipped for both DP-FTS-DE and DP-FTS.
\begin{figure}
     \centering
     \begin{tabular}{c}
         \includegraphics[width=0.4\linewidth]{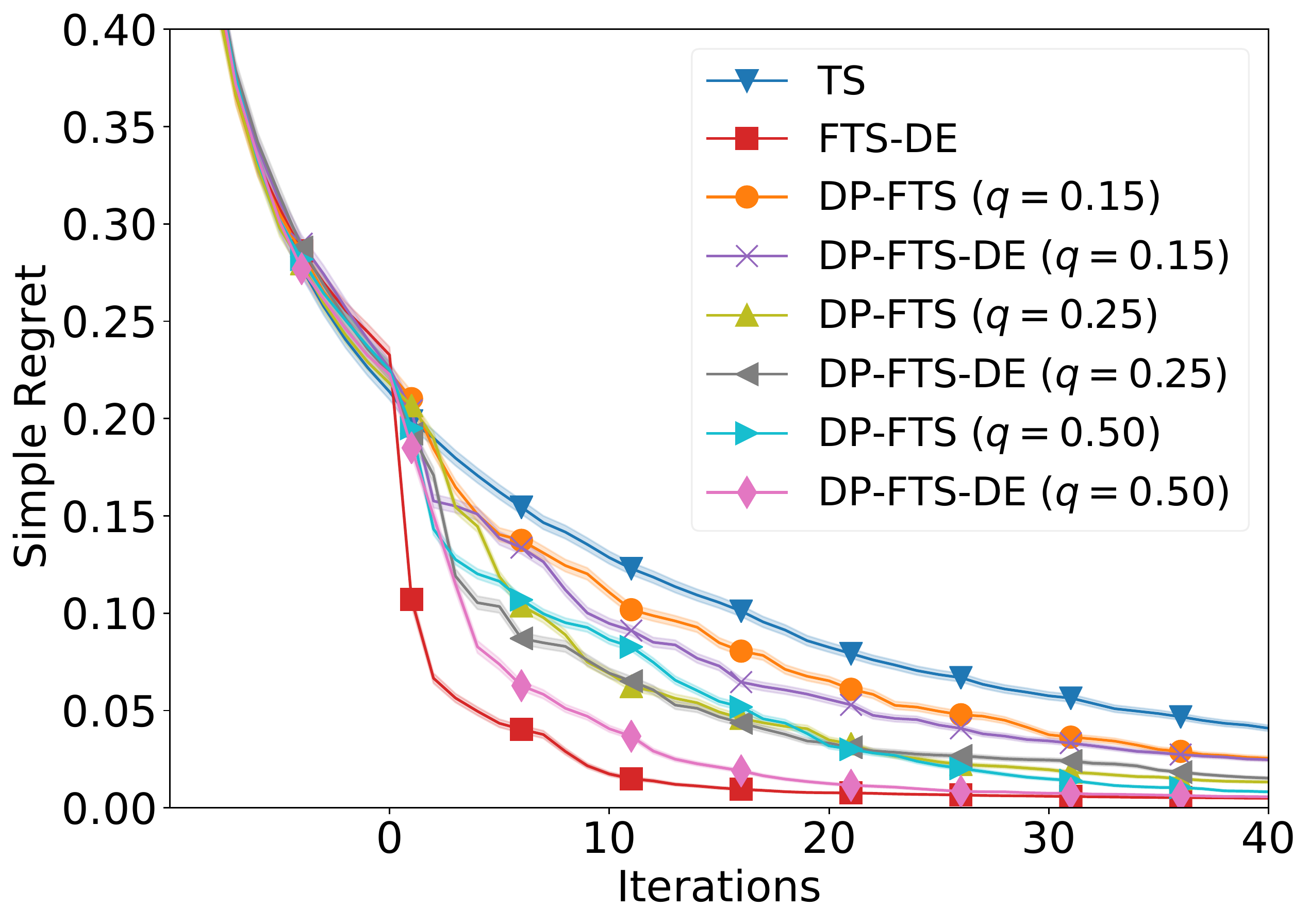}
     \end{tabular}
     \caption{Comparisons of the performances of DP-FTS (without DE) and DP-FTS-DE. For a fair comparison, we have used $S=8$ and $S=11$ for DP-FTS and DP-FTS-DE respective, such that a similarly small percentage vectors are clipped for both algorithms ($0.31\%$ for DP-FTS and $0.80\%$ for DP-FTS-DE). We have used $z=1.0$ for both algorithms.}
     \label{fig:synth_3}
\end{figure}

\textbf{Investigation of DE.}
We also investigate the importance of both of the major components of the DE technique (Sec.~\ref{subsec:de}): (a) assigning every agent to explore only a local sub-region instead of the entire domain, and (b) giving more weights to those agents exploring the particular sub-region. In Fig.~\ref{fig:synth_4}, the orange curve is obtained by removing component (b) (i.e., in every iteration and for each sub-region, we give equal weights to all agents), the purple curve is derived by removing component (a) (i.e., letting every agent explore the entire domain at initialization instead of a smaller local sub-region).
As the figure shows, the performances of both the orange and purple curves are significantly worse than our FTS-DE algorithm (red curve), which verifies that both of these components are critical for the practical performance of FTS-DE.
\begin{figure}
     \centering
     \begin{tabular}{c}
         \includegraphics[width=0.4\linewidth]{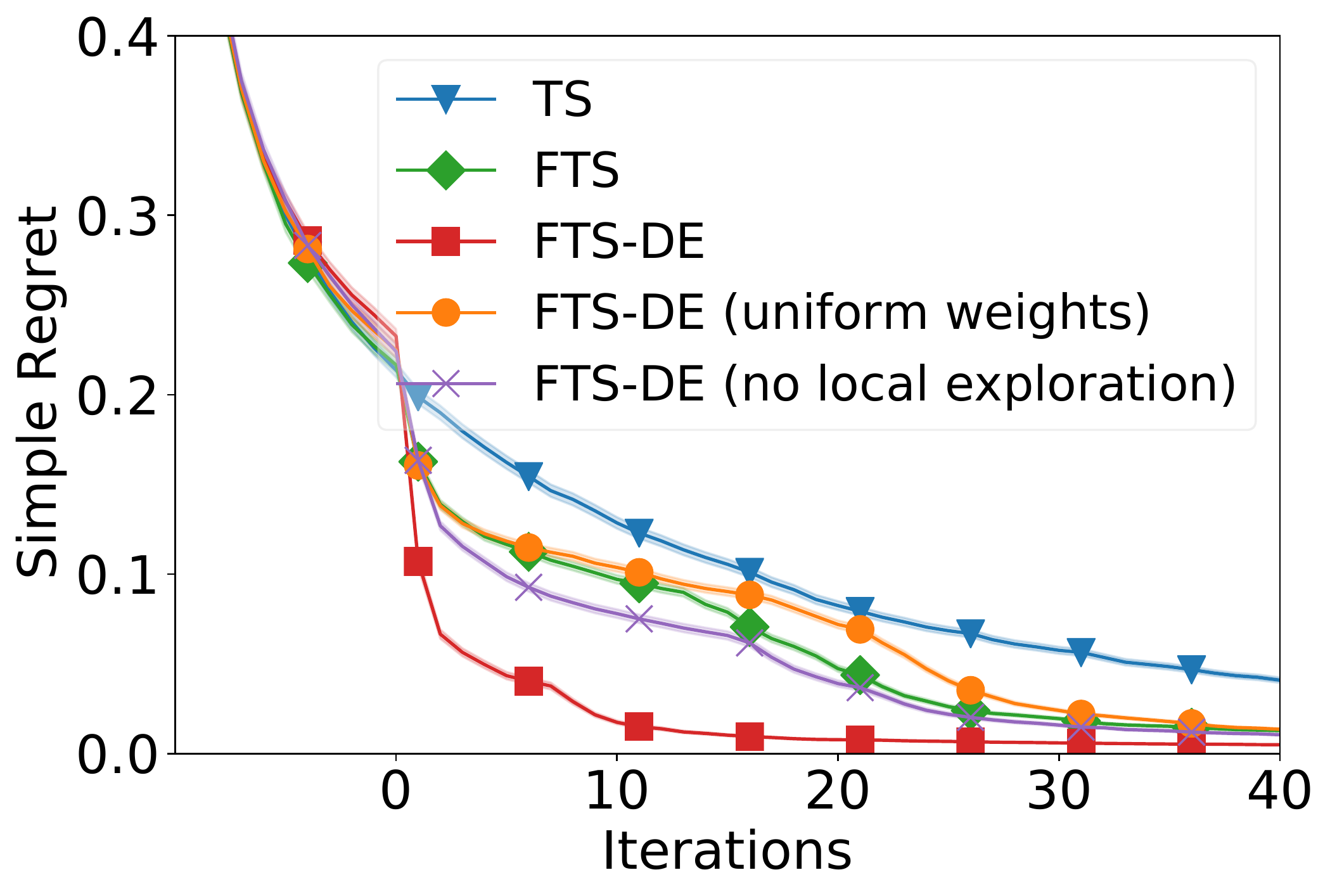}
     \end{tabular}
     \caption{Investigating the importance of both major components of the technique of distributed exploration (DE). 
     The orange curve is obtained by giving equal weights to all agent for every sub-region, and the purple curve is derived by letting every agent explore the entire domain at initialization instead of a local sub-region.}
     \label{fig:synth_4}
\end{figure}

\textbf{Trade-off Induced by $P$.}
As we have discussed at the end of Sec.~\ref{sec:theoretical_analysis}, the value of $P$ (i.e., the number of sub-regions) induces a trade-off about the practical performance of our DP-FTS-DE algorithm.
Here we empirically verify this trade-off in Fig.~\ref{fig:effect_of_p_with_dp}.
As shown in the figure, for the same values of $q$, $z$ and $S$, a smaller value of $P$ (i.e., larger local sub-regions) may deteriorate the performance (orange curve) since larger sub-regions are harder for the GP surrogate to model, however, a larger value of $P$ may also result in a worse performance (yellow curve) since it causes the vectors from more agents to be clipped (Sec.~\ref{sec:theoretical_analysis}).
These observations verify our discussions in the last paragraph of Sec.~\ref{sec:theoretical_analysis}.
\begin{figure}
     \centering
     \begin{tabular}{c}
         \includegraphics[width=0.4\linewidth]{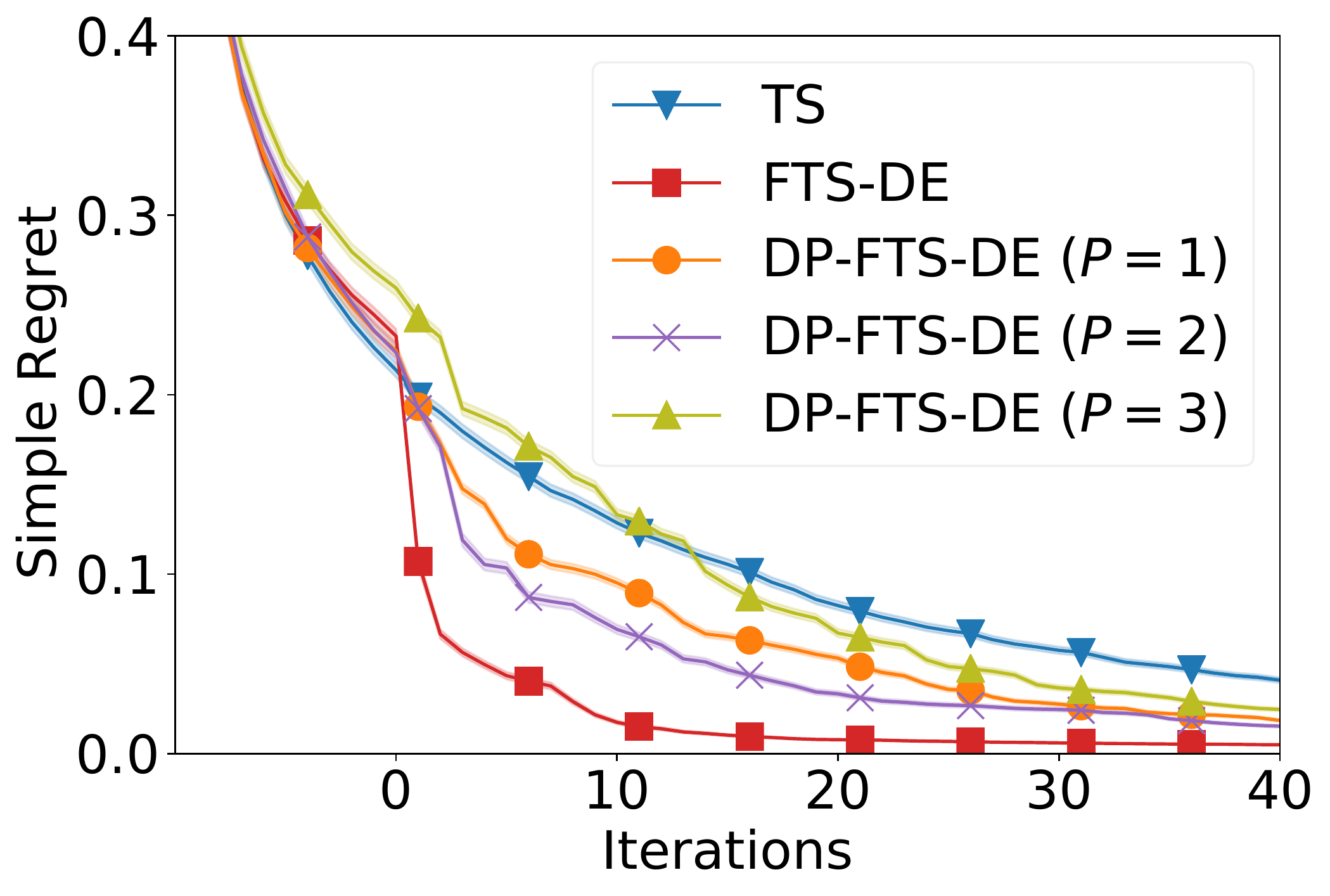}
     \end{tabular}
     \caption{Trade-off induced by $P$ regarding the practical performance of our DP-FTS-DE algorithm. Note that a larger $P$ reduces the size of every local sub-region and hence leads to a better modeling by the GP surrogates, yet also negatively impacts the performance by causing more vectors to be clipped. Here we have used $q=0.25,z=1.0,S=11.0$ for all values of $P$.}
     \label{fig:effect_of_p_with_dp}
\end{figure}

\textbf{Robustness Against Heterogeneous Agents.}
We use another experiment to explore the robustness of our algorithm against agent heterogeneity, i.e., how our algorithm performs when the objective functions of different agents are significantly different.
To begin with, we sample a function $f_{\text{base}}$ from a GP (the detailed setups such as the domain and the SE kernel are the same as those used in App.~\ref{app:synth:exp:setting}).
Next, for every agent $i=1,\ldots,50$, we independently sample another function $f^i$ and then use $f^i \leftarrow \alpha f^i + (1-\alpha) f_{\text{base}}$ as the objective function for agent $i$ in which $\alpha\in[0,1]$. As a result, the parameter $\alpha$ controls the difference between the objective functions $f^i$'s of different agents such that a larger $\alpha$ means that the $f^i$'s are more different. Fig.~\ref{fig:exp:hetero:agents}a shows the performances of standard TS and our FTS-DE when $\alpha=0.7$, in which FTS-DE is still able to outperform TS although the performance improvement is significantly smaller than that observed in Fig.~\ref{fig:synth_1}.
Fig.~\ref{fig:exp:hetero:agents}b plots the results when the objective functions $f^i$'s are extremely heterogeneous, i.e., when $\alpha=1.0$ which implies that the $f^i$'s are \emph{independent}.
The figure shows that in this adverse scenario, when $1-p_t=1/\sqrt{t}$, FTS-DE (green curve) performs worse than standard TS (blue curve) due to the extremely high degree of heterogeneity among the agents. However, as shown by the orange curve, we can improve the performance of FTS-DE to make it comparable with standard TS by letting $1-p_t$ decrease faster such that the impact of the other agents are diminished faster.
\begin{figure}
     \centering
     \begin{tabular}{cc}
         \hspace{-4mm} 
         \includegraphics[width=0.4\linewidth]{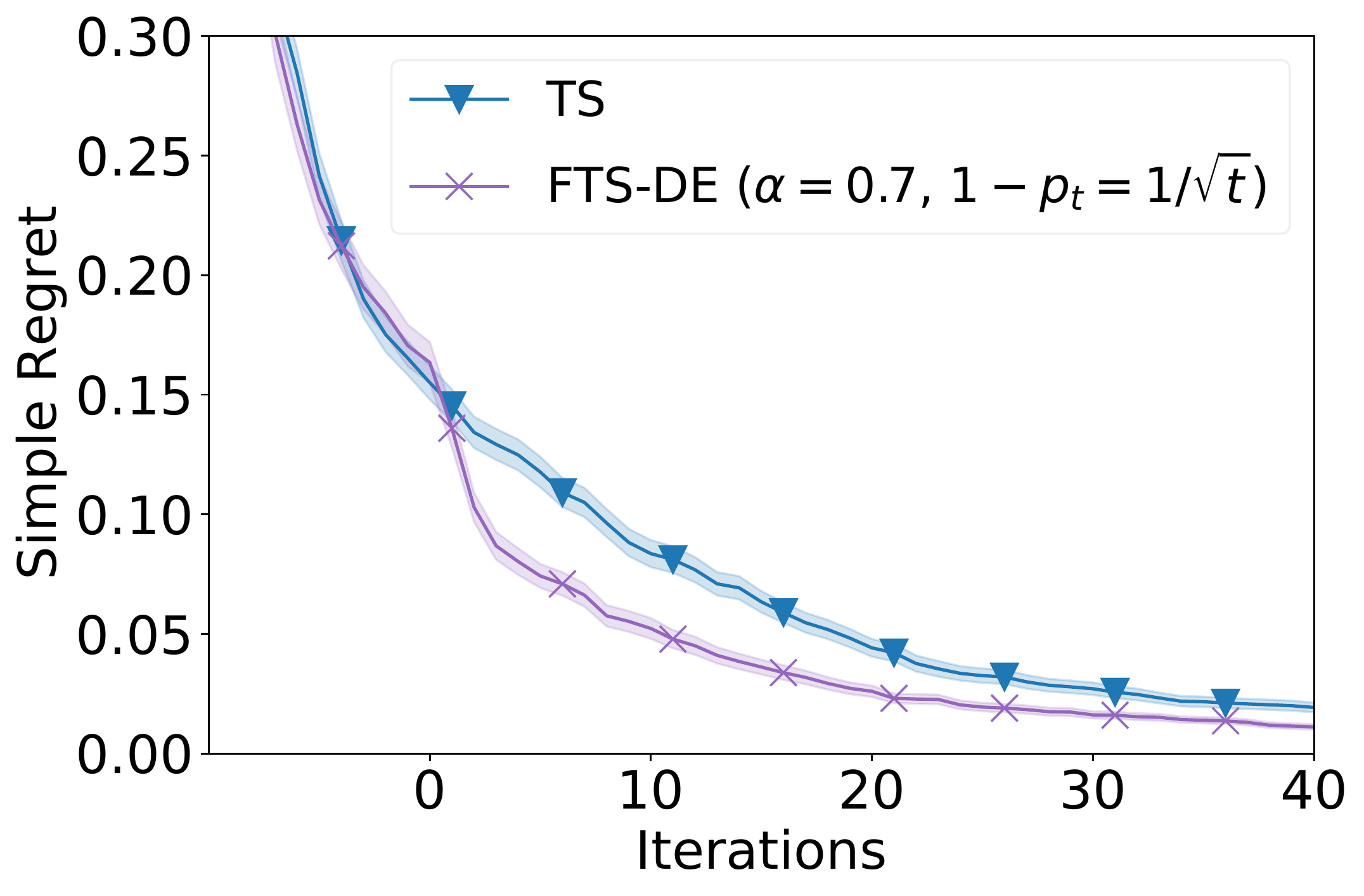} & \hspace{-4mm} 
         \includegraphics[width=0.4\linewidth]{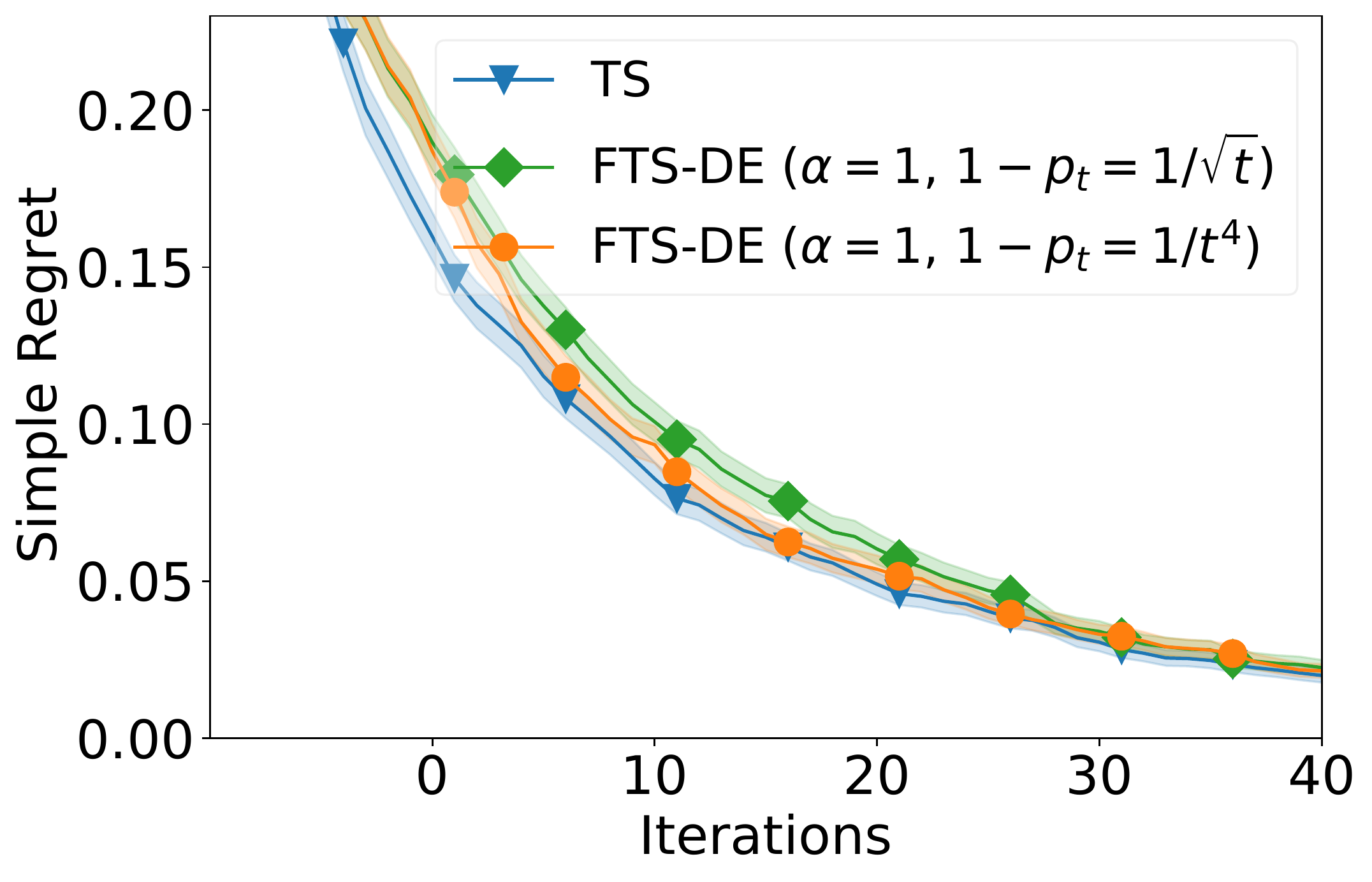} \\
         {(a)} & {(b)}
     \end{tabular}
     \caption{Results when the agents are heterogeneous, i.e., when the objective functions of different agents are significantly different. (a) and (b) correspond to $\alpha=0.7$ and $\alpha=1.0$, respectively.}
     \label{fig:exp:hetero:agents}
\end{figure}

\subsection{Real-world Experiments}
\label{app:real_exp}
\subsubsection{More Experimental Details}
In all real-world experiments, when generating the random features for the RFFs approximation, we use the SE kernel with a length scale of $0.01$ and a variance of $\sigma^2=10^{-6}$ for the observation noise.
Refer to~\cite{dai2020federated} and~\cite{rahimi2008random} for more details on how the random features are generated and how they are shared among all agents.

As we have mentioned in the main text, we use $P=4$ sub-regions in all three real-world experiments, and divide the entire domain into $P=4$ hyper-rectangles (i.e., sub-regions) with equal volumes.
Following the common practice in BO, we assume that the domain $\mathcal{X}\in\mathbb{R}^D$ is a $D$-dimensional hyper-rectangle, and w.l.o.g., assume that every dimension of the domain is normalized into the range $[0,1]$. That is, the domain can be represented as $[0,1]^D=\{[0,1],[0,1],\ldots,[0,1]\}$.
Note that every domain which is a hyper-rectangle can be normalized into this form.
As a result, when the input dimension is $D=2$ (i.e., the landmine detection experiment), we construct the $P=4$ hyper-rectangles such that $\mathcal{X}_1=\{[0, 0.5), [0, 0.5)\}$, $\mathcal{X}_2=\{[0, 0.5), [0.5, 1.0]\}$, $\mathcal{X}_3=\{[0.5, 1.0], [0, 0.5)\}$ and $\mathcal{X}_4=\{[0.5, 1.0], [0.5, 1.0]\}$.
Similarly, when the input dimension $D=3$ (i.e., the human activity recognition and EMNIST experiments), we construct the $P=4$ hyper-rectangles such that $\mathcal{X}_1=\{[0, 0.5), [0, 0.5), [0, 1]\}$, $\mathcal{X}_2=\{[0, 0.5), [0.5, 1.0], [0, 1]\}$, $\mathcal{X}_3=\{[0.5, 1.0], [0, 0.5), [0, 1]\}$ and $\mathcal{X}_4=\{[0.5, 1.0], [0.5, 1.0], [0, 1]\}$.

The \emph{landmine detection} dataset\footnote{\url{http://www.ee.duke.edu/~lcarin/LandmineData.zip}.} used in this experiment has also been used by the works of~\cite{dai2020federated,smith2017federated} which focus on FBO and FL respectively.
This dataset consists of the landmine detection data from $N=29$ landmine fields (agents), and the task of every agent is to use a support vector machine (SVM) to detect (classify) whether a location in its landmine field contains landmines or not (i.e., binary classification).
We tune two hyperparameters of SVM, i.e., the RBF kernel parameter ($[0.01,10.0]$) and the penalty parameter ($[10^{-4},10.0]$).
For every landmine field, we use half of its dataset as the training set and the remaining half as the validation set.
In this experiment, we report the area under the receiver operating curve (AUC) as the performance metric, instead of validation error, because this dataset is significantly imbalanced, i.e., the vast majority of the locations do not contain landmines. No data is excluded. Refer to~\cite{xue2007multi} for more details on this dataset.
The dataset is publicly available, and contains no personally identifiable information or offensive content.

The \emph{human activity recognition} dataset\footnote{\url{https://archive.ics.uci.edu/ml/datasets/Human+Activity+Recognition+Using+Smartphones}.} was originally introduced by the work of~\cite{anguita2013public} and has also been adopted by the works of~\cite{dai2020federated,smith2017federated}.
The licensing requirement for this dataset requires that the use of this dataset in publications must be acknowledged by referencing the paper of~\cite{anguita2013public}.
This dataset consists of the data collected using mobile phone sensors when $N=30$ subjects (agents) are performing six different activities.
The task of every agent (subject) is to use the dataset generated by the subject to perform activity recognition, i.e., to predict which one of the six activities the subject is performing using logistic regression (LR).
We tune three hyperparameters of LR: the batch size ($[128,512]$), L2 regularization parameter ($[10^{-6},10]$) and learning rate ($[10^{-6},1]$).
For every subject, we again use half of its data as the training set and the other half as the validation set, and the validation error is reported as the performance metric.
The inputs for every agent are standardized by removing the mean and dividing by the standard deviation of its training set, which is a common pre-processing step for LR.
No data is excluded.
Refer to~\cite{anguita2013public} for more details on this dataset.
The dataset is publicly available as described above, and does not contain personally identifiable information or offensive content.

\emph{EMNIST}\footnote{\url{https://www.nist.gov/itl/products-and-services/emnist-dataset}.} is a dataset of images of handwritten characters from different persons, and is a widely used benchmark in FL~\cite{kairouz2019advances}.
The EMNIST dataset is under the CC0 License.
Here we use the images from the first $N=50$ subjects (agents) which can be accessed from the TensorFlow Federated library\footnote{\url{https://www.tensorflow.org/federated}.}.
Every subject (agent) uses a convolutional neural network (CNN) to learn to classify an image into one of the ten classes corresponding to the digits $0-9$.
Here the task for every agent is to tune three CNN hyperparameters: learning rate, learning rate decay and L2 regularization parameter, all in the range of $[10^{-7},0.02]$.
We follow the standard training/validation split offered by the TensorFlow Federated library for every agent, and again use the validation error as the performance metric.
All images are pre-processed by normalizing all pixel values into the range of $[0,1]$, and no data is excluded.
Refer to~\cite{cohen2017emnist} for more details on this dataset.
The dataset is publicly available, and contains no personally identifiable information or offensive content.

\subsubsection{Comparison between DP-FTS-DE and DP-FTS}
We have shown in the main text (Figs.~\ref{fig:real_exp}a,b,c) that FTS-DE significantly outperforms FTS without DE.
Here we further verify in Fig.~\ref{fig:har_compare_with_dp_fts} the importance of the technique of DE after DP is integrated, using the human activity recognition experiment.
Specifically, the figures show that after the incorporation of DP, DP-FTS-DE (green curves in all three figures) still achieves a better utility than DP-FTS (purple curves) for the same level of privacy guarantee (loss).
Note that same as Fig.~\ref{fig:synth_3}, to facilitate a fair comparison, we have used a smaller value of $S$ for DP-FTS without DE such that a similar percentage of vectors are clipped for both DP-FTS-DE and DP-FTS.
\begin{figure}
     \centering
     \begin{tabular}{ccc}
         \includegraphics[width=0.31\linewidth]{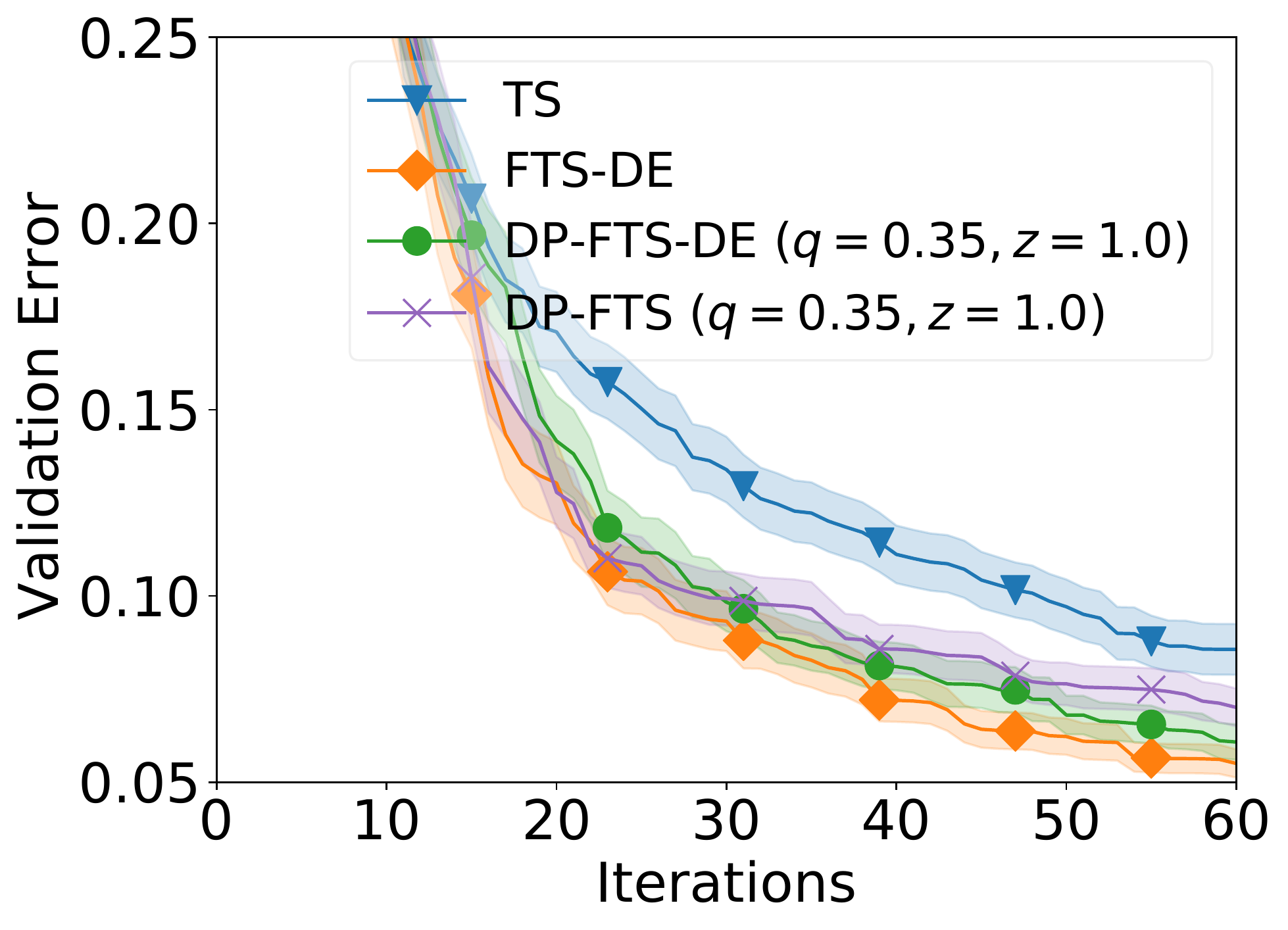} & \hspace{-4mm} 
         \includegraphics[width=0.31\linewidth]{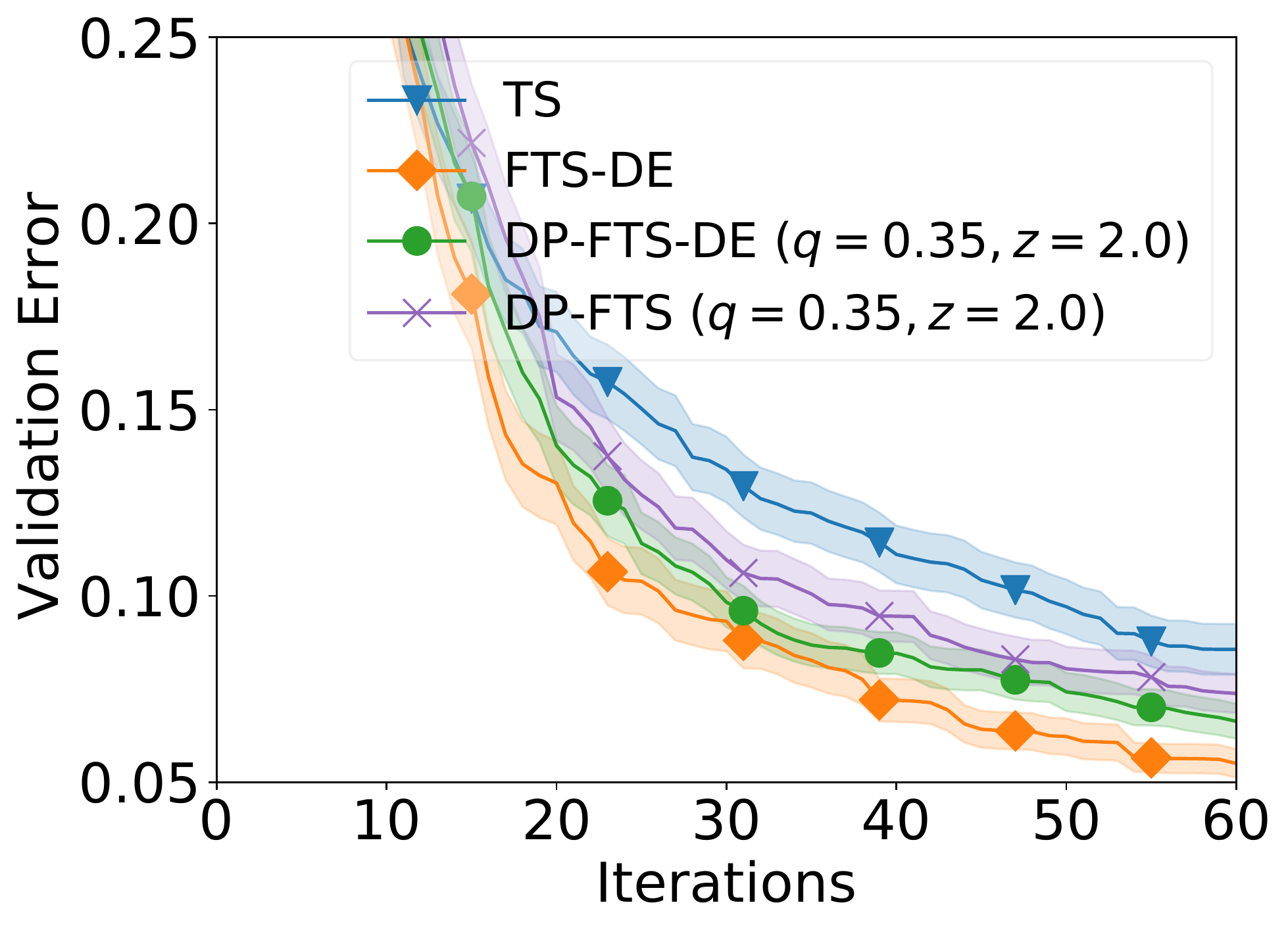} & \hspace{-4mm} 
         \includegraphics[width=0.31\linewidth]{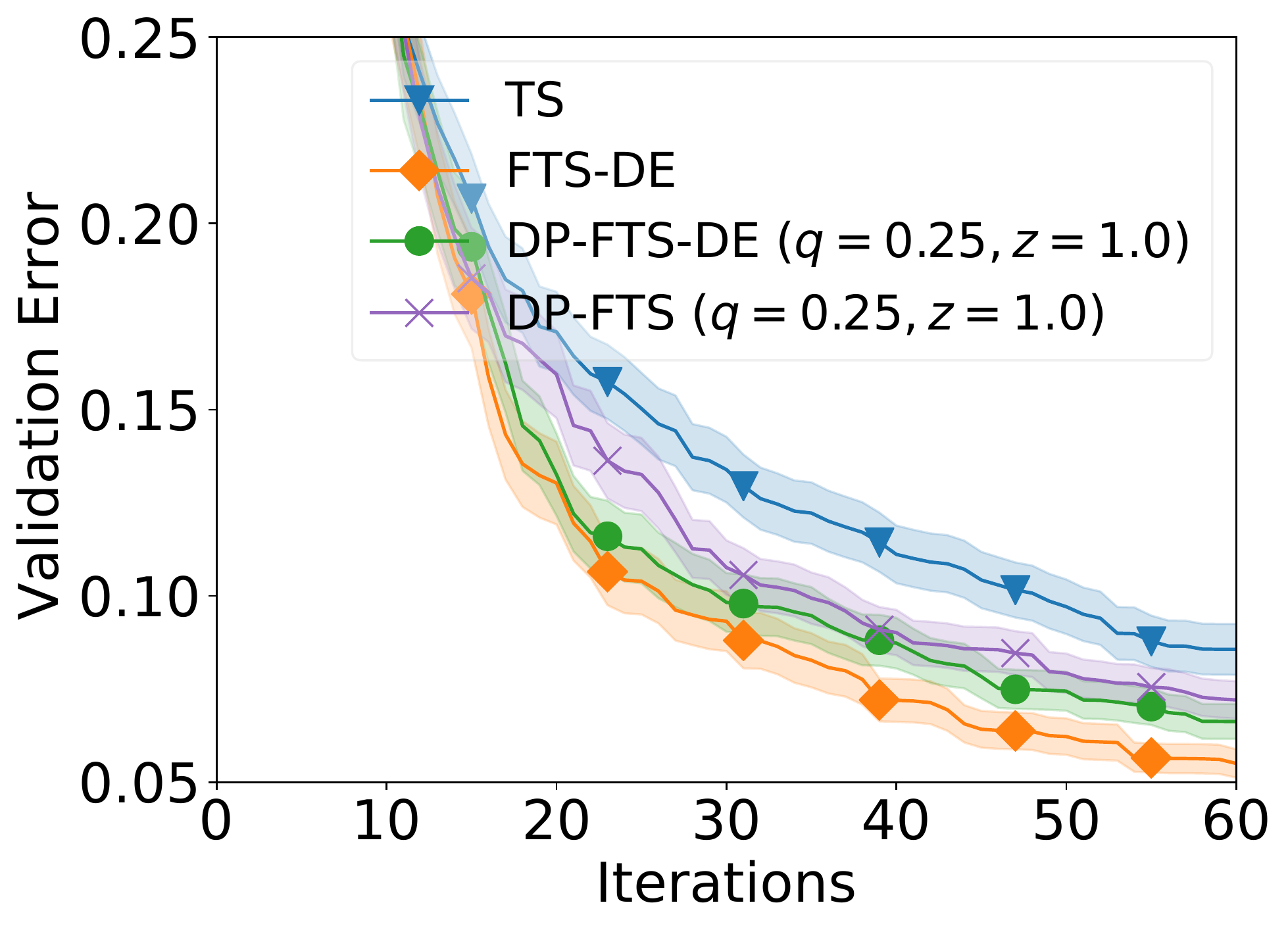} \\
         {(a)} & {(b)} & {(c)}
     \end{tabular}
     \caption{Comparison between DP-FTS and DP-FTS-DE for the same level of privacy guarantee (the human activity recognition experiment).
     We have used $S=22$ and $S=11$ for DP-FTS-DE and DP-FTS (without DE) respectively, such that a similar percentage of vectors are clipped in both cases: $1.02\%$ for DP-FTS-DE and $1.09\%$ for DP-FTS.}
     \label{fig:har_compare_with_dp_fts}
\end{figure}

\subsubsection{Robustness against the Choice of Weights and Number of Sub-regions}
\label{app:real:exp:different:weights}
In this section, we evaluate the robustness of our experimental results against the choice of the weights assigned to different agents and the number $P$ of sub-regions, using the human activity recognition and EMNIST experiments.

Here we test three other methods for designing the weights: 
(1) We use the same softmax weighting scheme with a different parameter $a=9$ instead of $a=15$ ("weights $1$" in Fig.~\ref{fig:different:weights}).
(2) For a sub-region $\mathcal{X}_i$, we assign a weight $\propto a$ to those agents exploring $\mathcal{X}_i$ and $\propto b$ to the other agents, and similarly gradually decay the value of $a=1,000$ to $b=1$ ("weights $2$" in Fig.~\ref{fig:different:weights}).
(3) For a sub-region $\mathcal{X}_i$, we assign a weight $\propto a^2$ to those agents exploring $\mathcal{X}_i$ and $\propto b$ to the other agents, and similarly gradually decay the value of $a=40$ to $b=1$ ("weights $3$" in Fig.~\ref{fig:different:weights}).
Moreover, in addition to the results using $P=4$ sub-regions reported in the main text, here we also evaluate the performance of FTS-DE and DP-FTS-DE with $P=2,3,6$ sub-regions (Fig.~\ref{fig:different:P}).
All DP-FTS-DE methods in Fig.~\ref{fig:different:weights} and Fig.~\ref{fig:different:P} correspond to $q=0.35,z=1.0$.
The results 
demonstrate the robustness of our experimental results against the choice of the weights and the number $P$ of sub-regions.

\begin{figure}
     \centering
     \begin{tabular}{cc}
         \hspace{-4mm} 
         \includegraphics[width=0.4\linewidth]{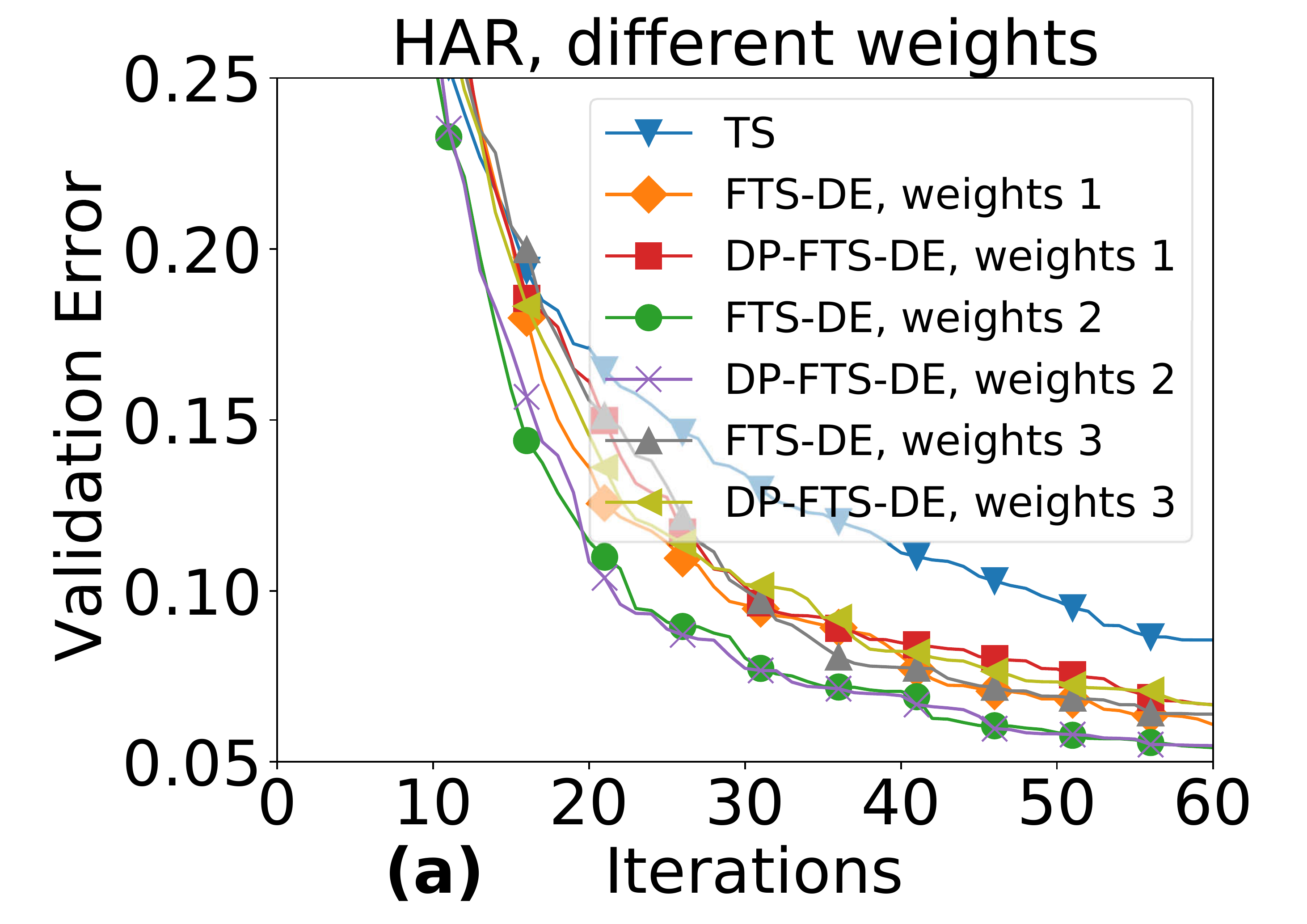} & \hspace{-4mm} 
         \includegraphics[width=0.4\linewidth]{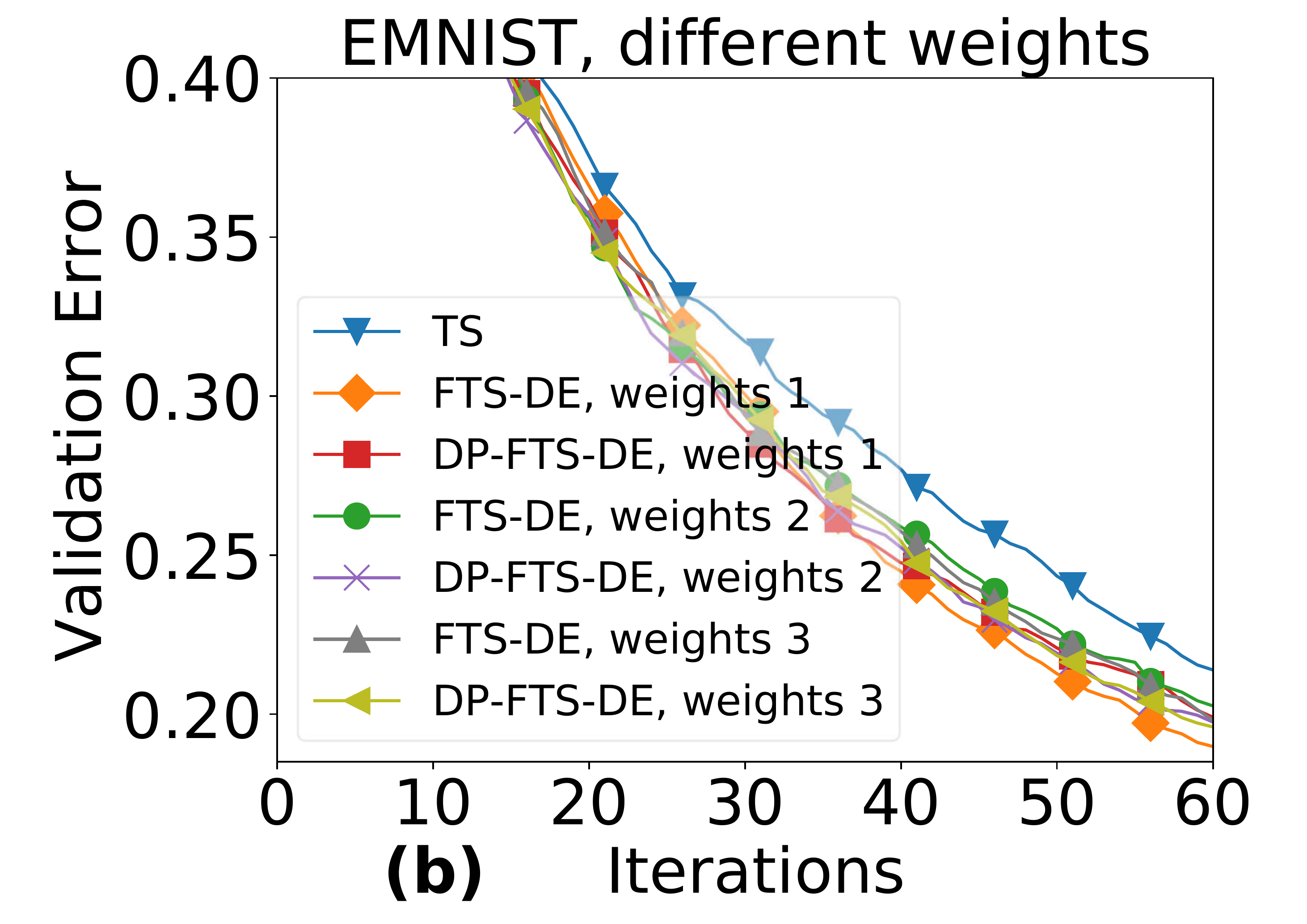} \\
         {(a)} & {(b)}
     \end{tabular}
     \caption{Robustness of our FTS-DE and DP-FTS-DE algorithms against the choice of weights. 
     }
     \label{fig:different:weights}
\end{figure}

\begin{figure}
     \centering
     \begin{tabular}{cc}
         \hspace{-4mm} 
         \includegraphics[width=0.4\linewidth]{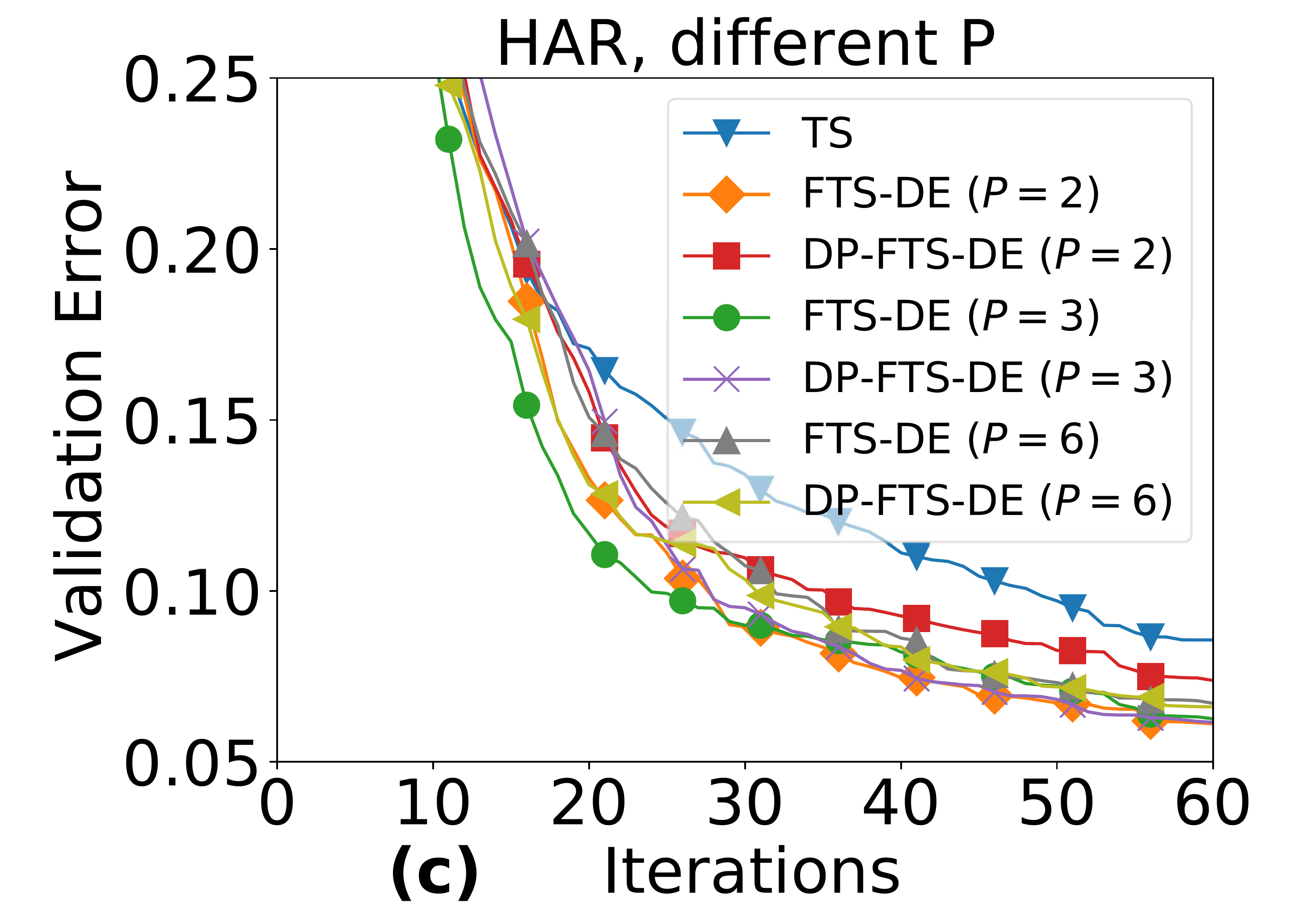} 
         & \hspace{-4mm} 
         \includegraphics[width=0.4\linewidth]{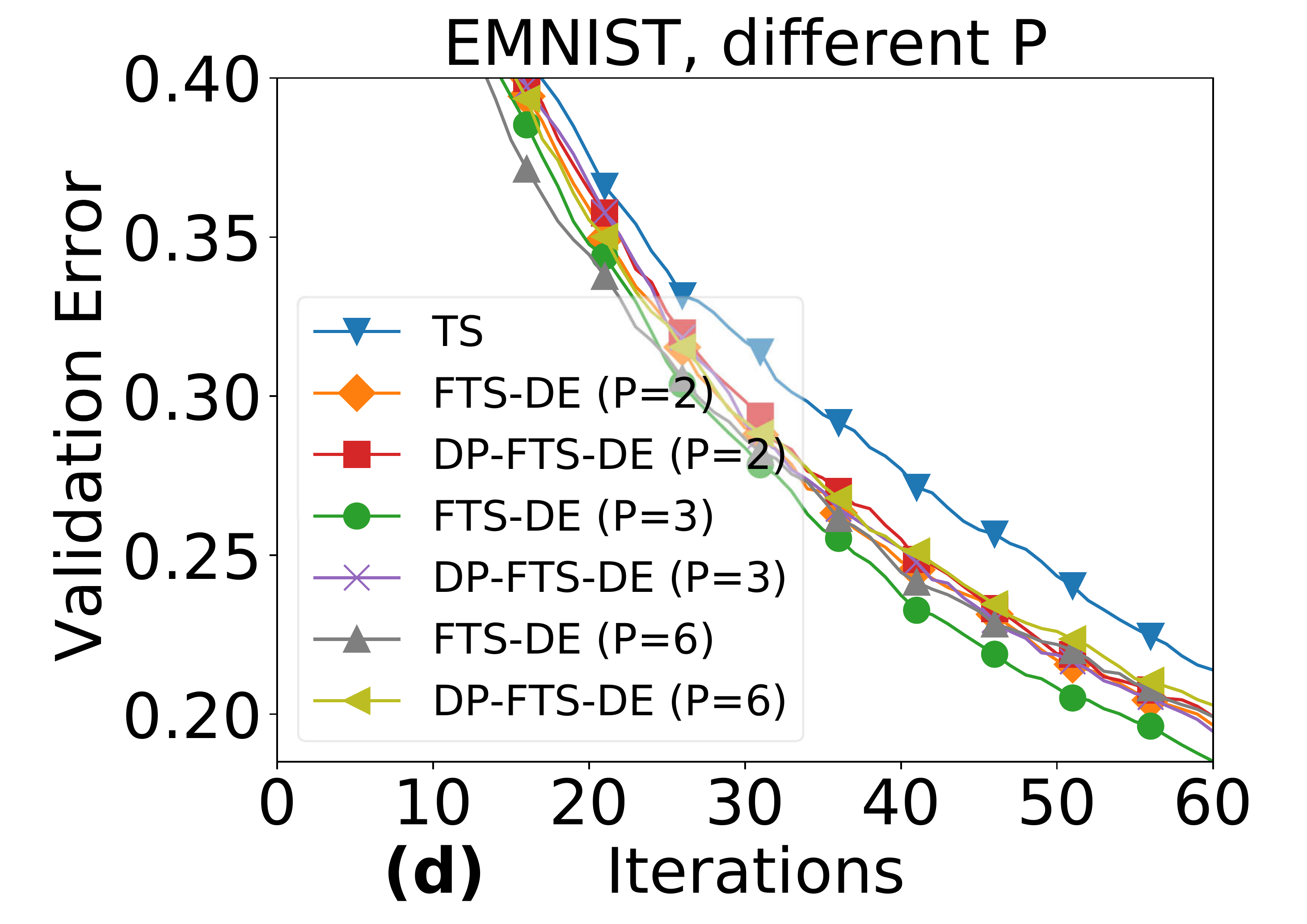} \\
         {(a)} & {(b)}
     \end{tabular}
     \caption{Robustness of our FTS-DE and DP-FTS-DE algorithms against the number $P$ of sub-regions. 
     }
     \label{fig:different:P}
\end{figure}

\subsubsection{R\'enyi DP}
Fig.~\ref{fig:rdp} shows the privacy-utility trade-off in the landmine detection experiment using R\'enyi DP~\cite{wang2019subsampled}. The results demonstrate that R\'enyi DP, despite requiring modifications to our theoretical analysis (i.e., proof of Theorem~\ref{theorem:dp_fts_de}), leads to slightly better privacy losses (compared with Fig.~\ref{fig:real_exp}a) with comparable utilities.
\begin{figure}
     \centering
     \begin{tabular}{c}
         \includegraphics[width=0.36\linewidth]{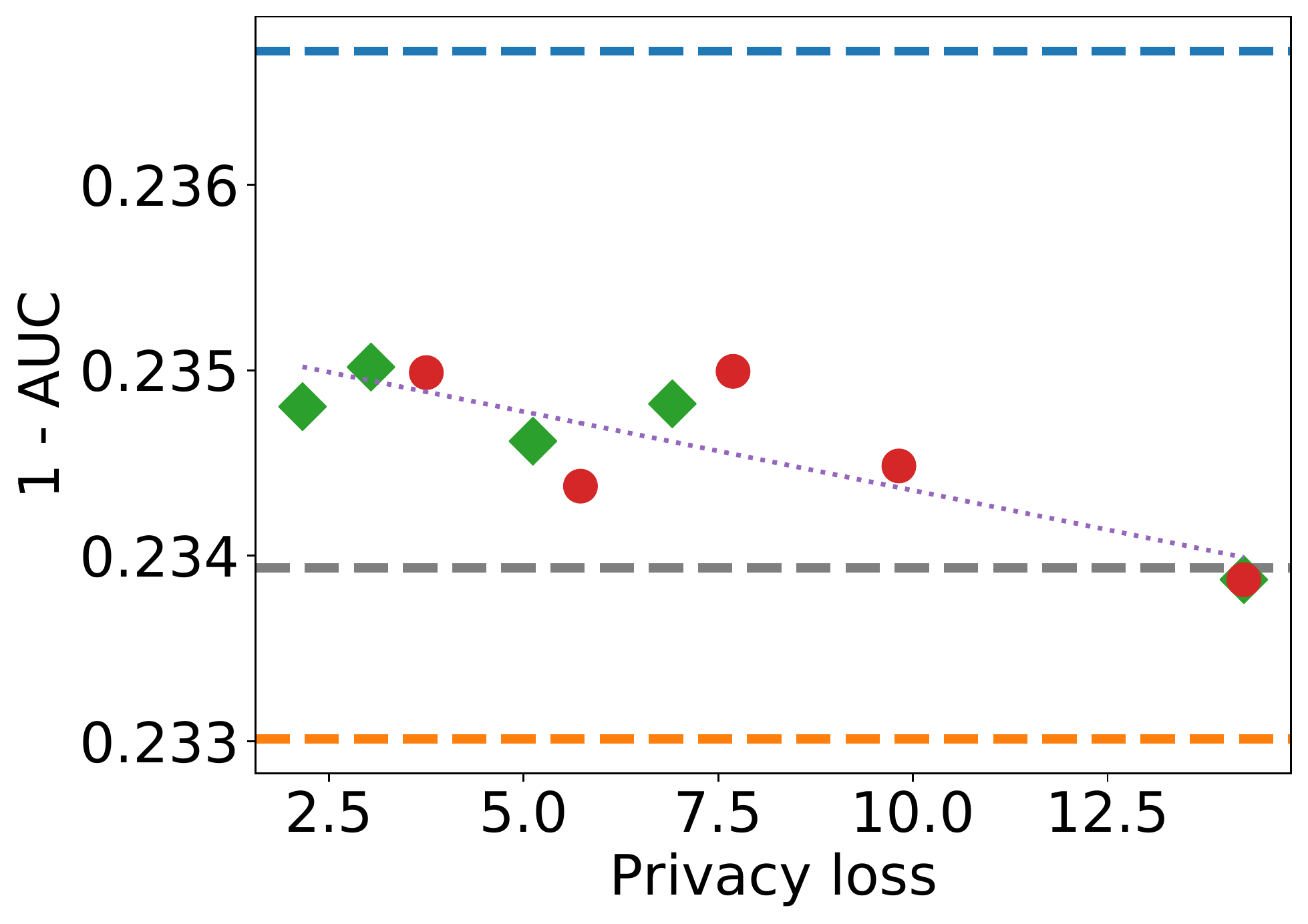}
     \end{tabular}
     \caption{Results for the landmine detection experiment using R\'enyi DP~\cite{wang2019subsampled}.}
     \label{fig:rdp}
\end{figure}

\subsubsection{Adaptive Weights vs. Non-adaptive Weights}
As we have discussed in the last paragraph of Sec.~\ref{subsec:de}, we have designed the set of weights for every sub-region to be adaptive such that they gradually become uniform among all agents as $t$ becomes large.
Here we explore the performance of our algorithm if the weights are \emph{non-adaptive}, i.e., for every sub-region $\mathcal{X}_i$, we fix the set of weights $\{\varphi^{(i)}_n,\forall n\in[N]\}$ for all $t\in[T]$.
In particular, we adopt the same softmax weighting scheme as described in the first paragraph of App.~\ref{app:experiments}, but fix the temperature $\mathcal{T}_t=1,\forall t\geq 1$ such that the same set of weights is used for all $t\geq 1$.
That is, for every sub-region $\mathcal{X}_i$, we assign more weights to those agents exploring $\mathcal{X}_i$ throughout all iterations $t\geq1$.

Fig.~\ref{fig:compare_with_non_adaptive_weights} shows the comparisons between adaptive and non-adaptive weights using (a) the synthetic experiment and (b) human activity recognition experiment.
Both figures show that although in the initial stage, DP-FTS-DE with non-adaptive weights performs similarly to DP-FTS-DE with adaptive weights, however, as $t$ becomes large, adaptive weights (red curves) lead to better performances than non-adaptive weights (green curves).
This can be attributed to the fact that as $t$ becomes large, every agent is likely to have explored (and become informative about) more sub-regions in addition to the sub-region that it is assigned to explore at initialization.
Therefore, if the weights are non-adaptive, i.e., for a sub-region $\mathcal{X}_i$, if after $t$ has become large, most weights are still given to those agents that are assigned to explore $\mathcal{X}_i$ at initialization, then \emph{the information from the other agents} who are likely to have become informative about $\mathcal{X}_i$ (i.e., have collected some observations in $\mathcal{X}_i$) \emph{is not utilized}.
This under-utilization of information might explain the performance deficit caused by the use of non-adaptive weights.
However, note that despite being outperformed by DP-FTS-DE with adaptive weights, DP-FTS-DE with non-adaptive weights (green curve) is still able to consistently outperform standard TS (blue curves).

\begin{figure}
     \centering
     \begin{tabular}{cc}
         \includegraphics[width=0.38\linewidth]{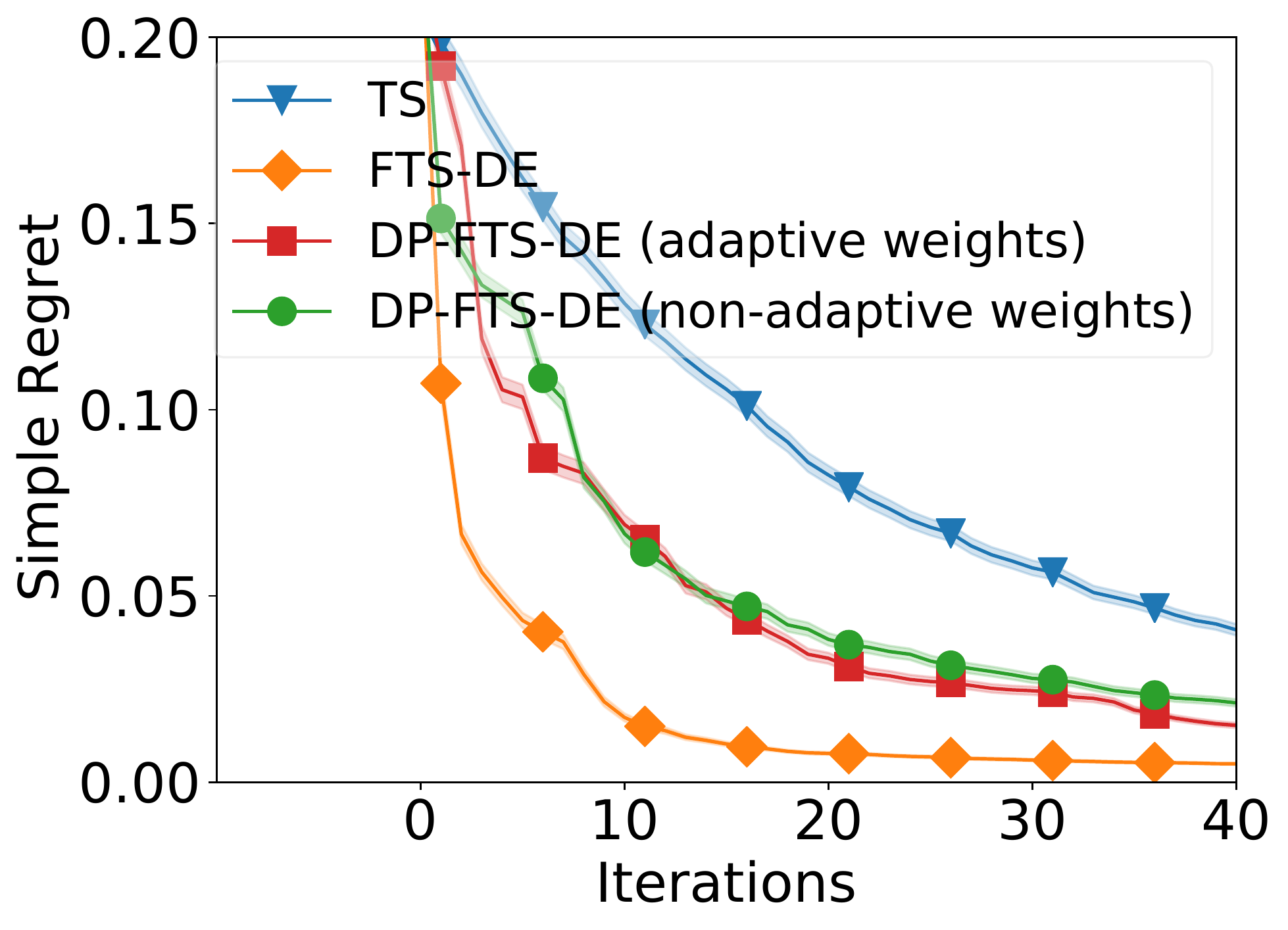} & \hspace{-4mm} 
         \includegraphics[width=0.38\linewidth]{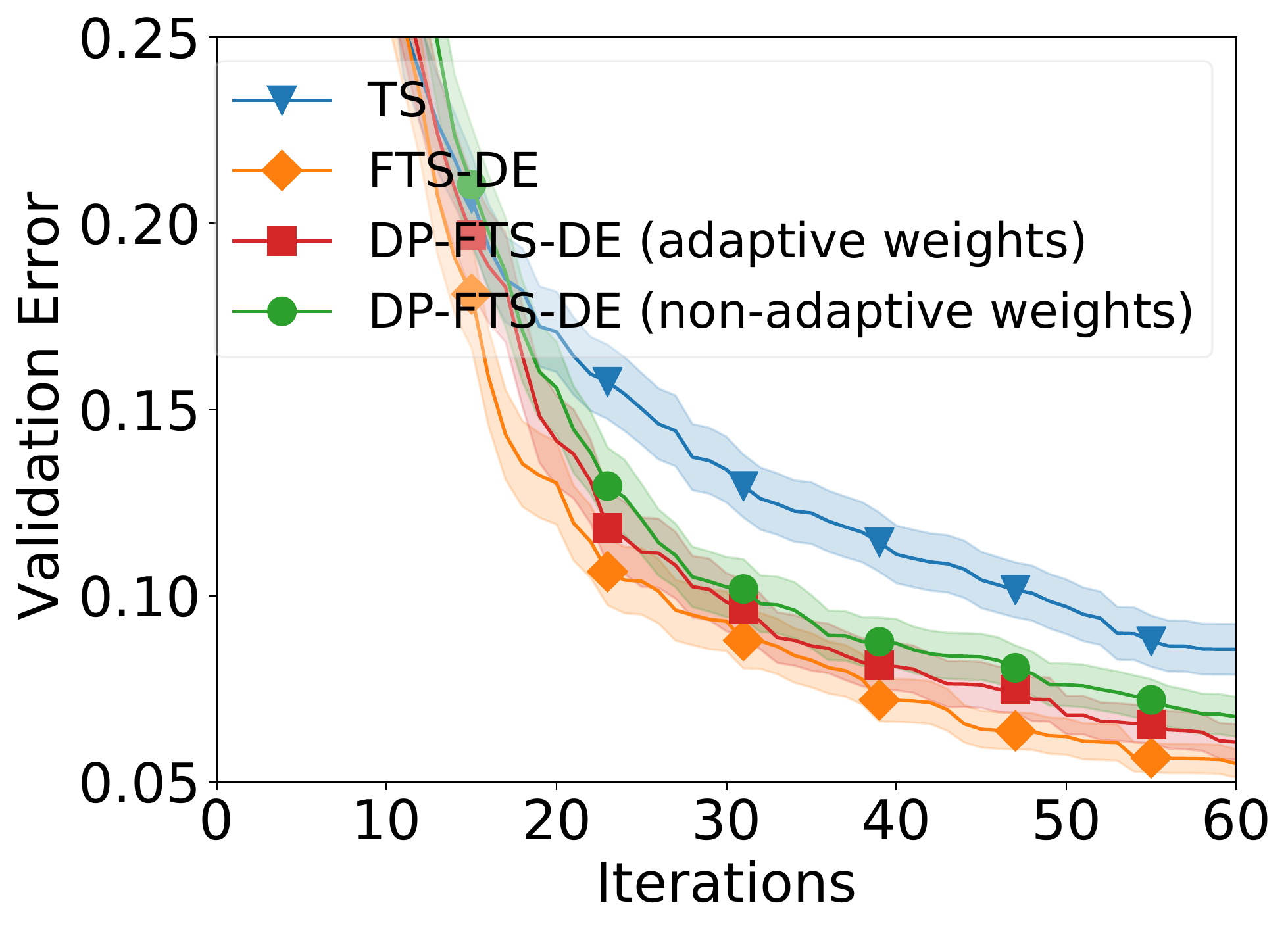} \\
         {(a)} & {(b)}
     \end{tabular}
     \caption{Comparison between DP-FTS-DE with adaptive weights and non-adaptive weights, using (a) the synthetic experiment and (b) human activity recognition experiment.}
     \label{fig:compare_with_non_adaptive_weights}
\end{figure}

\subsubsection{Computational Cost}
When maximizing the acquisition function to select the next query (lines 6 and 8 of Algo.~\ref{alg:BO}), firstly, we uniformly randomly sample a large number (i.e., 1000) of points from the entire domain; next, we use the L-BFGS-B method with $20$ random restarts to refine the optimization.

For the central server, the integration of DP and DE (in expectation) incurs an additional computational cost of $\mathcal{O}(PNq)$.
However, these additional computations are negligible since they only involve simple vector additions/multiplications (lines 5-11 of Algo.~\ref{alg:DP-FTS-DE}).
For agents, the incorporation of DP brings no additional computational cost to them.
Meanwhile, the addition of DE, which affects line 8 of Algo.~\ref{alg:BO}, only incurs minimal additional computations.
For example, in the landmine detection experiment, line 7 takes on average $14.47$s and $17.96$s to compute for $P=1$ and $P=4$, respectively.

\end{document}